\theoremstyle{plain}
\providecommand{\theoremname}{Theorem}
\DeclareMathOperator*{\minimize}{minimize}
\newcommand{\hi}{\mathbf{h}^{(i)}}
\newcommand{\xii}{\mathbf{x}^{(i)}}
\newtheorem{theorem}{Theorem}
\newtheorem{proposition}{Proposition}
\newtheorem{definition}{Definition}
\begin{document}
%
% paper title
% Titles are generally capitalized except for words such as a, an, and, as,
% at, but, by, for, in, nor, of, on, or, the, to and up, which are usually
% not capitalized unless they are the first or last word of the title.
% Linebreaks \\ can be used within to get better formatting as desired.
% Do not put math or special symbols in the title.
\title{Discriminative Optimization: Theory and Applications to Computer Vision Problems}
%
%
% author names and IEEE memberships
% note positions of commas and nonbreaking spaces ( ~ ) LaTeX will not break
% a structure at a ~ so this keeps an author's name from being broken across
% two lines.
% use \thanks{} to gain access to the first footnote area
% a separate \thanks must be used for each paragraph as LaTeX2e's \thanks
% was not built to handle multiple paragraphs
%
%
%\IEEEcompsocitemizethanks is a special \thanks that produces the bulleted
% lists the Computer Society journals use for "first footnote" author
% affiliations. Use \IEEEcompsocthanksitem which works much like \item
% for each affiliation group. When not in compsoc mode,
% \IEEEcompsocitemizethanks becomes like \thanks and
% \IEEEcompsocthanksitem becomes a line break with idention. This
% facilitates dual compilation, although admittedly the differences in the
% desired content of \author between the different types of papers makes a
% one-size-fits-all approach a daunting prospect. For instance, compsoc 
% journal papers have the author affiliations above the "Manuscript
% received ..."  text while in non-compsoc journals this is reversed. Sigh.

\author{Jayakorn~Vongkulbhisal,
        Fernando~De~la~Torre,
        and~Jo\~{a}o~P.~Costeira % <-this % stops a space
\IEEEcompsocitemizethanks{\IEEEcompsocthanksitem J. Vongkulbhisal is with the ECE Department, Carnegie Mellon University, USA, and the ISR - IST, Universidade de Lisboa, Portugal. E-mail: jvongkul@andrew.cmu.edu.
% note need leading \protect in front of \\ to get a newline within \thanks as
% \\ is fragile and will error, could use \hfil\break instead.
\IEEEcompsocthanksitem F. De la Torre is with Facebook Inc., and the Robotics Institute, Carnegie Mellon University, USA. E-mail: ftorre@cmu.edu.
\IEEEcompsocthanksitem J. P. Costeira is with the ISR - IST, Universidade de Lisboa, Portugal. E-mail: jpc@isr.ist.utl.pt}% <-this % stops an unwanted space
\thanks{Manuscript received April 19, 2005; revised August 26, 2015.}}

% note the % following the last \IEEEmembership and also \thanks - 
% these prevent an unwanted space from occurring between the last author name
% and the end of the author line. i.e., if you had this:
% 
% \author{....lastname \thanks{...} \thanks{...} }
%                     ^------------^------------^----Do not want these spaces!
%
% a space would be appended to the last name and could cause every name on that
% line to be shifted left slightly. This is one of those "LaTeX things". For
% instance, "\textbf{A} \textbf{B}" will typeset as "A B" not "AB". To get
% "AB" then you have to do: "\textbf{A}\textbf{B}"
% \thanks is no different in this regard, so shield the last } of each \thanks
% that ends a line with a % and do not let a space in before the next \thanks.
% Spaces after \IEEEmembership other than the last one are OK (and needed) as
% you are supposed to have spaces between the names. For what it is worth,
% this is a minor point as most people would not even notice if the said evil
% space somehow managed to creep in.

% The paper headers
\markboth{Journal of \LaTeX\ Class Files,~Vol.~14, No.~8, August~2015}%
{Shell \MakeLowercase{\textit{et al.}}: Bare Demo of IEEEtran.cls for Computer Society Journals}
% The only time the second header will appear is for the odd numbered pages
% after the title page when using the twoside option.
% 
% *** Note that you probably will NOT want to include the author's ***
% *** name in the headers of peer review papers.                   ***
% You can use \ifCLASSOPTIONpeerreview for conditional compilation here if
% you desire.

% The publisher's ID mark at the bottom of the page is less important with
% Computer Society journal papers as those publications place the marks
% outside of the main text columns and, therefore, unlike regular IEEE
% journals, the available text space is not reduced by their presence.
% If you want to put a publisher's ID mark on the page you can do it like
% this:
%\IEEEpubid{0000--0000/00\$00.00~\copyright~2015 IEEE}
% or like this to get the Computer Society new two part style.
%\IEEEpubid{\makebox[\columnwidth]{\hfill 0000--0000/00/\$00.00~\copyright~2015 IEEE}%
%\hspace{\columnsep}\makebox[\columnwidth]{Published by the IEEE Computer Society\hfill}}
% Remember, if you use this you must call \IEEEpubidadjcol in the second
% column for its text to clear the IEEEpubid mark (Computer Society jorunal
% papers don't need this extra clearance.)

% use for special paper notices
%\IEEEspecialpapernotice{(Invited Paper)}

% for Computer Society papers, we must declare the abstract and index terms
% PRIOR to the title within the \IEEEtitleabstractindextext IEEEtran
% command as these need to go into the title area created by \maketitle.
% As a general rule, do not put math, special symbols or citations
% in the abstract or keywords.
\IEEEtitleabstractindextext{%
\begin{abstract}

Many computer vision problems are formulated as the optimization of a cost function. This approach faces two main challenges: \textit{(i)} designing a cost function with a local optimum at an acceptable solution, and \textit{(ii)} developing an efficient numerical method to search for one (or multiple) of these local optima. While designing such functions is feasible in the noiseless case, the stability and location of local optima are mostly unknown under noise, occlusion, or missing data. In practice, this can result in undesirable local optima or not having a local optimum in the expected place. On the other hand, numerical optimization algorithms in high-dimensional spaces are typically local and often rely on expensive first or second order information to guide the search. To overcome these limitations, this paper proposes Discriminative Optimization (DO), a method that learns search directions from data without the need of a cost function. Specifically, DO explicitly learns a sequence of updates in the search space that leads to stationary points that correspond to desired solutions. We provide a formal analysis of DO and illustrate its benefits in the problem of 3D point cloud registration, camera pose estimation, and image denoising. We show that DO performed comparably or outperformed state-of-the-art algorithms in terms of accuracy, robustness to perturbations, and computational efficiency.
\end{abstract}

% Note that keywords are not normally used for peerreview papers.
\begin{IEEEkeywords}
Optimization, gradient methods, iterative methods, computer vision, supervised learning.
\end{IEEEkeywords}}

% make the title area
\maketitle

% To allow for easy dual compilation without having to reenter the
% abstract/keywords data, the \IEEEtitleabstractindextext text will
% not be used in maketitle, but will appear (i.e., to be "transported")
% here as \IEEEdisplaynontitleabstractindextext when the compsoc 
% or transmag modes are not selected <OR> if conference mode is selected 
% - because all conference papers position the abstract like regular
% papers do.
\IEEEdisplaynontitleabstractindextext
% \IEEEdisplaynontitleabstractindextext has no effect when using
% compsoc or transmag under a non-conference mode.

% For peer review papers, you can put extra information on the cover
% page as needed:
% \ifCLASSOPTIONpeerreview
% \begin{center} \bfseries EDICS Category: 3-BBND \end{center}
% \fi
%
% For peerreview papers, this IEEEtran command inserts a page break and
% creates the second title. It will be ignored for other modes.
\IEEEpeerreviewmaketitle

\IEEEraisesectionheading{\section{Introduction}}

Mathematical optimization plays an important role for solving many computer vision problems. For instance, optical flow, camera calibration, homography estimation, and structure from motion are computer vision problems solved as optimization. Formulating computer vision problems as optimization problems faces two main challenges: \textit{(i)} Designing a cost function that has a local optimum that corresponds to a suitable solution. \textit{(ii)} Selecting an efficient and accurate algorithm for searching the parameter space. Conventionally, these two steps have been treated independently, leading to different cost functions and search algorithms. However, in the presence of noise, missing data, or inaccuracies of the model, this conventional approach can lead to undesirable local optima or even not having an optimum in the \textit{correct} solution. 

Consider Fig.~\ref{fig:firstFig}a-top which illustrates a 2D alignment problem in a case of noiseless data. A good cost function for this problem should have a global optimum when the two shapes overlap.  Fig.~\ref{fig:firstFig}b-top illustrates the level sets of the cost function for the Iterative Closest Point (ICP) algorithm~\cite{ICP_PAMI92} in the case of complete and noiseless data. Observe that there is a well-defined optimum and that it coincides with the ground truth. Given a cost function, the next step is to find a suitable algorithm that, given an initial configuration (green square), finds a local optimum. For this particular initialization, the ICP algorithm will converge to the ground truth (red diamond in Fig.~\ref{fig:firstFig}b-top), and Fig.~\ref{fig:firstFig}d-top shows the convergence region for ICP in green. However, in realistic scenarios with the presence of perturbations in the data, there is no guarantee that there will be a good local optimum in the expected solution, while the number of local optima can be large.
and Fig.~\ref{fig:firstFig}b-bottom show the level set representation for the ICP cost function in the case of corrupted data. We can see that the shape of cost function has changed dramatically: there are more local optima, and they do not necessarily correspond to the ground truth (red diamond). In this case, the ICP algorithm with an initialization in the green square will converge to a wrong optimum. It is important to observe that the cost function is {\em only} designed to have an optimum at the correct solution in the ideal case, but little is known about the behavior of this cost function in the {\em surroundings} of the optimum and how it will change with noise.

\begin{figure*}[!t]
\centering
\includegraphics[width=7in]{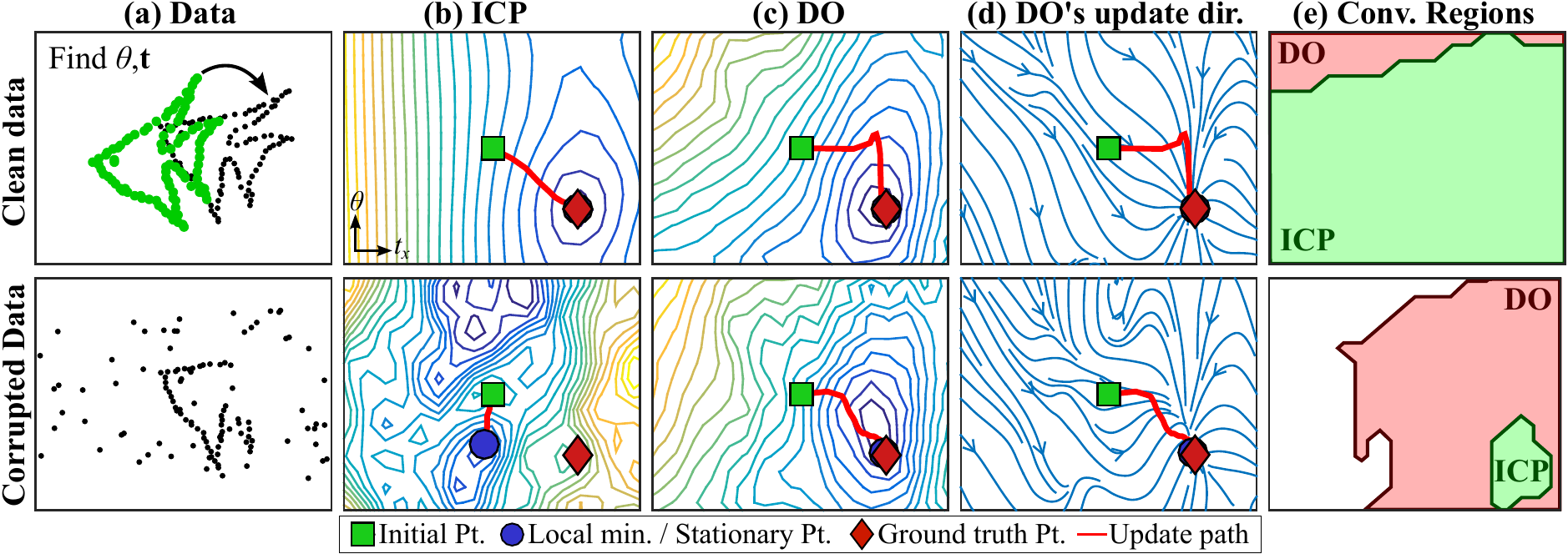}%
\caption[]{2D point alignment using ICP and DO. (a) Data. (b) Level sets of the cost function for ICP. We used the optimal matching at each parameter value to compute the $\ell_2$ cost. (c) Inferred level sets for the proposed DO. The level sets is approximately reconstructed using the surface reconstruction algorithm~\cite{Harker2008c} from the update directions of DO. (d) The update direction of DO.  (e) Regions of convergence for ICP and DO. See text for detailed description (best seen in color).}
\label{fig:firstFig}
\end{figure*}

To address the aforementioned problems, this paper proposes Discriminative Optimization (DO). DO exploits the fact that we often know from the training data where the solutions should be, whereas traditional approaches formulate optimization problems based on an ideal model. Rather than following a descent direction of a cost function, DO directly learns a sequence of update directions leading to a stationary point. These points are placed ``by design" in the desired solutions from training data. This approach has three main advantages.
First, since DO's directions are learned from training data, they take into account the perturbations in the \emph{neighborhood} of the ground truth, resulting in more robustness and a larger convergence region, as can be seen in Fig.~\ref{fig:firstFig}d. Second, because DO does not optimize any explicit function (e.g., $\ell_2$ registration error), it is less sensitive to model misfit and more robust to different types of perturbations. Fig.~\ref{fig:firstFig}c illustrates the contour level inferred from the update directions learned by DO. It can be seen that the curve levels have a local optimum on the ground truth and fewer local optima than ICP in Fig.~\ref{fig:firstFig}b. Fig.~\ref{fig:firstFig}e shows that the convergence regions of DO change little despite the perturbations, and always include the regions of ICP. Third, to compute update directions, traditional approaches require the cost function to be differentiable or continuous, whereas DO's directions can always be computed. We also provide a proof of DO's convergence in the training set. We named our approach DO to reflect the idea of learning to find a stationary point directly rather than that of optimizing a ``generative'' cost function.

%We demonstrate the potential of DO with both synthetic and real problems. We use synthetic problems to provide intuition and affirm our theoretical result. We then used DO to solve three computer vision problems: (i) point-cloud registration, where we show DO can handle strong occlusion and outliers; (ii) camera pose estimation from feature matches, where we show that DO can handle up to 80\% of outlier matches; and (iii) image denoising under impulse noises. We show that DO outperforms state-of-the-art algorithms for each problem.

In this work, we study the properties of DO and its relationship to mathematical programming. Based on this relationship, we propose a framework for designing features where the update directions can be interpreted as the gradient direction of an unknown cost function. We show that our approach can handle both ordered (e.g., image pixels) and unordered (e.g., set of feature matches) data types. We provide a synthetic experiment which confirms our interpretation. We apply DO to the problems of point cloud registration, camera pose estimation, and image denoising, and show that DO can obtain state-of-the-art results.

\section{Related Works}

%\footnote{Bold capital letters denote a matrix $\mathbf{X}$, bold lower-case letters a column vector $\mathbf{x}$. All non-bold letters represent scalars $x$. $\mathbf{0}_n\in\mathbb{R}^n$ is the vector of zeros. Vector $\mathbf{x}_i$ denotes the $i^{th}$ column of $\mathbf{X}$. Bracket subscripts $[\mathbf{x}]_i$ denotes the $i^{th}$ element of $\mathbf{x}$, and $[\mathbf{X}]_{i,j}$ denotes element in row $i$ and column $j$ of $\mathbf{X}$. $\Vert\mathbf{x}\Vert_2$ denotes $\ell_2$-norm $\sqrt{\mathbf{x}^\top \mathbf{x}}$.}

\subsection{Optimization in computer vision}\label{sec:optInCV}
Many problems in computer vision involve solving inverse problems, that is to estimate a set of parameters $\mathbf{x}\in\mathbf{R}^p$ that satisfies $\mathbf{g}_j(\mathbf{x})=\mathbf{0}_d,j=1,\dots,J$, where $\mathbf{g}_j:\mathbf{R}^p\rightarrow\mathbf{R}^d$ models the phenomena of interest. For example, in camera resection problem~\cite{MultiviewGeom_2003}, $\mathbf{x}$ can represent the camera parameters and $\mathbf{g}_j$ the geometric projective error. Optimization-based framework tackles these problems by searching for the parameters $\mathbf{x}_*$ that optimize a certain cost function, which is typically designed to penalize the deviation of $\mathbf{g}_j$ from $\mathbf{0}_d$. Since selecting a cost function is a vital step for solving problems, there exists a large amount of research on the robustness properties of different penalty functions~\cite{Basri_VR1998,IRLS_COA2014,Barron_Arxiv17}. Instead of using a fixed cost function, many approaches use continuation methods to deform the cost function as the optimization is solved~\cite{Mobahi_EMMCVPR2015,Black_IJCV1996}. On the other hand, many computer vision problems are ill-posed~\cite{Bertero_IEEE1988}, and require some forms of regularization. This further leads to a combination of different penalty functions and requires setting the value of hyperparameters, which make the issue even more complicated.

Due to a large variety of design choices, it is not trivial to identify a suitable cost function for solving a problem. Instead of manually design a cost function, several works proposed to use machine learning techniques to learn a cost function from available training data. For example, kernel SVM~\cite{Avidan_PAMI2004}, boosting~\cite{Liu_PAMI2009}, metric learning~\cite{Nguyen_IJCV2010}, and nonlinear regressors~\cite{Paliouras2016} have been used to learn a cost function for image-based tracking, alignment, and pose estimation. Once a cost function is learned, the optimal parameters are solved using search algorithms such as descent methods or particle swarm optimization. However, a downside of these approaches is that they require the form of the cost function to be imposed, e.g.,~\cite{Nguyen_IJCV2010} requires to cost to be quadratic, thereby restricting the class of problems that they can solve. 
 
\subsection{Learning search direction}\label{sec:RW:learnDir}

Instead of using search directions from a cost function, recent works proposed to use learning techniques to directly compute such directions. This is done by learning a sequence of regressors that maps a feature vector to an update vector that points to the desired parameters. We refer to these algorithms as supervised sequential update (SSU). The concept of SSUs is similar to gradient boosting (GB)~\cite{Friedman_AnnStat2001,Mason_NIPS1999}, which uses weak learners to iteratively update parameter estimates. However, they differ in that GB performs update using a fixed feature vector, while SSUs also update the feature vector, which allows more information to be incorporated as the parameter is updated. Here, we provide a brief review of SSUs. 

Cascade pose regression~\cite{Dollar_CVPR2010} trains a sequence of random ferns for image-based object pose estimation. The paper also shows that the training error decreases exponentially under weak learner assumptions. \cite{Cao_CVPR2012} learns a sequence of boosted regressors that minimizes error in parameter space. Supervised descent method (SDM)~\cite{Xiong_CVPR2013,Xiong_SDM_Arxiv14} learns a sequence of linear maps as the averaged Jacobian matrices for minimizing nonlinear least-squares functions in the feature space. They also provided conditions for the error to strictly decrease in each iteration. More recent works include learning both Jacobian and Hessian matrices~\cite{Tzimiropoulos_CVPR2015}; running Gauss-Newton algorithm after SSU~\cite{Antonakos_ICIP2016}; using different maps in different regions of the parameter space~\cite{Xiong_CVPR2015}; and using recurrent neural network as the sequence of maps while also learns the feature~\cite{Trigeorgis_CVPR2016}. While the mentioned works learn the regressors in a sequential manner, Sequence of Learned Linear Predictor~\cite{Zimmermann_PAMI2009} first learns a set of linear maps then selects a subset to form a sequence of maps. 

We observe that most SSUs focus on image-based tracking and pose estimation. This is because the feature for these problems is rather obvious: they use intensity-based features such as intensity difference, SIFT, HOG, etc. Extending SSUs to other problems require designing new features. Our previous work~\cite{JV_CVPR2017} proposes a new feature and extends SSU to the problem of point cloud registration. Still, it is not straightforward to design features for other applications. In this work, we propose DO as a simple extension of previous SSUs. We study the its properties, propose a general framework for designing features, and apply DO to other computer vision problems, such as point cloud registration, camera pose estimation, and image denoising.

Recently, deep learning has received tremendous interest for its success in various tasks in computer vision and natural language processing~\cite{Goodfellow-et-al-2016}. Notably, some works use deep learning to solve optimization~\cite{andrychowicz2016learning,chen2016learning}, but they differ from DO since they still need a cost function to be defined. On the other hand, deep learning has been used for some applications similar to those in this paper, e.g., camera pose estimation~\cite{kendall2015posenet}, homography estimation~\cite{detone2016deep}, and image denoising~\cite{xie2012image}. However, DO differs from these works in that DO can combine the mathematical model of each problem with the available training data, while deep learning approaches are purely data-driven. The model-driven side of DO allows it to be interpreted and analyzed more easily, and significantly reduces the amount of data and computational power required for training.

%In this work, we propose Discriminative Optimization (DO) as a simple extension of previous SSUs. We study the properties of DO and its relationship to gradient descent. Based on this relationship, we propose a framework for designing features where the update directions can be interpreted as the gradient direction of an unknown cost function. We show that our approach can handle both ordered (e.g., image pixels) and unordered (e.g., set of feature matches) data types. We provide a synthetic experiment which confirms our interpretation. We apply DO to the problems of point cloud registration, camera pose estimation, and image denoising, and show that DO can obtain state-of-the-art results.\textcolor{red}{Maybe this should go in introduction}
 
%Also similar to SSU but instead of learning a sequence of regressors, \cite{Tian_IJCV12,Tian_IJCV15} generate distorted samples from a template then use nearest neighbor to guide the updates. They also show that this strategy leads to a globally optimal estimate.

%\textcolor{red}{Probably the most important question is how can we move away from point-based pose estimation to other applications? The main answer would lie in which `feature' to use. Deep learning has been used to extract feature, but it is not clear how to use deep learning with with unordered input.}

\section{Discriminative Optimization (DO)}
In this section, we provide the motivation and describe the Discriminative Optimization (DO) framework.

\subsection{Motivation from fixed point iteration}
DO aims to learn a sequence of update maps (SUM) to update an initial parameter vector to a stationary point. The idea of DO is based on the fixed point iteration of the form
\begin{equation}
\mathbf{x}_{t+1}=\mathbf{x}_t-\Delta\mathbf{x}_t,\label{eq:fixedPointIter}
\end{equation}
where $\mathbf{x}_t\in\mathbb{R}^p$ is the parameter at step $t$, and $\Delta\mathbf{x}_t\in\mathbb{R}^p$ is the update vector. Eq.~\eqref{eq:fixedPointIter} is iterated until $\Delta\mathbf{x}_t$ vanishes, i.e., until a stationary point is reached. An example of fixed point iteration for solving optimization is the gradient descent algorithm~\cite{Boyd_convOpt}. Let $J:\mathbb{R}^p\rightarrow\mathbb{R}$ be a differentiable cost function. The gradient descent algorithm for minimizing $J$ is expressed as
\begin{equation}
\mathbf{x}_{t+1} = \mathbf{x}_t - \left.\mu_t\frac{\partial}{\partial \mathbf{x}}J(\mathbf{x})\right\vert_{\mathbf{x}=\mathbf{x}_t},
\label{eq:gradDescent}
\end{equation}
where $\mu_t$ is a step size. One can see that the scaled gradient is used as $\Delta\mathbf{x}_t$ in~\eqref{eq:fixedPointIter}, and it is known that the gradient vanishes when a stationary point is reached.

In contrast to gradient descent where the updates are derived from a cost function, DO learns the updates from the training data. The major advantages are that no cost function is explicitly designed and the neighborhoods around the solutions of the perturbed data are taken into account when the maps are learned.

\subsection{DO framework}

DO uses an update rule in the form of~\eqref{eq:fixedPointIter}. The update vector $\Delta\mathbf{x}_t$ is computed by mapping the output of a function $\mathbf{h}:\mathbb{R}^p\rightarrow\mathbb{R}^f$ with a sequence of matrices\footnote{Here, we used linear maps due to their simplicity and computational efficiency. However, other non-linear regression functions can be used in a straightforward manner.} $\mathbf{D}_t\in\mathbb{R}^{p \times f}$. Here, $\mathbf{h}$ is a function that encodes a representation of the data (e.g., $\mathbf{h}(\mathbf{x})$ extracts features from an image at position $\mathbf{x}$). Given an initial parameter $\mathbf{x}_0\in\mathbb{R}^p$, DO iteratively updates $\mathbf{x}_t,t=0,1,\dots$, using:
\begin{equation}
\mathbf{x}_{t+1} = \mathbf{x}_t - \mathbf{D}_{t+1}\mathbf{h}(\mathbf{x}_t),
\label{eq:updateRule}
\end{equation}
until convergence to a stationary point. 
%In a training step, the solution $\mathbf{x}_*$ that is known (e.g.,, position of the object on the image).
%The matrix $\mathbf{D}_{t+1}:\mathbb{R}^{p \times f}$ maps the feature $\mathbf{h}(\mathbf{x}_t)$ to an update vector.
The sequence of matrices $\mathbf{D}_{t+1},t=0,1,\dots$ learned from training data forms a sequence of update maps (SUM).

\subsubsection{Learning a SUM}
Suppose we are given a training set as a set of triplets $\{(\mathbf{x}^{(i)}_0,\mathbf{x}_*^{(i)},\mathbf{h}^{(i)})\}_{i=1}^N$, where $\mathbf{x}^{(i)}_0\in\mathbb{R}^p$ is the initial parameter for the $i^{th}$ problem instance (e.g., the $i^{th}$ image),  $\mathbf{x}_*^{(i)}\in\mathbb{R}^p$ is the ground truth parameter (e.g., position of the object on the image), and $\mathbf{h}^{(i)}:\mathbb{R}^p\rightarrow\mathbb{R}^f$ extract features from the $i^{th}$ problem instance. The goal of DO is to learn a sequence of update maps $\{\mathbf{\mathbf{D}}_t\}_t$ that updates $\mathbf{x}^{(i)}_0$ to $\mathbf{x}_*^{(i)}$. To learn the maps, we minimize the least-square error:
\begin{equation}
\mathbf{D}_{t+1}=\arg\min_{\tilde{\mathbf{D}}} \frac{1}{N}\sum_{i=1}^N \Vert\mathbf{x}_*^{(i)} - \mathbf{x}^{(i)}_{t} + \tilde{\mathbf{D}}\mathbf{h}^{(i)}(\mathbf{x}^{(i)}_t)\Vert_2^2,
\label{eq:learning}
\end{equation}
where $\Vert\cdot\Vert_2$ is the $\ell_2$ norm. After we learn a map $\mathbf{D}_{t+1}$, we update each $\mathbf{x}^{(i)}_{t}$ using~\eqref{eq:updateRule}, then proceed to learn the next map. This process is repeated until some terminating conditions, such as until the error does not decrease much or a maximum number of iterations is reached. To see why~\eqref{eq:learning} learns stationary points, we can see that for $i$ with $\mathbf{x}^{(i)}_t\approx\mathbf{x}^{(i)}_*$,~\eqref{eq:learning} will force $\mathbf{D}_{t+1}\mathbf{h}^{(i)}(\mathbf{x}^{(i)}_t)$ to be close to zero, thereby inducing a stationary point around $\mathbf{x}^{(i)}_*$. In practice, we use ridge regression to learn the maps to prevent overfitting:
\begin{equation}
\minimize_{\tilde{\mathbf{D}}} \frac{1}{N}\sum_{i=1}^N \Vert\mathbf{x}_*^{(i)} - \mathbf{x}^{(i)}_t + \tilde{\mathbf{D}}\mathbf{h}^{(i)}(\mathbf{x}^{(i)}_t)\Vert_2^2+
\lambda\Vert\tilde{\mathbf{D}}\Vert_F^2,
\label{eq:learningReg}
\end{equation}
where $\Vert\cdot\Vert_F$ is the Frobenius norm, and $\lambda$ is a hyperparameter. The pseudocode for training a SUM is shown in Alg.~\ref{alg:LearnSUM}.

\subsubsection{Solving a new problem instance}
To solve a new problem instance with an unseen function $\mathbf{h}$ and an initialization $\mathbf{x}_0$, we update $\mathbf{x}_t,t=0,1,\dots$ with the obtained SUM using~\eqref{eq:updateRule} until a stationary point is reached. However, in practice, the number of maps is finite, say $T$ maps. We observed in many cases that the update at the $T^{th}$ iteration is still large, which means the stationary point is still not reached, and that $\mathbf{x}_T$ is far from the true solution. For example, in the registration task, the rotation between initial orientation and the solution might be so large that we cannot obtain the solution within a fixed number of update iterations.
%For the registration task, this is particularly the problem when there is a large rotation angle between the initialization and the solution. 
To overcome this problem, we keep updating $\mathbf{x}$ using the $T^{th}$ map until the update is small or the maximum number of iterations is reached. This approach makes DO different from previous works in Sec.~\ref{sec:RW:learnDir}, where the updates are only performed up to the number of maps. Alg.~\ref{alg:updateParam} shows the pseudocode for updating the parameters.

%Previous SUAs~\cite{Dollar_CVPR2010,Xiong_SDM_Arxiv14} propose to update the parameters by sequentially applying each map one time to update the parameter. This approach works well with face alignment and landmark tracking for 2 reasons: (1) the distances between different pairs of initializations and solutions are about in the same range; and (2) the local features over images (e.g.,, SIFT and HOG) generally cover large informative patches, which provide highly informative data for computing update parameters. However, for point cloud registration, local features may contain only empty space without any points, making it much less informative than features from images. Instead of updating the parameter with a constant number of iterations, we apply the last map iteratively until the update is small or a maximum number of iteration is reached. Alg.~\ref{alg:updateParam} shows the pseudocode for updating the parameters.

\begin{algorithm}[t]
\caption{Training a sequence of update maps (SUM)}
\label{alg:LearnSUM}
\begin{algorithmic}[1]
\REQUIRE $\{(\mathbf{x}^{(i)}_0,\mathbf{x}_*^{(i)},\mathbf{h}^{(i)})\}_{i=1}^N,T,\lambda$
\ENSURE $\{\mathbf{D}_t\}_{t=1}^T$
\FOR{$t = 0$ \TO $T-1$}
\STATE Compute $\mathbf{D}_{t+1}$ with~\eqref{eq:learningReg}.
%$\mathbf{D}_i=\arg\min_{\hat{\mathbf{D}}}\sum_{i=1}^N\Vert\mathbf{x}^{(i)}_j-\hat{\mathbf{D}}\mathbf{h}^{(i)}\mathbf{x}^{(i)}_j)\Vert_2^2+\frac{\lambda}{2}\Vert\hat{\mathbf{D}}\Vert$
\FOR{$i = 1$ \TO $N$}
\STATE Update  $\mathbf{x}_{t+1}^{(i)}:=\mathbf{x}_{t}^{(i)}-\mathbf{D}_{t+1}\mathbf{h}^{(i)}(\mathbf{x}_{t}^{(i)})$.
\ENDFOR
\ENDFOR
\end{algorithmic}
\end{algorithm}

\begin{algorithm}[t]
\caption{Searching for a stationary point}
\label{alg:updateParam}
\begin{algorithmic}[1]
\REQUIRE $\mathbf{x}_0,\mathbf{h},\{\mathbf{D}_t\}_{t=1}^T,maxIter,\epsilon$
\ENSURE $\mathbf{x}$
\STATE Set $\mathbf{x}:=\mathbf{x}_0$
\FOR{$t=1$ \TO $T$}
\STATE Update $\mathbf{x} := \mathbf{x}-\mathbf{D}_t\mathbf{h}(\mathbf{x})$
\ENDFOR
\STATE Set $iter := T+1$.
\WHILE{$\Vert\mathbf{D}_T\mathbf{h}(\mathbf{x})\Vert\geq\epsilon$ \AND $iter \leq maxIter$}
\STATE Update $\mathbf{x} := \mathbf{x}-\mathbf{D}_T\mathbf{h}(\mathbf{x})$
\STATE Update $iter := iter + 1$
\ENDWHILE
\end{algorithmic}
\end{algorithm}

\section{Theoretical Analysis of DO}\label{sec:analysis}

In this section, we analyze the theoretical properties of DO. Specifically, we discuss the conditions for the convergence of the training error, and the relation between DO and mathematical optimization.

\subsection{Convergence of training error}
Here, we show that under a weak assumption on $\mathbf{h}^{(i)}$, we can learn a SUM that updates $\mathbf{x}^{(i)}_0$ to $\mathbf{x}_*^{(i)}$, i.e., the training error converges to zero. First, we define the \emph{monotonicity at a point} condition:

\begin{definition}
(Monotonicity at a point)
A function $\mathbf{f}:\mathbb{R}^p\rightarrow\mathbb{R}^p$ is

(i) \emph{monotone at $\mathbf{x}_*\in\mathbb{R}^p$} if
\begin{equation}
(\mathbf{x}-\mathbf{x}_*)^\top\mathbf{f}(\mathbf{x})\geq 0
\end{equation} 
for all $\mathbf{x}\in\mathbb{R}^p$, 

(ii) \emph{strictly monotone at $\mathbf{x}_*\in\mathbb{R}^p$} if
\begin{equation}
(\mathbf{x}-\mathbf{x}_*)^\top\mathbf{f}(\mathbf{x})\geq 0
\end{equation} 
for all $\mathbf{x}\in\mathbb{R}^p$ and the equality holds only at $\mathbf{x}=\mathbf{x}_*$,
%\footnote{The strict version is weaker than the one used in the proof in~\cite{Xiong_SDM_Arxiv14}. }, 

(iii) \emph{strongly monotone at $\mathbf{x}_*\in\mathbb{R}^p$} if
\begin{equation}
(\mathbf{x}-\mathbf{x}_*)^\top\mathbf{f}(\mathbf{x})\geq m\Vert\mathbf{x}-\mathbf{x}_*\Vert_2^2
\end{equation}
for some $m>0$ and all $\mathbf{x}\in\mathbb{R}^p$.
\end{definition}

It can be seen that if $\mathbf{f}$ is strongly monotone at $\mathbf{x}$ then $\mathbf{f}$ is strictly monotone at $\mathbf{x}$, and if $\mathbf{f}$ is strictly monotone at $\mathbf{x}$ then $\mathbf{f}$ is monotone at $\mathbf{x}$. With the above definition, we obtain the following result:

\begin{theorem}
{\bf (Convergence of SUM's training error)} Given a training set $\{(\mathbf{x}^{(i)}_0,\mathbf{x}_*^{(i)},\mathbf{h}^{(i)})\}_{i=1}^N$, if there exists a linear map $\mathbf{\hat{D}}\in\mathbb{R}^{p\times f}$
where $\hat{\mathbf{D}}\mathbf{h}^{(i)}$ is strictly monotone at $\mathbf{x}_*^{(i)}$ for all $i$, and if there exists an $i$ where $\mathbf{x}^{(i)}_t\neq\mathbf{x}_*^{(i)}$, then the update rule:
\begin{equation}
\mathbf{x}^{(i)}_{t+1}=\mathbf{x}^{(i)}_{t}-
\mathbf{D}_{t+1}\mathbf{h}^{(i)}(\mathbf{x}^{(i)}_{t}),
\end{equation}
with $\mathbf{D}_{t+1}\subset\mathbb{R}^{p\times f}$
obtained from~\eqref{eq:learning}, guarantees that the training error strictly decreases in each iteration:
\begin{equation}
\sum_{i=1}^{N}
\Vert\mathbf{x}_*^{(i)}-\mathbf{x}^{(i)}_{t+1}\Vert_2^{2}
<
\sum_{i=1}^{N}
\Vert\mathbf{x}_*^{(i)}-\mathbf{x}^{(i)}_t\Vert_2^{2}.
\end{equation}
Moreover, if $\hat{\mathbf{D}}\mathbf{h}^{(i)}$ is strongly monotone at $\mathbf{x}_*^{(i)}$, and if there exist $M>0,H\geq0$ such that $\Vert \hat{\mathbf{D}}\mathbf{h}^{(i)}(\mathbf{x}^{(i)})\Vert_2^2\leq H+M\Vert\mathbf{x}_*^{(i)}-\mathbf{x}^{(i)}\Vert_2^2$ for all $i$, then the training error converges to zero. If $H=0$ then the error converges to zero linearly.
\label{thm:trainErr}
\end{theorem}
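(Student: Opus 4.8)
The plan is to exploit the defining optimality of $\mathbf{D}_{t+1}$ in~\eqref{eq:learning}. The key observation is that after the update $\mathbf{x}^{(i)}_{t+1}=\mathbf{x}^{(i)}_t-\mathbf{D}_{t+1}\mathbf{h}^{(i)}(\mathbf{x}^{(i)}_t)$ the new residual $\mathbf{x}_*^{(i)}-\mathbf{x}^{(i)}_{t+1}$ is exactly the quantity inside the norm of~\eqref{eq:learning}, so the training error after the step is the optimal value of that least-squares problem. First I would write
\begin{equation}
\sum_{i=1}^N\Vert\mathbf{x}_*^{(i)}-\mathbf{x}^{(i)}_{t+1}\Vert_2^2=\min_{\tilde{\mathbf{D}}}\sum_{i=1}^N\Vert\mathbf{x}_*^{(i)}-\mathbf{x}^{(i)}_t+\tilde{\mathbf{D}}\mathbf{h}^{(i)}(\mathbf{x}^{(i)}_t)\Vert_2^2,
\end{equation}
and then bound the right-hand side from above by restricting $\tilde{\mathbf{D}}$ to the one-parameter family $\tilde{\mathbf{D}}=\alpha\hat{\mathbf{D}}$, $\alpha\geq0$, where $\hat{\mathbf{D}}$ is the map supplied by hypothesis.

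Expanding the squared norm and abbreviating $E_t=\sum_i\Vert\mathbf{x}_*^{(i)}-\mathbf{x}^{(i)}_t\Vert_2^2$, $A_t=\sum_i(\mathbf{x}^{(i)}_t-\mathbf{x}_*^{(i)})^\top\hat{\mathbf{D}}\mathbf{h}^{(i)}(\mathbf{x}^{(i)}_t)$, and $B_t=\sum_i\Vert\hat{\mathbf{D}}\mathbf{h}^{(i)}(\mathbf{x}^{(i)}_t)\Vert_2^2$, the cross term collapses to $-A_t$, giving $E_{t+1}\leq E_t-2\alpha A_t+\alpha^2 B_t$ for every $\alpha\geq0$. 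Strict monotonicity of $\hat{\mathbf{D}}\mathbf{h}^{(i)}$ at $\mathbf{x}_*^{(i)}$ makes every summand of $A_t$ nonnegative and zero only when $\mathbf{x}^{(i)}_t=\mathbf{x}_*^{(i)}$; since some $\mathbf{x}^{(i)}_t\neq\mathbf{x}_*^{(i)}$ by assumption, $A_t>0$ (hence $B_t>0$, because $A_t>0$ forces some $\hat{\mathbf{D}}\mathbf{h}^{(i)}(\mathbf{x}^{(i)}_t)\neq\mathbf{0}$). Taking $\alpha>0$ small enough that $\alpha B_t<2A_t$ makes $-2\alpha A_t+\alpha^2 B_t<0$, which yields the strict decrease $E_{t+1}<E_t$ and settles the first claim.

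For convergence to zero I would not fix $\alpha$ but minimize the quadratic $E_t-2\alpha A_t+\alpha^2 B_t$ over $\alpha$; its minimizer $\alpha=A_t/B_t$ gives $E_{t+1}\leq E_t-A_t^2/B_t$. Strong monotonicity supplies $A_t\geq mE_t$ and the growth bound supplies $B_t\leq NH+ME_t$, so that
\begin{equation}
E_{t+1}\leq E_t-\frac{m^2E_t^2}{NH+ME_t}.
\end{equation}
Because the subtracted term is nonnegative, $(E_t)$ is nonincreasing and bounded below by $0$, hence converges to some $E_\infty\geq0$; passing to the limit in the recurrence forces $m^2E_\infty^2/(NH+ME_\infty)=0$, i.e.\ $E_\infty=0$. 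When $H=0$ the recurrence simplifies to $E_{t+1}\leq(1-m^2/M)E_t$, which is linear convergence, the factor lying in $[0,1)$ because combining strong monotonicity with Cauchy--Schwarz gives $\Vert\hat{\mathbf{D}}\mathbf{h}^{(i)}(\mathbf{x})\Vert_2\geq m\Vert\mathbf{x}-\mathbf{x}_*^{(i)}\Vert_2$ and hence $m^2\leq M$ against the growth bound.

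The step I expect to be the main obstacle is recognizing that a fixed step $\alpha$ does not suffice in the case $H>0$: it only drives the error into a neighborhood whose radius scales with $H$, so the argument must use the error-dependent optimal step $\alpha=A_t/B_t$ and then deduce $E_\infty=0$ from a monotone-limit argument rather than from a single contraction constant. A secondary point I would verify carefully is that $B_t>0$ whenever $A_t>0$, so that dividing by $B_t$ is legitimate, and that the two hypotheses are mutually consistent through the Cauchy--Schwarz bound that guarantees $1-m^2/M\in[0,1)$.
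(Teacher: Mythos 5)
Your proposal is correct and follows essentially the same route as the paper's proof: both exploit the least-squares optimality of $\mathbf{D}_{t+1}$ to bound the new error by that attained with a scaled ideal map $\alpha\hat{\mathbf{D}}$, arrive at the identical recurrence $E_{t+1}\leq E_t-A_t^2/B_t$ with $A_t\geq mE_t$ and $B_t\leq NH+ME_t$, and conclude linear convergence when $H=0$. The only cosmetic difference is the final step for $H>0$: the paper rules out a positive limit $\mu$ by showing the product $\prod_l\left(1-m^2E_l/(NH+ME_l)\right)$ would vanish, whereas you pass to the limit directly in the recurrence using monotone convergence --- both are valid, and your Cauchy--Schwarz check that $1-m^2/M\in[0,1)$ is a detail the paper leaves implicit.
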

The proof of Thm.~\ref{thm:trainErr} is provided in the appendix. In words, Thm.~\ref{thm:trainErr} says that if each instance $i$ is similar in the sense that each $\hat{\mathbf{D}}\mathbf{h}^{(i)}$ is strictly monotone at $\mathbf{x}_*^{(i)}$, then sequentially learning the optimal maps with~\eqref{eq:learning} guarantees that the training error strictly reduces in each iteration. If $\hat{\mathbf{D}}\mathbf{h}^{(i)}$ is strongly monotone at $\mathbf{x}_*^{(i)}$ and upperbounded then the error converges to zero. Note that $\mathbf{h}^{(i)}$ is not required to be differentiable or continuous. Xiong and De la Torre~\cite{Xiong_SDM_Arxiv14} also presents a convergence result for a similar update rule, but it shows the strict reduction of error of a \emph{single} function under a \emph{single ideal} map. It also requires an additional condition called `Lipschitz at a point,' This condition is necessary for bounding the norm of the map, otherwise the update can be too large, preventing the reduction in error. In contrast, Thm.~\ref{thm:trainErr} explains the convergence of \emph{multiple} functions under the same SUM learned from the data, where the each learned map $\mathbf{D}_t$ can be different from the ideal map $\hat{\mathbf{D}}$. To ensure reduction of error, Thm.~\ref{thm:trainErr} also does not require the `Lipschitz at a point' the norms of the maps are adjusted based on the training data. Meanwhile, to ensure convergence to zero, Thm.~\ref{thm:trainErr} requires an upperbound which can be thought of as a relaxed version of `Lipschitz at a point' (note that $\hat{\mathbf{D}}\mathbf{h}^{(i)}(\mathbf{x}_*^{(i)})$ does not need to be $\mathbf{0}_p$). These weaker assumptions have an important implication as it allows robust discontinuous features, such as HOG in~\cite{Xiong_SDM_Arxiv14}, to be used as $\mathbf{h}^{(i)}$. Finally, we wish to point out that Thm.~\ref{thm:trainErr} guarantees the reduction in the average error, not the error of each instance $i$.

%Thm.~\ref{thm:trainErr} also does not require the `Lipschitz at a point' condition to bound the norms of the maps since they are adjusted based on the training data. Not requiring this Lipschitz condition has an important implication as it allows robust discontinuous features, such as HOG in~\cite{Xiong_SDM_Arxiv14}, to be used as $\mathbf{h}^{(i)}$. Finally, note that Thm.~\ref{thm:trainErr} does not guarantee the error of each instance $i$ reduces in each iteration, but guarantees the reduction in the average error.

\begin{comment}
\textcolor{red}{connect the above with below paragraph}

We also attempt to prove the convergence of SoM for test data. However, we discovered a contrary result: It is possible to construct a function $\hat{\mathbf{g}}$ such that DO diverges even when $\hat{\mathbf{g}}$ is monotone at its solution. 

\begin{proposition}[Existence of a Divergent Function]\label{thm:DivergeFunction}
Given a sequence of maps $\mathbf{R}^{(t)}\in\mathbb{R}^{d\times f},t=1,2,\dots$, there exists a function $\hat{\mathbf{g}}:\mathbb{R}^d\rightarrow\mathbb{R}^f$, an initial point $\mathbf{x}^{(1)}$, a stationary point $\mathbf{x}^*$, and a sequence $\mathbf{x}^{(t)}$ obtained from the update rule 
$$
\mathbf{x}^{(t+1)}=\mathbf{x}^{(t)}-\mathbf{R}^{(t)}\hat{\mathbf{g}}(\mathbf{x}^{(t)})
$$
where (i) $(\mathbf{x}^{(t)}-\mathbf{x}^*)^\top\mathbf{R}^{(t)}
\hat{\mathbf{g}}(\mathbf{x}^{(t)})>0$, and (ii) $\Vert\mathbf{x}^{(t+1)}-\mathbf{x}^*\Vert
>\Vert\mathbf{x}^{(t)}-\mathbf{x}^*\Vert$.
\end{proposition}
\end{comment}

\subsection{Relation to mathematical programming}
In this section, we explore the relation between DO and mathematical programming. Specifically, we show that monotonicity-at-a-point is a generalization of monotonicity and pseudomonotonocity, which are the properties of the gradient of convex and pseudoconvex functions~\cite{Rockafellar_cvxAna1970,Karamardian_JOTA1990}. Understanding this relation leads to a framework for designing $\mathbf{h}$ in Sec.~\ref{sec:designH}. We begin this section by providing definitions and propositions relating generalized convexity and monotonicity, then we provide our result in the end.

A pseudoconvex function is defined as follows.

\begin{definition}
(Pseudoconvexity~\cite{Karamardian_JOTA1990}) A differentiable function $f:\mathbb{R}^p\rightarrow\mathbb{R}$ is 

(i) \emph{pseudoconvex} if for any distinct points $\mathbf{x},\mathbf{x}'\in\mathbb{R}^p$, 
\begin{equation}
(\mathbf{x}-\mathbf{x}')^\top \nabla f(\mathbf{x}') \geq 0 \implies f(\mathbf{x})\geq f(\mathbf{x}'),
\end{equation}

(ii) \emph{strictly pseudoconvex} if for any distinct points $\mathbf{x},\mathbf{x}'\in\mathbb{R}^p$, 
\begin{equation}
(\mathbf{x}-\mathbf{x}')^\top \nabla f(\mathbf{x}') \geq 0 \implies f(\mathbf{x}) > f(\mathbf{x}'),
\end{equation}

(iii) \emph{strongly pseudoconvex} if there exists $m>0$ such that for any distinct points $\mathbf{x},\mathbf{x}'\in\mathbb{R}^p$,
\begin{equation}
(\mathbf{x}-\mathbf{x}')^\top \nabla f(\mathbf{x}') \geq 0 \implies f(\mathbf{x})\geq f(\mathbf{x}')+m\Vert\mathbf{x}-\mathbf{x}'\Vert_2^2.
\end{equation}
\end{definition} 

Fig.~\ref{fig:meanMedian}d shows examples of pseudoconvex functions. In essence, pseudoconvex functions are differentiable functions where the sublevel sets are convex and all stationary points are global minima. Pseudoconvex functions generalize convex functions: all differentiable convex functions are pseudoconvex. Pseudoconvex functions are used as penalty functions for their stronger robustness than convex ones~\cite{Barron_Arxiv17,Black_IJCV1996,Ochs_CVPR2013}. Next, we introduce pseudomonotonicity. 

\begin{definition}
(Pseudomonotonicity~\cite{Karamardian_JOTA1990}) A function $\mathbf{f}:\mathbb{R}^p\rightarrow\mathbb{R}^p$ is 

(i) \emph{pseudomonotone} if for any distinct points $\mathbf{x},\mathbf{x}'\in\mathbb{R}^p$,
\begin{equation}
(\mathbf{x}-\mathbf{x}')^\top \mathbf{f}(\mathbf{x}') \geq 0 \implies (\mathbf{x}-\mathbf{x}')^\top \mathbf{f}(\mathbf{x}) \geq 0,
\end{equation}

(ii) \emph{strictly pseudomonotone} if for any distinct points $\mathbf{x},\mathbf{x}'\in\mathbb{R}^p$,
\begin{equation}
(\mathbf{x}-\mathbf{x}')^\top \mathbf{f}(\mathbf{x}') \geq 0 \implies (\mathbf{x}-\mathbf{x}')^\top \mathbf{f}(\mathbf{x}) > 0,
\end{equation}

(iii) \emph{strongly pseudomonotone} if there exists $m>0$ such that for any distinct points $\mathbf{x},\mathbf{x}'\in\mathbb{R}^p$,
\begin{equation}
(\mathbf{x}-\mathbf{x}')^\top \mathbf{f}(\mathbf{x}') \geq 0 \implies (\mathbf{x}-\mathbf{x}')^\top \mathbf{f}(\mathbf{x}) \geq m\Vert\mathbf{x}-\mathbf{x}'\Vert_2^2.
\end{equation}
\end{definition}

It can also be shown that monotone (resp., strictly, strongly) functions are pseudomonotone (resp., strictly, strongly)~\cite{Karamardian_JOTA1990}. The following propositions provides a relation between the gradients of pseudoconvex functions and pseudomonotonicity.

\begin{proposition}\label{prop:gradCvx}
(Convexity and monotonicity~\cite{Karamardian_JOTA1990}) A differentiable function $f:\mathbb{R}^p\rightarrow\mathbb{R}$ is pseudoconvex (resp., strictly, strongly) if and only if its gradient is pseudomonotone (resp., strictly, strongly).
\end{proposition}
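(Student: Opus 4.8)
The plan is to prove both implications by passing to the one-dimensional restriction of $f$ along the segment joining two points, which converts the vector inequalities in both definitions into sign conditions on a scalar derivative. Fix distinct $\mathbf{x},\mathbf{x}'\in\mathbb{R}^p$, write $\mathbf{z}_t=\mathbf{x}'+t(\mathbf{x}-\mathbf{x}')$ for $t\in[0,1]$, and set $\phi(t)=f(\mathbf{z}_t)$, so that $\phi$ is differentiable with $\phi'(t)=(\mathbf{x}-\mathbf{x}')^\top\nabla f(\mathbf{z}_t)$. The key observation is that specializing the defining inequalities to the pair $(\mathbf{z}_t,\mathbf{x}')$ gives $(\mathbf{z}_t-\mathbf{x}')^\top\nabla f(\mathbf{x}')=t\,\phi'(0)$ and $(\mathbf{z}_t-\mathbf{x}')^\top\nabla f(\mathbf{z}_t)=t\,\phi'(t)$, so the hypotheses become statements about the sign and size of $\phi'$.

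For the ``if'' direction (pseudomonotone gradient $\Rightarrow$ pseudoconvex $f$), suppose $\phi'(0)=(\mathbf{x}-\mathbf{x}')^\top\nabla f(\mathbf{x}')\geq0$. Then for each $t\in(0,1]$ we have $(\mathbf{z}_t-\mathbf{x}')^\top\nabla f(\mathbf{x}')=t\,\phi'(0)\geq0$, so pseudomonotonicity applied to $(\mathbf{z}_t,\mathbf{x}')$ yields $t\,\phi'(t)\geq0$, hence $\phi'\geq0$ on $[0,1]$; by the mean value theorem $\phi$ is nondecreasing and $f(\mathbf{x})\geq f(\mathbf{x}')$. The strict and strong versions follow by the same specialization: strict pseudomonotonicity gives $\phi'>0$ on $(0,1]$ and hence $f(\mathbf{x})>f(\mathbf{x}')$, while strong pseudomonotonicity gives $\phi'(t)\geq m\,t\,\Vert\mathbf{x}-\mathbf{x}'\Vert_2^2$, so that $t\mapsto\phi(t)-\tfrac{m}{2}t^2\Vert\mathbf{x}-\mathbf{x}'\Vert_2^2$ is nondecreasing, yielding strong pseudoconvexity with constant $m/2$.

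For the ``only if'' direction (pseudoconvex $f$ $\Rightarrow$ pseudomonotone gradient) I would argue by contradiction in the plain case. Assume $\phi'(0)\geq0$ but $\phi'(1)<0$. Pseudoconvexity first gives $f(\mathbf{x})\geq f(\mathbf{x}')$; applying it with the roles of the two points exchanged and using $\phi'(1)<0$ (i.e. $(\mathbf{x}'-\mathbf{x})^\top\nabla f(\mathbf{x})>0$) gives $f(\mathbf{x}')\geq f(\mathbf{x})$, so $f(\mathbf{x})=f(\mathbf{x}')$. Since $\phi'(1)<0$, continuity of $\phi$ produces an interior point $\mathbf{w}=\mathbf{z}_{t_1}$ with $f(\mathbf{w})>f(\mathbf{x})=f(\mathbf{x}')$; feeding $\mathbf{w}$ into the contrapositive of pseudoconvexity against $\mathbf{x}$ and against $\mathbf{x}'$ forces $\phi'(t_1)$ to be simultaneously negative and positive, a contradiction. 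The strict case then follows because strict pseudoconvexity implies pseudoconvexity (giving $\phi'(1)\geq0$) and the value $\phi'(1)=0$ is excluded by the exchanged-roles inequality.

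The main obstacle is the strong ``only if'' direction, namely deducing $(\mathbf{x}-\mathbf{x}')^\top\nabla f(\mathbf{x})\geq m'\Vert\mathbf{x}-\mathbf{x}'\Vert_2^2$ from strong pseudoconvexity. The difficulty is genuine: the quadratic growth of $f$ only controls $\phi$ to \emph{second} order near the endpoint $t=1$, so every mean value or Taylor estimate collapses to the non-quantitative bound $\phi'(1)\geq0$. Moreover the constant truly changes between the two notions, as $f(\mathbf{x})=\Vert\mathbf{x}\Vert_2^2$ is strongly pseudoconvex with $m=1$ yet has gradient strongly pseudomonotone only with $m'=2$, the extremal ratio being approached as $\Vert\mathbf{x}\Vert_2\to\infty$. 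This shows a purely local expansion cannot succeed; a non-local argument exploiting the behavior along the entire ray through $\mathbf{x}'$ and $\mathbf{x}$ is required, and I expect this to be where the bulk of the technical work lies and where the cited analysis of Karamardian and Schaible does the heavy lifting.
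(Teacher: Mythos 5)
A preliminary point: the paper never proves this proposition. It is quoted, with citation, from Karamardian and Schaible; the appendix proves only Theorem 1 and Propositions 2--3. So there is no in-paper proof to compare against, and your attempt must be judged on its own merits. The parts you prove are correct: the restriction $\phi(t)=f(\mathbf{x}'+t(\mathbf{x}-\mathbf{x}'))$, $\phi'(t)=(\mathbf{x}-\mathbf{x}')^\top\nabla f(\mathbf{x}'+t(\mathbf{x}-\mathbf{x}'))$ is exactly the right device; your ``if'' direction is sound in all three cases (the halving of the modulus, $m\mapsto m/2$, is harmless since only existence of some positive constant is asserted), and the contradiction arguments for the plain and strict ``only if'' cases are also correct.

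The gap you flagged, however, is not a technicality that the citation will absorb: with the paper's definitions, in which both inequalities are required globally on $\mathbb{R}^p$, the implication ``$f$ strongly pseudoconvex $\Rightarrow\nabla f$ strongly pseudomonotone'' is \emph{false}, so no argument (local or not) can close it as stated. Take $p=1$ and $f(t)=\int_0^t\max(1,4|u|)\,du$, which is differentiable with $f'(t)=\max(1,4|t|)\geq1$. Since $f'>0$, the premise $(x-x')f'(x')\geq 0$ for distinct points holds precisely when $x>x'$, and then
\begin{equation*}
f(x)-f(x')=\int_{x'}^{x}\max(1,4|u|)\,du\;\geq\;\int_{x'}^{x}4|u|\,du\;\geq\;(x-x')^{2},
\end{equation*}
the last inequality being elementary in each of the cases $0\leq x'<x$, $x'<x\leq 0$, and $x'<0<x$. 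Hence $f$ is strongly pseudoconvex with $m=1$. Yet for any $\alpha>0$, the pair $\mathbf{x}=0$, $\mathbf{x}'=-K$ satisfies the premise ($(\mathbf{x}-\mathbf{x}')f'(\mathbf{x}')=4K^{2}\geq0$) while $(\mathbf{x}-\mathbf{x}')f'(\mathbf{x})=K\cdot 1<\alpha K^{2}$ as soon as $K>1/\alpha$; so $f'$ is not strongly pseudomonotone for any constant. The moral is that strong pseudomonotonicity demands that the directional derivative at $\mathbf{x}$ grow linearly in the distance to \emph{every} admissible $\mathbf{x}'$, and a strictly increasing function whose derivative is merely bounded below can never supply this, however fast its values grow. (Your $\Vert\mathbf{x}\Vert_2^2$ example, incidentally, does not exhibit the obstruction---there the gradient's modulus, $2$, is better than the function's, $1$---though your diagnosis that Taylor-type estimates collapse to $\phi'(1)\geq0$ is accurate.) The honest conclusion is therefore stronger than the one you drew: the plain and strict equivalences and one direction of the strong case are established by your argument, but the remaining direction should not be deferred to the reference---whatever Karamardian and Schaible prove must involve different definitions or a restricted domain, because the statement as formulated here, on all of $\mathbb{R}^p$, is not a theorem.
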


Next, we provide our result on the relation between monotonicity-at-a-point and pseudomonotonicity.

\begin{proposition}
(Pseudomonotonicity and monotonicity at a point) If a function $\mathbf{f}:\mathbb{R}^p\rightarrow \mathbb{R}^p$ is pseudomonotone (resp., strictly, strongly) and $\mathbf{f}(\mathbf{x}_*)=\mathbf{0}_p$, then $\mathbf{f}$ is monotone (resp., strictly, strongly) at $\mathbf{x}_*$. 
\label{prop:psMonIsMonAtPt}
\end{proposition} 
The converse of the proposition is not true. For example, 
$\mathbf{f}(\mathbf{x})=[x_1x_2^2+x_1,x_2x_1^2+x_2]{^\top}$
is strictly monotone at $\mathbf{0}_2$, but not strictly pseudomonotone (counterexample at $\mathbf{x}=(1,2)$ and $\mathbf{y}=(2,1)$). Prop.~\ref{prop:psMonIsMonAtPt} shows that monotonicity-at-a-point is a generalization of pseudomonotonicity, implying that the conditions in Thm.~\ref{thm:trainErr} are weaker than the conditions for the gradient maps of pseudoconvex and convex functions.  

\section{Designing $\mathbf{h}$}\label{sec:designH}
The function $\mathbf{h}$ which provides information about each problem instance is crucial for solving a problem. In this section, we describe a framework to design $\mathbf{h}$ for solving a class of problem based on our analysis in Sec.~\ref{sec:analysis}. We are motivated by the observation that many problems in computer vision aim to find $\mathbf{x}$ such that $\mathbf{g}_j(\mathbf{x})=\mathbf{0}_d,j=1,\dots,J$, where $\mathbf{g}_j:\mathbb{R}^p\rightarrow\mathbb{R}^d$ models the problem of interest (see Sec.~\ref{sec:optInCV}). To solve such problem, one may formulate an optimization problem of the form 
\begin{equation}
\minimize_\mathbf{x}\Phi(\mathbf{x})=\frac{1}{J}\sum_{j=1}^J\varphi(\mathbf{g}_j(\mathbf{x})),
\label{eq:genericCost}
\end{equation}
where $\varphi:\mathbb{R}^d\rightarrow\mathbb{R}$ is a penalty function, e.g., sum of squares, $\ell_1$ norm, etc. If $\Phi({x})$ is differentiable, then we can use gradient descent to find a minimum and returns it as the solution. The choice of $\varphi$ has a strong impact on the solution in terms of robustness to different perturbations, and it is not straightforward to select $\varphi$ that will account for perturbations in real data. The following framework is based on the concept of using training data to learn the update directions that mimic gradient descent of an unknown $\varphi$, thereby bypassing the manual selection of $\varphi$.

\subsection{$\mathbf{h}$ from the gradient of an unknown penalty function}
For simplicity, we assume $\Phi(\mathbf{x})$ is differentiable, but the following approach also applies when it does not. Let us observe its derivative:
\begin{equation}
\frac{\partial\Phi(\mathbf{x})}{\partial\mathbf{x}}
=
\frac{1}{J}\frac{\partial}{\partial\mathbf{x}}\sum_{j=1}^J\varphi(\mathbf{g}_j)
=
\frac{1}{J}\sum_{j=1}^J\left[\frac{\partial \mathbf{g}_j}{\partial \mathbf{x}}\right]^\top
\frac{\partial \varphi(\mathbf{g}_j)}{\partial \mathbf{g}_j},
\label{eq:divError}
\end{equation}
where we express $\mathbf{g}_j(\mathbf{x})$ as $\mathbf{g}_j$ to reduce notation clutter. We can see that the form of $\varphi$ affects only the last term in the RHS of~\eqref{eq:divError}, while the Jacobian $\frac{\partial\mathbf{g}_j}{\partial\mathbf{x}}$ does not depend on it. Since different $\varphi$'s are robust to different perturbations, this last term determines the robustness of the solution. Here, we will use DO to learn this term from a set of training data.
 
In order to do so, we need to express~\eqref{eq:divError} as $\mathbf{D}\mathbf{h}$. First, we rewrite~\eqref{eq:divError} as the update vector $\Delta \mathbf{x}$, where we replace the derivative of $\varphi$ with a generic function $\mathbf{\phi}:\mathbf{R}^d\rightarrow\mathbf{R}^d$:
\begin{align}
\Delta \mathbf{x} & =
\frac{1}{J}\sum_{j=1}^J\left[\frac{\partial \mathbf{g}_j}{\partial \mathbf{x}}\right]^\top
\mathbf{\phi}(\mathbf{g}_j)\\
& = \frac{1}{J}\sum_{j=1}^J\sum_{k=1}^d\left[\frac{\partial \mathbf{g}_j}{\partial \mathbf{x}}\right]^\top_{k,:}
[\mathbf{\phi}(\mathbf{g}_j)]_k,\label{eq:updateGradIndex}
\end{align}
where $[\mathbf{Y}]_{k,:}$ is row $k$ of $\mathbf{Y}$, and $[\mathbf{y}]_k$ is element $k$ of $\mathbf{y}$. We then rewrite~\eqref{eq:updateGradIndex} as the following convolution:
\begin{equation}
\small{ \Delta \mathbf{x} = 
\frac{1}{J}\sum_{j=1}^J\sum_{k=1}^d
\left[\frac{\partial \mathbf{g}_j}{\partial \mathbf{x}}\right]^\top_{k,:}
\int_{\mathbb{R}^d} [\mathbf{\phi}(\mathbf{v})]_k \delta(\mathbf{v}-\mathbf{g}_j) d\mathbf{v}}, \label{eq:convKdeltaDv}
\end{equation}
where $\delta(\mathbf{v})$ is the Dirac delta function.
It can be seen that~\eqref{eq:convKdeltaDv} is equivalent to~\eqref{eq:divError}, while being linear in $\mathbf{\phi}$. This allows us to learn $\mathbf{\phi}$ using linear least squares. To do so, we will express~\eqref{eq:convKdeltaDv} in the form of $\mathbf{D}\mathbf{h}$. For simplicity, we will look at the element $l$ of $\Delta \mathbf{x}$:
{\small
\begin{align}
[\Delta \mathbf{x}]_l & = 
\frac{1}{J}\sum_{j=1}^J\sum_{k=1}^d
\left[\frac{\partial \mathbf{g}_j}{\partial \mathbf{x}}\right]_{k,l}
\int_{\mathbb{R}^d} [\mathbf{\phi}(\mathbf{v})]_k \delta(\mathbf{v}-\mathbf{g}_j) d\mathbf{v},
\\
&=
\frac{1}{J}\sum_{k=1}^d\int_{\mathbb{R}^d} 
[\mathbf{\phi}(\mathbf{v})]_k
\left(\sum_{j=1}^J\left[\frac{\partial \mathbf{g}_j}{\partial \mathbf{x}}\right]_{k,l}
\delta(\mathbf{v}-\mathbf{g}_j)\right) d\mathbf{v},
\\
&=
\sum_{k=1}^d\int_{\mathbb{R}^d}\mathbf{D}(\mathbf{v},k)\mathbf{h}(\mathbf{v},k,l;\mathbf{x})d\mathbf{v}\label{eq:expDh}.
\end{align}
}
Eq.~\ref{eq:expDh} expresses $[\Delta\mathbf{x}]_l$ as an inner product between $\mathbf{D}$ and $\mathbf{h}$ over $\mathbf{v}$ and $k$, where 
\begin{align}
\mathbf{D}(\mathbf{v},k) & = [\mathbf{\phi}(\mathbf{v})]_k,\label{eq:designH-D}
\\
\mathbf{h}(\mathbf{v},k,l;\mathbf{x}) & = \frac{1}{J}\sum_{j=1}^J
\left[\frac{\partial \mathbf{g}_j}{\partial \mathbf{x}}\right]_{k,l}
\delta(\mathbf{v}-\mathbf{g}_j).\label{eq:designH-h}
\end{align}
%[TODO] Since $\mathbf{h}$ is a mixture of weighted Dirac's delta, one can think of~\eqref{eq:expDh} as sampling the gradient of $\varphi$. 
The following results discusses the convergence of training data when $\mathbf{h}$ in~\eqref{eq:designH-h} is used.

\begin{proposition}
(Convergence of the training error with an unknown penalty function) Given a training set $\{(\mathbf{x}_0^{(i)},\mathbf{x}_*^{(i)},\{\mathbf{g}^{(i)}_j\}_{j=1}^{J_i})\}_{i=1}^N$, where $\mathbf{x}_0^{(i)},\mathbf{x}_*^{(i)}\in\mathbb{R}^{p}$ and $\mathbf{g}_j^{(i)}:\mathbb{R}^{p}\rightarrow\mathbb{R}^d$ differentiable, if there exists a function $\varphi:\mathbb{R}^d\rightarrow\mathbb{R}$ such that for each $i$, $\sum_{j=1}^{J_i} \varphi(\mathbf{g}^{(i)}_j(\mathbf{x}^{(i)}))$ is differentiable strictly pseudoconvex with the minimum at $\mathbf{x}_*^{(i)}$, then the training error of DO with $\mathbf{h}$ from~\eqref{eq:designH-h} strictly decreases in each iteration. Alternatively, if $\sum_{j=1}^{J_i} \varphi(\mathbf{g}^{(i)}_j(\mathbf{x}^{(i)}))$ is differentiable strongly pseudoconvex with Lipschitz continuous gradient, then the training error of DO converges to zero.\label{prop:DesignH}
\end{proposition}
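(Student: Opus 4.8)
The plan is to reduce this statement to Theorem~\ref{thm:trainErr} by exhibiting a single ideal map $\hat{\mathbf{D}}$ for which $\hat{\mathbf{D}}\mathbf{h}^{(i)}$ is strictly (resp.\ strongly) monotone at $\mathbf{x}_*^{(i)}$, and then chaining the generalized-convexity results of Propositions~\ref{prop:gradCvx} and~\ref{prop:psMonIsMonAtPt}. The central observation is that the feature $\mathbf{h}$ in~\eqref{eq:designH-h} was built precisely so that a suitable choice of $\mathbf{D}$ reconstructs the gradient in~\eqref{eq:divError}. Concretely, I would take $\hat{\mathbf{D}}$ to be the map encoding $\boldsymbol{\phi}=\partial\varphi/\partial\mathbf{g}$, i.e.\ $\hat{\mathbf{D}}(\mathbf{v},k)=[\partial\varphi(\mathbf{v})/\partial\mathbf{v}]_k$ in the notation of~\eqref{eq:designH-D}. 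By the derivation leading to~\eqref{eq:expDh}, this choice gives $\hat{\mathbf{D}}\mathbf{h}^{(i)}(\mathbf{x})=\frac{1}{J_i}\frac{\partial}{\partial\mathbf{x}}\sum_{j=1}^{J_i}\varphi(\mathbf{g}^{(i)}_j(\mathbf{x}))$, a fixed positive multiple of the gradient of the cost named in the hypothesis. Since the same $\varphi$ works for every $i$, this single map $\hat{\mathbf{D}}$ serves all instances simultaneously, which is exactly what Theorem~\ref{thm:trainErr} requires.

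With this identification in hand, the monotonicity follows by a short chain of implications. Write $\Phi^{(i)}(\mathbf{x})=\sum_{j=1}^{J_i}\varphi(\mathbf{g}^{(i)}_j(\mathbf{x}))$. Under the first hypothesis $\Phi^{(i)}$ is differentiable and strictly pseudoconvex, so Proposition~\ref{prop:gradCvx} gives that $\nabla\Phi^{(i)}$ is strictly pseudomonotone; under the second hypothesis it is strongly pseudoconvex, so $\nabla\Phi^{(i)}$ is strongly pseudomonotone. Because $\mathbf{x}_*^{(i)}$ is the minimum of the differentiable $\Phi^{(i)}$, first-order optimality yields $\nabla\Phi^{(i)}(\mathbf{x}_*^{(i)})=\mathbf{0}_p$. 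Proposition~\ref{prop:psMonIsMonAtPt} then converts pseudomonotonicity with a vanishing value at $\mathbf{x}_*^{(i)}$ into strict (resp.\ strong) monotonicity \emph{at} $\mathbf{x}_*^{(i)}$. Since positive scaling preserves both properties, $\hat{\mathbf{D}}\mathbf{h}^{(i)}$ inherits strict (resp.\ strong) monotonicity at $\mathbf{x}_*^{(i)}$ for every $i$.

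It then remains to invoke Theorem~\ref{thm:trainErr}. In the strict case its first part applies verbatim and yields strict decrease of the training error at each iteration. In the strong case I still need the growth bound of the second part, and this is where the Lipschitz hypothesis enters: if $\nabla\Phi^{(i)}$ is $L$-Lipschitz, then combining it with $\nabla\Phi^{(i)}(\mathbf{x}_*^{(i)})=\mathbf{0}_p$ gives $\Vert\hat{\mathbf{D}}\mathbf{h}^{(i)}(\mathbf{x})\Vert_2\le (L/J_i)\Vert\mathbf{x}-\mathbf{x}_*^{(i)}\Vert_2$, which is the required bound with $H=0$ and $M=(L/J_i)^2$ (or the maximum of these over $i$). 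Theorem~\ref{thm:trainErr} then gives convergence of the training error to zero, and since $H=0$ the convergence is linear.

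The part I expect to require the most care is not the chain of implications above but the identification $\hat{\mathbf{D}}\mathbf{h}^{(i)}=\frac{1}{J_i}\nabla\Phi^{(i)}$ itself, because~\eqref{eq:designH-h} represents $\mathbf{h}$ through Dirac deltas and an integral over $\mathbf{v}$, whereas Theorem~\ref{thm:trainErr} is stated for genuine finite-dimensional maps $\hat{\mathbf{D}}\in\mathbb{R}^{p\times f}$. I would handle this either by working in the discretized feature space in which $\mathbf{v}$ is sampled on a grid (so that the integral becomes a finite sum and $\hat{\mathbf{D}}$ a finite matrix), or by noting that the sifting property of $\delta$ makes the pairing in~\eqref{eq:expDh} reproduce~\eqref{eq:divError} exactly, so that the function-space map plays the role of the ideal finite map in all the monotonicity estimates. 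Everything downstream is then a direct application of the already-established Theorem~\ref{thm:trainErr} and Propositions~\ref{prop:gradCvx} and~\ref{prop:psMonIsMonAtPt}.
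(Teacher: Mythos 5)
Your proposal is correct and follows essentially the same route as the paper's own proof: choose $\hat{\mathbf{D}}$ to encode $\nabla\varphi$ so that $\hat{\mathbf{D}}\mathbf{h}^{(i)}=\nabla\Phi^{(i)}$, chain Propositions~\ref{prop:gradCvx} and~\ref{prop:psMonIsMonAtPt} with first-order optimality to get strict (resp.\ strong) monotonicity at $\mathbf{x}_*^{(i)}$, and use the Lipschitz property with $\nabla\Phi^{(i)}(\mathbf{x}_*^{(i)})=\mathbf{0}_p$ to supply the growth bound (with $H=0$) required by Theorem~\ref{thm:trainErr}. If anything, your version is slightly more careful than the paper's, since it makes explicit the $1/J_i$ scaling, the resulting linear convergence, and the discretization step needed to realize $\hat{\mathbf{D}}$ as a genuine finite matrix.
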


Under similar conditions, we can also show the same convergence results for $\sum_{j=1}^{J_i} \varphi(\mathbf{g}^{(i)}_j(\mathbf{x}^{(i)}))$ that is nondifferentiable strictly and strongly convex functions. Roughly speaking, Prop.~\ref{prop:DesignH} says that if there exists a penalty function $\varphi$ such that for each $i$ the global minimum of~\eqref{eq:genericCost} is at $\mathbf{x}_*^{(i)}$ with no other local minima, then using~\eqref{eq:designH-h} allows us to learn $\{\mathbf{D}_t\}$ for DO. Note that we do not need to explicitly know what such penalty function is. Thus, we can say that using~\eqref{eq:designH-h} is equivalent to learning a surrogate of the gradient of an unknown cost function. This illustrates the potential of DO as a tool for solving a broad class of problems where the penalty function $\varphi$ is unknown.

\subsection{Computing $\mathbf{h}$}

%Eq.~\eqref{eq:designH-h} expresses $\mathbf{h}$ as a function. To compute $\mathbf{h}$ in practice, we need to express it as a vector. To do so, we discretize $\delta(\mathbf{v}-\mathbf{g}_j)$ into a $r^d$ grid with $r$ boxes in each dimension, where a box evaluates to $1$ if $\mathbf{g}_j$ is discretized to the box, and $0$ for all other boxes. Let us denote this grid as $\bar{\delta}_j$, and let $\gamma(y)$ be a function that returns the index in $\{1,\dots,r\}$ that $y$ discretizes to. We can represent $\bar{\mathbf{\delta}}_j$ in vectorized form as the Kronecker product of standard bases:

Eq.~\eqref{eq:designH-h} expresses $\mathbf{h}$ as a function. To compute $\mathbf{h}$ in practice, we need to express it as a vector. To do so, we will convert $\mathbf{h}$ in~\eqref{eq:designH-h} into a discrete grid, then vectorize it. Specifically, we first discretize $\delta(\mathbf{v}-\mathbf{g}_j)$ into a $d$-dimensional grid with $r$ bins in each dimension, where a bin evaluates to $1$ if $\mathbf{g}_j$ is discretized to that bin, and $0$ for all other bins. Let us denote this grid as $\bar{\delta}_j$, and let $\gamma:\mathbb{R}\rightarrow\{1,\dots,r\}$ be a function where $\gamma(y)$ returns the index that $y$ discretizes to. We can express the vectorized $\bar{\delta}_j$ as the following Kronecker product of standard bases:
\begin{equation}
vec(\bar{\mathbf{\delta}}_j) = \mathbf{e}_{\gamma([\mathbf{g}_j]_1)}\otimes\dots\otimes\mathbf{e}_{\gamma([\mathbf{g}_j]_d)}
= \bigotimes_{\alpha=1}^d\mathbf{e}_{\gamma([\mathbf{g}_j]_{\alpha})}\in\{0,1\}
^{r^d}.
\end{equation}
With this discretization, we can express $\mathbf{h}$ in~\eqref{eq:designH-h} in a discrete form as
\begin{equation}
\mathbf{h}(k,l;\mathbf{x}) = \frac{1}{J}\sum_{j=1}^J\left[\frac{\partial \mathbf{g}_j}{\partial\mathbf{x}}\right]_{k,l}
\bigotimes_{\alpha=1}^d\mathbf{e}_{\gamma([\mathbf{g}_j]_{\alpha})}.
\end{equation}
By concatenating $\mathbf{h}(k,l;\mathbf{x})$ over $k$ and $l$, we obtain the final form of $\mathbf{h}$ as 
\begin{equation}
\mathbf{h}(\mathbf{x}) = \frac{1}{J}\sum_{j=1}^J\bigoplus_{l=1}^p \bigoplus_{k=1}^d
\left[\frac{\partial \mathbf{g}_j}{\partial\mathbf{x}}\right]_{k,l}
\bigotimes_{\alpha=1}^d\mathbf{e}_{\gamma([\mathbf{g}_j]_{\alpha})},\label{eq:hFinalForm}
\end{equation}
where $\bigoplus$ denotes vector concatenation. The dimension of $\mathbf{h}$ is $pdr^d$. We show how to apply~\eqref{eq:hFinalForm} to applications in Sec.~\ref{sec:Experiments}. Note that the above approach is one way of designing $\mathbf{h}$ to use with SUM. It is possible to use different form of $\mathbf{h}$ (e.g., see Sec.~\ref{exp:3Dregis}), or replace $\mathbf{D}$ with a nonlinear map.

\section{Experiments}\label{sec:Experiments}
In this section, we first provide an intuition into DO with an analytical example, then we apply DO to three computer vision problems: 3D point cloud registration, camera pose estimation, and image denoising. All experiments were performed in MATLAB on a single thread on an Intel i7-4790 3.60GHz computer with 16GB memory.

\subsection{Optimization with unknown 1D cost functions}
In this experiment, we demonstrate DO's potential in solving 1D problems without an explicit cost function. Specifically, given a set of number $X=\{x_1,x_2,\dots,x_J\}$, we are interested in finding the solution $\hat{x}$ of the problem 
\begin{equation}
g_j(\hat{x})=0=\hat{x}-x_j,j=1,\dots,J.
\end{equation}
A typical approach to solve this problem is to solve the optimization
\begin{equation}
P:\minimize_{\hat{x}:\hat{x}=x_j+\epsilon_j}\sum_{j=1}^J\varphi\left(\epsilon_j\right)\equiv\minimize_{\hat{x}}\sum_{j=1}^n\varphi\left(\hat{x}-x_j\right),\label{eq:findMean}
\end{equation}
for some function $\varphi$. The form of $\varphi$ depends on the assumption on the distribution of $\epsilon_i$, e.g., the maximum likelihood estimation for i.i.d.~Gaussian $\epsilon_j$ would use $\varphi(x)=x^2$. If the an explicit form of $\varphi$ is known, then one can compute $\hat{x}_*$ in closed form (e.g., $\varphi$ is squared value or absolute value) or with an iterative algorithm. However, using a $\varphi$ that mismatches with the underlying distribution of $\epsilon_j$ could lead to an optimal, but incorrect, solution $\hat{x}_*$. Here, we will use DO to solve for $\hat{x}_*$ from a set of training data.

For this problem, we defined 6 $\varphi_\beta$'s as follows:
\begin{align}
\varphi_1(x)=& |x|,\label{eq:distCvxComb}\\
\varphi_2(x)=& 0.35|x|^{4.32}+0.15|x|^{1.23},\label{eq:distCvxComb}\\
\varphi_3(x)=& (3+sgn(x))x^2/4,\label{eq:distCvxComb}\\
\varphi_4(x)=& |x|^{0.7},\label{eq:distCvxComb}\\
\varphi_5(x)=& 1-exp(-2x^2),\label{eq:distCvxComb}\\
\varphi_6(x)=& 1-exp(-8x^2).\label{eq:distCvxComb}
\end{align}
The first 3 $\varphi_i$'s are convex, where $\varphi_1$ is a nonsmooth function; $\varphi_2$ is a combination of different powers; $\varphi_3$ is an asymmetric function (i.e., $\varphi_3(x)\neq \varphi_3(-x)$). The latter 3 $\varphi_i$'s are pseudoconvex, where $\varphi_4$ has exponents smaller than $1$; while $\varphi_5$ and $\varphi_6$ are inverted Gaussian function with different widths. Pseudoconvex functions are typically used as robust penalty functions\cite{Ochs_CVPR2013} because they penalize outliers less than convex functions. Recall that sum of pseudoconvex functions may not be pseudoconvex, and can have multiple local minima. The graphs of the functions and the gradient\footnote{Here, we abuse the word \textit{gradient} to include subdifferential for nonsmooth convex functions and generalized subdifferential for nonconvex functions~\cite{Hadjisavvas_EncOpt2001}.} are shown in Fig.~\ref{fig:meanMedian}a,b,d,e. We call the problem in~\eqref{eq:findMean} that uses $\varphi=\varphi_\beta$ as $P_\beta$.

\begin{figure}[!t]
\centering
\includegraphics[width=3.45in]{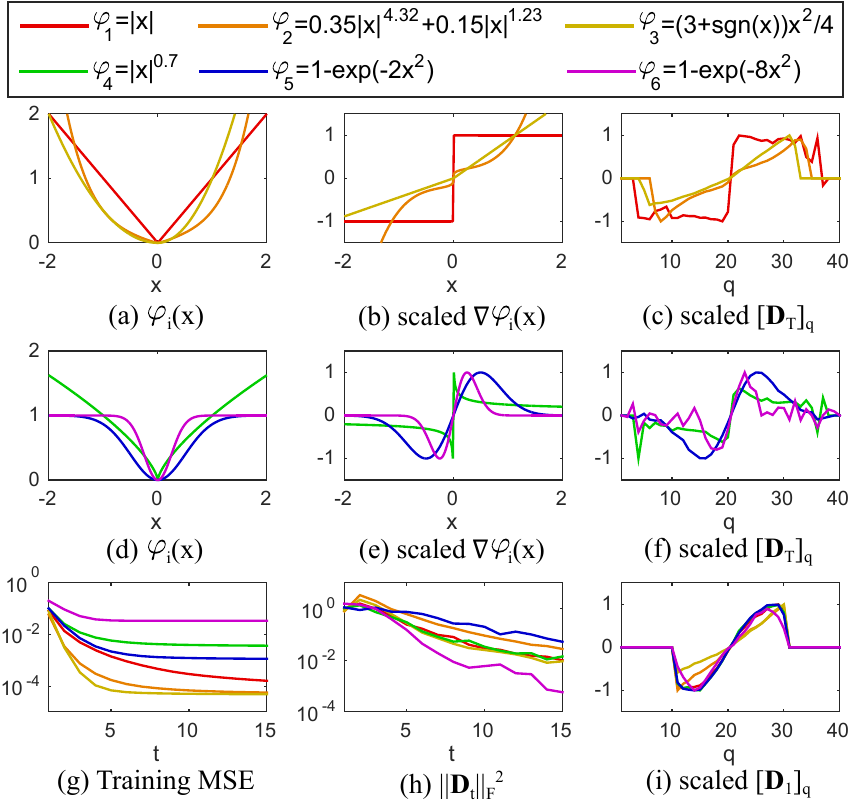}%
\caption{Learning to solve unknown cost functions. (a-c) show three convex functions, their gradients, and the learned $\mathbf{D}_T$ for each function. (d-f) show similar figures for pseudoconvex functions. (g) shows training error in each step $t$. (h) shows the squared norm of the maps $\mathbf{D}_{t}$. (i) shows the first map of each function.}
\label{fig:meanMedian}
\end{figure}

We generate the training data for $P_\beta$ as $X_\beta=\{(X^{(i)}_\beta,\hat{x}^{(i)}_{0,\beta},\hat{x}_{*,\beta}^{(i)})\}_{i=1}^{10000}$ where $X^{(i)}_\beta=\{x_{1,\beta}^{(i)},\dots,x_{J_i,\beta}^{(i)}\}\subset[-1,1]$; $\hat{x}^{(i)}_{0,\beta}=0$ is the initial estimate; and $\hat{x}_{*,\beta}^{(i)}$ is the global minimizer of $P_\beta$ with the data $X^{(i)}_\beta$. To find the minimizers, we use \texttt{fminunc} for convex functions, and grid search with the step size of $0.0001$ for nonconvex functions. We trained the SUMs using the $\mathbf{h}$ in Sec.~\ref{sec:designH}, which in this case is simply
\begin{equation}
\mathbf{h}(\hat{x}) = \frac{1}{J}\sum_{j=1}^J\mathbf{e}_{\gamma(\hat{x}-x_j)}.
\end{equation}
We use $[-2,2]$ as the range of $\hat{x}-x_j$, and discretize it into $r=40$ bins. Let us denote the maps that learn from $X_\beta$ as ${SUM}_\beta$. To illustrate the training error, we train up to 15 maps for each $\beta$, but for test we set the number of maps $T$ to the last map that reduce the training RMSE more than $0.005$. During test, we set $\epsilon=10^{-3}$ and $maxIter=100$.

Fig.~\ref{fig:meanMedian}c,f show the scaled maps $\mathbf{D}_T$ for each $\beta$. We can see that the maps resemble the gradient of their respective functions, suggesting that DO can learn the gradients from training data without explicit access to the cost functions. The reason that ${SUM}_\beta$ learns the gradient is because stationary points need to satisfy $\mathbf{D}_T\mathbf{h}(\hat{x})\approx 0=\sum_j\nabla \varphi(\hat{x}-x_j)$. It should be noted that the first maps for all $\beta$ in Fig.~\ref{fig:meanMedian}i are different from their $T^{th}$ maps. This is because the first maps try to move $\hat{x}_0^{(i)}$ as close to $\hat{x}_*^{(i)}$ as possible and thus disregard the placement of the stationary point. The training errors in Fig.~\ref{fig:meanMedian}g show that convex functions are easier to learn than nonconvex ones. This is because nonconvex functions may have multiple local minima, which means there may not exist an ideal map where all training data are monotone at their solutions, thus $\hat{x}^{(i)}_t$ may get stuck at a wrong stationary point. Fig.~\ref{fig:meanMedian}h shows that the map have decreasing norms, which represents reducing step sizes as the estimates approach the solutions. 

We also perform an experiment on unseen sets of data, where we compare the global minimizer of $P_\beta$ of each test data $X$ against the solution from \texttt{fminunc} (quasi-Newton) of all $P_\omega,\omega=1,\dots,6$ and the solution of ${SUM}_\beta$. Table~\ref{table:meanMedian} show the MAE over 1000 test sets. We can see that DO can approximate the solution better than using incorrect cost functions. An interesting point to note is that DO seems to be able to solve nonconvex problems better than \texttt{fminunc}, suggesting DO can avoid some local minima and more often terminate closer to the global minimum.

We summarize this section in 4 points. \textit{(i)} We show that DO can learn to mimic gradient of unknown penalty functions. \textit{(ii)} A very important point to note is that a single training data can have multiple ground truths, and DO will learn to find the solution based on the ground truths provided during the training. Thus, it is unreasonable to use DO that, say, trained with the mean as ground truth and hope to get the median as a result. \textit{(iii)} A practical implication of this demonstration is that if we optimize a wrong cost function then we may obtain a bad optimum as solution, and it can be more beneficial to obtain training data and learn to solve for the solution directly. \textit{(iv)} We show that for nonconvex problems, DO has the potential to skip local minima and arrive at a better solution than that of \texttt{fminunc}.

\begin{table}
\caption{\label{table:meanMedian}MAE for solving unknown cost functions. Best results in underline bold, and second best in bold.}
\centering
\begin{tabular}{|c|c|c|c|c|c|c|c|c|}
\hline 
\multirow{2}{*}{$P_\beta$} & \multicolumn{6}{c|}{\texttt{fmincon}} & \multirow{2}{*}{$SUM_\beta$}\tabularnewline
\cline{2-7} 
 & $P_1$ & $P_2$ & $P_3$ & $P_4$ & $P_5$ & $P_6$ & \tabularnewline
\hline
$P_1$ & \underline{\textbf{.0000}} & .0675 & .1535 & .0419 & .0707 & .2044 & \textbf{.0137}
\tabularnewline\hline
$P_2$ & .0675 & \underline{\textbf{.0000}} & .1445 & .1080 & .1078 & .2628 & \textbf{.0145}
\tabularnewline\hline
$P_3$ & .1535 & .1445 & \underline{\textbf{.0000}} & .1743 & .1657 & .2900 & \textbf{.0086} 
\tabularnewline\hline
$P_4$ & .0493 & .1009 & .1682 & \textbf{.0457} & .0929 & .1977 & \underline{\textbf{.0325}} 
\tabularnewline\hline
$P_5$ & .0707 & .1078 & .1657 & .0823 & \underline{\textbf{.0000}} & .1736 & \textbf{.0117} 
\tabularnewline\hline
$P_6$ & .2098 & .2515 & .2791 & .1905 & .2022 & \textbf{.1161} & \underline{\textbf{.0698 }}
\tabularnewline\hline

\end{tabular}

\end{table}

\subsection{3D point cloud registration}\label{exp:3Dregis}
In this section, we perform experiments on the task of 3D point cloud registration. The problem can be stated as follows: Let $\mathbf{M}\in\mathbb{R}^{3\times N_M}$ be a matrix containing 3D coordinates of one shape (`model') and $\mathbf{S}\in\mathbb{R}^{3\times N_S}$ for the second shape (`scene'), find the rotation and translation that registers $\mathbf{S}$ to $\mathbf{M}$. Here, we briefly describe our parametrization and experiments. For more details, please see~\cite{JV_CVPR2017}. 

\subsubsection{DO parametrization and training}
We use Lie Algebra~\cite{Hall_Lie04} to parametrize $\mathbf{x}$, which represents rotation and translation, because it provides a linear space with the same dimensions as the degrees of freedom of our parameters. For $\mathbf{h}$, we design it as a histogram that indicates the weights of scene points on the `front' and the `back' sides of each model point  based on its normal vector. Let $\mathbf{n}_a\in\mathbb{R}^3$ be a normal vector of the model point $\mathbf{m}_a$ computed from its neighbors; $\mathcal{T}(\mathbf{y};\mathbf{x})$ be a function that applies rigid transformation with parameter $\mathbf{x}$ to vector $\mathbf{y}$; $S_{a}^{+}=\{\mathbf{s}_b:\mathbf{n}_a^{\top}(\mathcal{T}(\mathbf{s}_{b};\mathbf{x})-\mathbf{m}_a)>0\}$ be the set of scene points on the `front' of $\mathbf{m}_a$; and $S_{a}^{-}$ contains the remaining scene points. We define $\mathbf{h}:\mathbb{R}^6\times\mathbb{R}^{3\times N_S}\rightarrow\mathbb{R}^{2N_M}$ as:
\begin{equation}\label{eq:feature1}
\small
[\mathbf{h}(\mathbf{x};\mathbf{S})]_{a}=\frac{1}{z}\sum_{\mathbf{s}_b\in S_{a}^{+}}\exp\left(\frac{1}{\sigma^2}\Vert\mathcal{T}(\mathbf{s}_b;\mathbf{x})-\mathbf{m}_a\Vert^2\right),
\end{equation}
\begin{equation}\label{eq:feature2}
\small
[\mathbf{h}(\mathbf{x};\mathbf{S})]_{a+N_M}=\frac{1}{z}\sum_{\mathbf{s}_b\in S_{a}^{-}}\exp\left(\frac{1}{\sigma^2}\Vert\mathcal{T}(\mathbf{s}_b;\mathbf{x})-\mathbf{m}_a\Vert^2\right),
\end{equation}
where $z$ normalizes $\mathbf{h}$ to sum to $1$, and $\sigma$ controls the width of the $\exp$ function. $\mathbf{h}$ can be precomputed (see~\cite{JV_CVPR2017}).

Given a model shape $\mathbf{M}$, we first normalized the data to lie in $[-1,1]$, and generated the scene models as training data by uniformly sampling with replacement 400 to 700 points from $\mathbf{M}$. Then, we applied the following perturbations:  \emph{(i) Rotation and translation:} We randomly rotated the model within 85 degrees, and added a random translation in $[-0.3,0.3]^3$. These transformations were used as the ground truth $\mathbf{x}_*$, with $\mathbf{x}_0=\mathbf{0}_6$ as the initialization. \emph{(ii) Noise and outliers:} Gaussian noise with standard deviation $0.05$ was added to the sample. Then we added two types of outliers: sparse outliers (random 0 to 300 points within $[-1,1]^3$); and structured outliers (a Gaussian ball of 0 to 200 points with the standard deviation of 0.1 to 0.25). Structured outliers is used to mimic other dense object in the scene. \emph{(iii) Incomplete shape:} We used this perturbation to simulate self occlusion and occlusion by other objects. This was done by uniformly sampling a 3D unit vector $\mathbf{u}$, then projecting all sample points to $\mathbf{u}$, and removed the points with the top 40\% to 80\% of the projected values. For all experiments, we generated 30000 training samples, and trained a total of $K=30$ maps for SUM with $\lambda=3\times 10^{-4}$ in~\eqref{eq:learningReg} and $\sigma^2=0.03$ in~\eqref{eq:feature1} and~\eqref{eq:feature2}, and set the maximum number of iterations to 1000.

\subsubsection{Baselines and evaluation metrics}
We compared DO with two point-based approaches (ICP~\cite{ICP_PAMI92} and IRLS~\cite{IRLS_COA2014}) and two density-based approaches (CPD~\cite{CPD_PAMI10} and GMMReg~\cite{GMM_PAMI11}). The codes for all methods were downloaded from the authors' websites, except for ICP where we used MATLAB's implementation. For IRLS, the Huber cost function was used. 

We used the registration success rate and the computation time as performance metrics. We considered a registration to be successful when the mean $\ell_2$ error between the registered model points and the corresponding model points at the ground truth orientation was less than $0.05$ of the model's largest dimension.

\subsubsection{Synthetic data}

We performed synthetic experiments using the Stanford Bunny model~\cite{StanfordBunny} (see Fig.~\ref{fig:3DSynthResult}). We used MATLAB's \texttt{pcdownsample} to select 472 points from 36k points as the model $\mathbf{M}$. We evaluated the performance of the algorithms by varying five types of perturbations:
\textit{(i)} the number of scene points ranges from 100\textasciitilde4000 [default = 200\textasciitilde600]; \textit{(ii)} the standard deviation of the noise ranges between 0\textasciitilde0.1 [default = 0]; \textit{(iii)} the initial angle from 0 to 180 degrees [default = 0\textasciitilde60]; \textit{(iv)} the number of outliers from 0\textasciitilde600 [default = 0]; and \textit{(v)} the ratio of incomplete scene shape from 0\textasciitilde0.7 [default = 0]. While we perturbed one variable, the values of the other variables were set to the default values. 
Note that the scene points were sampled from the original 36k points, not from $\mathbf{M}$. All generated scenes included random translation within $[-0.3,0.3]^3$. A total of 50 rounds were run for each variable setting. Training time for DO was 236 seconds (incl. training data generation and precomputing features).

Examples of test data and the results are shown in Fig.~\ref{fig:3DSynthResult}. ICP required low computation time for all cases, but it had low success rates because it tends to get trapped in the local minimum closest to its initialization. CPD generally performed well except when number of outliers was high, and it required a high computation time. IRLS was faster than CPD, but it did not perform well with incomplete targets. GMMReg had the widest basin of convergence but did not perform well with incomplete targets, and it required long computation time for the annealing steps. For DO, its computation time was much lower than those of the baselines. Notice that DO required higher computation time for larger initial angles since more iterations were required to reach a stationary point. In terms of the success rate, we can see that DO outperformed the baselines in almost all test scenarios. This result was achievable because DO does not rely on any specific cost functions, which generally are modelled to handle a few types of perturbations. On the other hand, DO \emph{learns} to cope with the perturbations from training data, allowing it to be significantly more robust than other approaches.

\begin{figure*}[!t]
\centering
\includegraphics[width=6.8in]{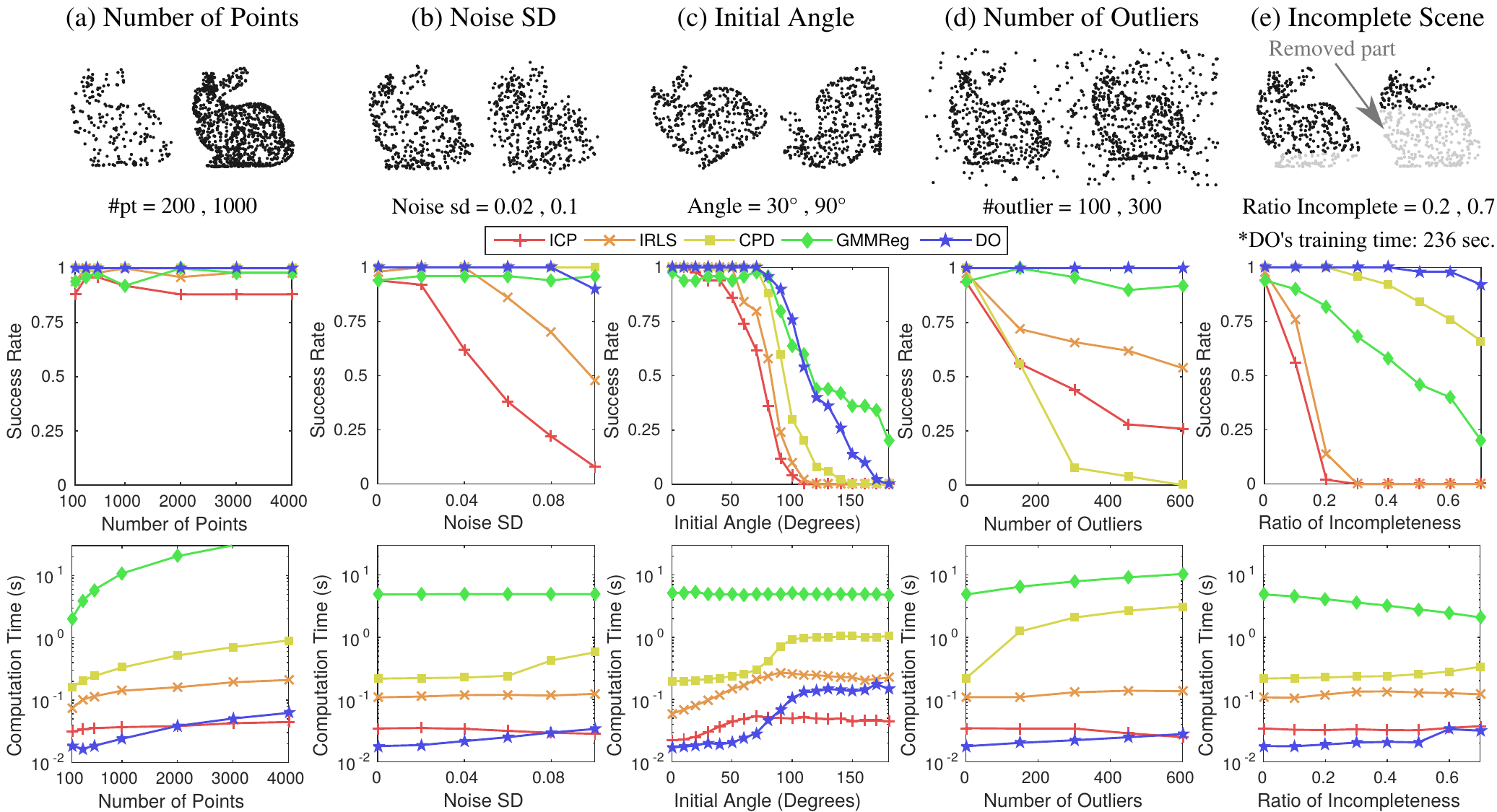}%
\caption{Results of 3D registration with synthetic data under different perturbations. (Left) Examples of scene points with different perturbations. (Middle) Success rate. (Right) Computation time.}
\label{fig:3DSynthResult}
\end{figure*}

\begin{figure}[!t]
\centering
\includegraphics[width=3.4in]{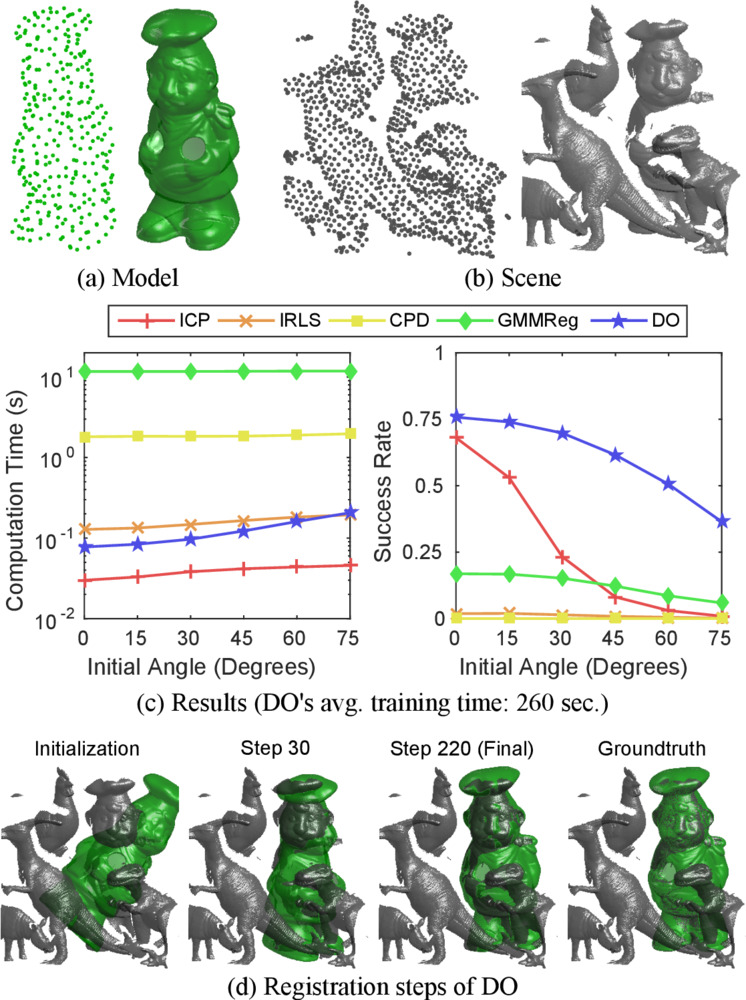}%
\caption{Results of 3D registration with range scan data. (a) example 3D model (`chef'). (b) Example of a 3D scene. We include surface rendering for visualization purpose. (c) Results of the experiment. (d) shows an example of registration steps of DO. The model was initialized 60 degrees from the ground truth orientation with parts of the model intersecting other objects. In addition, the target object is under 70\% occlusion, making this a very challenging case. However, as iteration progresses, DO is able to successfully register the model.}
\label{fig:3DRealResult}
\end{figure}

\subsubsection{Range-scan data}\label{sec:rangeScanExp}
In this section, we performed 3D registration experiment on the UWA dataset~\cite{Mian_IJCV2010}. This dataset contains 50 cluttered scenes with 5 objects taken with the Minolta Vivid 910 scanner in various configurations. All objects are heavily occluded (60\% to 90\%). We used this dataset to test our algorithm under unseen test samples and structured outliers, as opposed to sparse outliers in the previous section. The dataset includes 188 ground truth poses for four objects. We performed the test using all the four objects on all 50 scenes. From the original model, $\sim$300 points were sampled by \texttt{pcdownsample} to use as $\mathbf{M}$ (Fig.~\ref{fig:3DRealResult}a). We also downsampled each scene to $\sim$1000 points (Fig.~\ref{fig:3DRealResult}b). We initialized the model from 0 to 75 degrees from the ground truth orientation with random translation within $[-0.4,0.4]^3$. We ran 50 initializations for each parameter setting, resulting in a total of $50\times 188$ rounds for each data point. Here, we set the inlier ratio of ICP to $50\%$ as an estimate for self-occlusion. Average training time for DO was 260 seconds for each object model.

The results and examples for the registration with DO are shown in Fig.~\ref{fig:3DRealResult}c and Fig.~\ref{fig:3DRealResult}d, respectively. IRLS, CPR, and GMMReg has very low success in almost every scene. This was because structured outliers caused many regions to have high density, creating false optima for CPD and GMMReg which are density-based approaches, and also for IRLS which is less sensitive to local minima than ICP. When initialized close to the solution, ICP could register fast and provided some correct results because it typically terminated at the nearest--and correct--local minimum. On the other hand, DO provided a significant improvement over ICP, while maintaining low computation time. We emphasize that DO was trained with synthetic examples of a single object and it had never seen other objects from the scenes. This experiment shows that we can train DO with synthetic data, and apply it to register objects in real challenging scenes.

\begin{figure}
\centering
\includegraphics[width=3.4in]{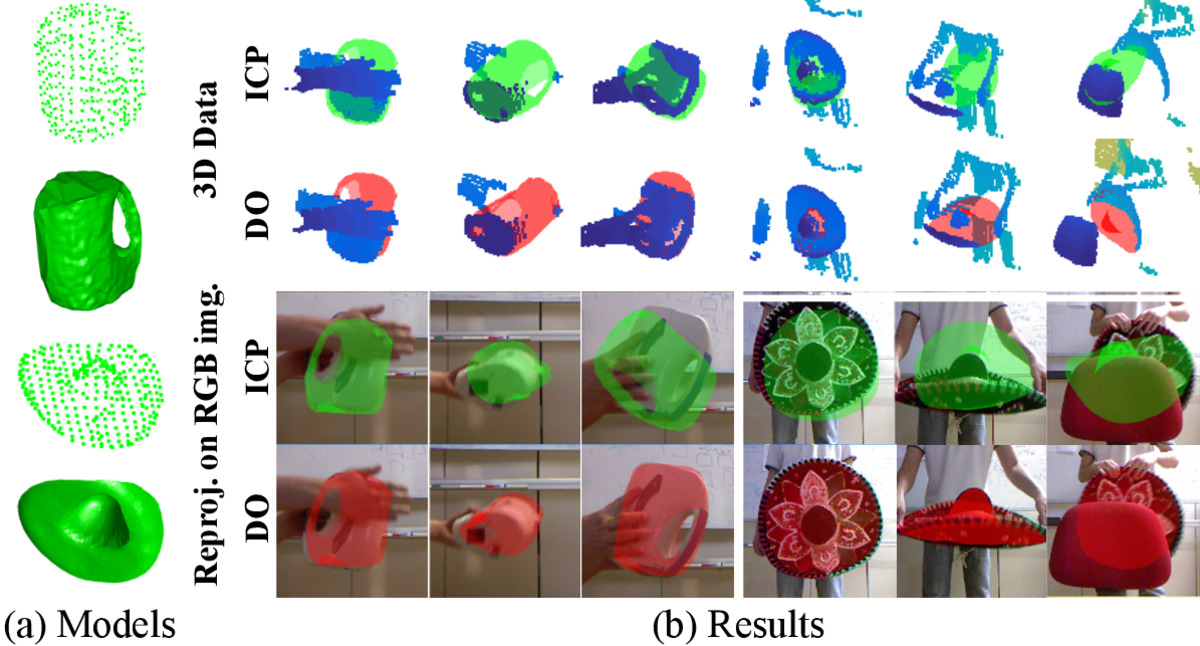}%
\caption{Result for object tracking in 3D point cloud. (a) shows the 3D models of the kettle and the hat. (b) shows tracking results of DO and ICP in (top) 3D point clouds with the scene points in blue, and (bottom) as reprojection on RGB image. Each column shows the same frame.}
\label{fig:3DTrackResult}
\end{figure}

\subsubsection{Application to 3D object tracking}
In this section, we explore the use of DO for 3D object tracking in 3D point clouds. We used Microsoft Kinect to capture RGBD videos at 20fps, then reconstruct 3D scenes from the depth images. We used two reconstructed shapes, a kettle and a hat, as the target objects. These two shapes present several challenges besides self occlusion : the kettle has a smooth surface with few features, while the hat is flat, making it hard to capture from some views. We recorded the objects moving through different orientations, occlusions, etc. The depth images were subsampled to reduce computation load. To perform tracking, we manually initialized the first frame, while subsequent frames were initialized using the pose in the previous frames. Here, we only compared DO against ICP because IRLS gave similar results to those of ICP but could not track rotation well, while CPD and GMMReg failed to handle structured outliers in the scene (similar to Sec.~\ref{sec:rangeScanExp}). Fig.~\ref{fig:3DTrackResult}b shows examples of the results. It can be seen that DO can robustly track and estimate the pose of the objects accurately even under heavy occlusion and structured outliers, while ICP tended to get stuck with other objects. The average computation time for DO was 40ms per frame. This shows that DO can be used as a robust real-time object tracker in 3D point cloud.

\emph{Failure case}: We found DO failed to track the target object when the object was occluded at an extremely high rate, and when the object moved too fast. When this happened, DO would either track another nearby object or simply stay at the same position as in the previous frame.

\subsection{Camera Pose Estimation}
The goal of camera pose estimation is to estimate the relative pose between a given 3D and 2D correspondence set. Given $\{(\mathbf{p}_j,\mathbf{s}_j)\}_{j=1}^J\subset\mathbb{R}^2\times\mathbb{R}^3$ where $\mathbf{p}_j$ is 2D image coordinate and $\mathbf{s}_j$ is the corresponding 3D coordinate of feature $j$, we are interested in estimating the rotation matrix $\mathbf{R}\in SO(3)$ and translation vector $\mathbf{t}\in\mathbb{R}^3$, such that
$$
\tilde{\mathbf{p}}_j\equiv\mathbf{K}\left[\begin{array}{cc}
\mathbf{R} & \mathbf{t}\end{array}\right]\tilde{\mathbf{s}}_{j},j=1,\dots,J,
$$
where tilde denotes homogeneous coordinate, $\mathbf{K}\in\mathbb{R}^{3\times 3}$ is a known intrinsic matrix, and $\equiv$ denotes equivalence up to scale. General approaches for camera pose estimation involve solving nonlinear problems~\cite{Lepetit_IJCV2008,Zheng_ICCV2013,Kneip_CVPR2011,Li_PAMI2012}. Most of existing approaches assume that there are no outlier matches in the correspondence set. When outliers are present, they rely on RANSAC~\cite{Fischler_RANSAC1981} to select the inliers. One approach that does not rely on RANSAC is REPPnP~\cite{Ferraz_CVPR2014}. It finds the camera pose by solving for the robust nullspace of a matrix which represents algebraic projection error. In this section, we will use DO to find a set of inliers, then postprocess the inliers to obtain the camera pose. We show that our algorithm is more robust than REPPnP while being faster than RANSAC-based approaches.

\begin{figure*}[t]
\centering
\includegraphics[width=6.86in]{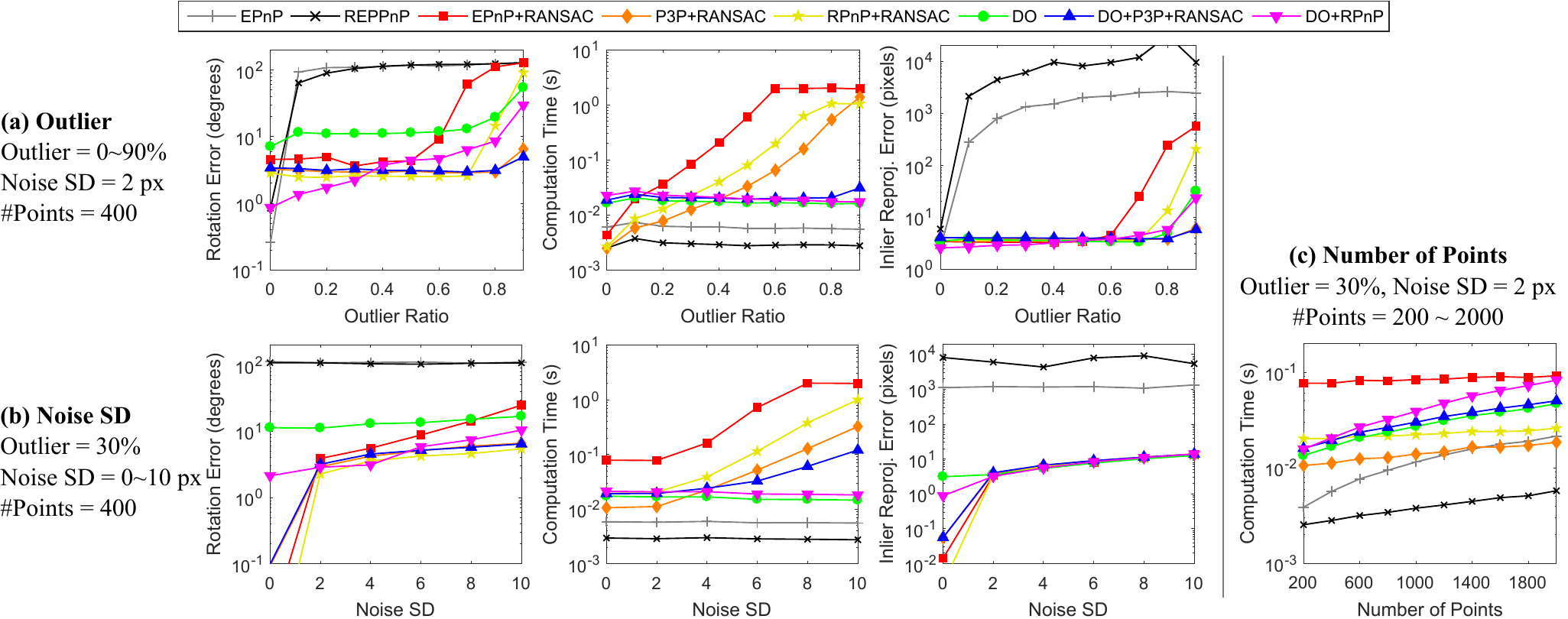}%
\caption{Results for PnP with synthetic data. Varying parameters are (a) outlier ratio, (b) noise SD, and (c) number of points.}
\label{fig:pnpResultSynth}
\end{figure*}

\begin{figure}
\centering
\includegraphics[width=3.4in]{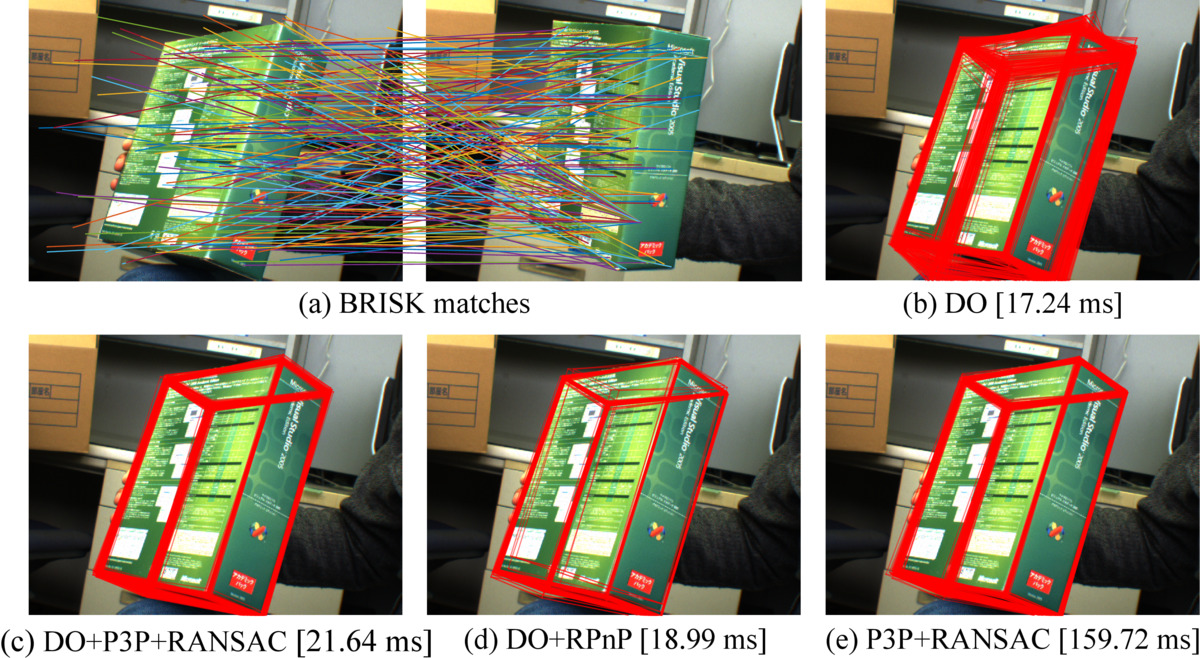}%
\caption{Results for camera pose estimation on real image. (a) Feature matches. Left and right images contain 2D points and projection of 3D points,~\emph{resp}. (b-d) Projected shape with average time over 100 trials.}
\label{fig:pnpResultReal}
\end{figure}

\subsubsection{DO parametrization and training}
To obtain the set of inliers, we solve for a matrix $\mathbf{X}=[\mathbf{x}_1,\mathbf{x}_2, \mathbf{x}_3]^\top\in\mathbb{R}^{3 \times 4}$ such that the geometric error~\cite{MultiviewGeom_2003} is zero (assuming $\mathbf{p}_j$ is calibrated): 
\begin{equation}
\mathbf{g}_j(\mathbf{X}) = 
\mathbf{p}_j-
\left[
\begin{array}{c}
\mathbf{x}_1^\top\tilde{\mathbf{s}}_j/
\mathbf{x}_3^\top\tilde{\mathbf{s}}_j\\
\mathbf{x}_2^\top\tilde{\mathbf{s}}_j/
\mathbf{x}_3^\top\tilde{\mathbf{s}}_j
\end{array}
\right]
=0
,j=1,\dots,J.
\end{equation}
The optimization for solving $\mathbf{X}$ is formulated by summing the error over the correspondences:
\begin{equation}
\minimize_X \frac{1}{J}\sum_{j=1}^J \varphi(\mathbf{g}_j(\mathbf{X})),
\end{equation}
where $\varphi$ is a penalty function. Following the derivation in Sec.~\ref{sec:designH}, the $\mathbf{h}$ function can be derived as:
\begin{align}
\mathbf{h}(\mathbf{X}) & = \frac{1}{J}\sum_{j=1}^J \bigoplus_{l=1}^{12} \bigoplus_{k=1}^2 \left[\frac{\partial \mathbf{g}_j(\mathbf{X})}{\partial vec(\mathbf{X})}\right]_{lk}\bigotimes_{\alpha=1}^2\mathbf{e}_{\delta([\mathbf{g}_j(\mathbf{X})]_{\alpha})}.\label{eq:featPnP}
\end{align}
After computing~\ref{eq:featPnP}, we normalize it to a unit vector and use it our feature. Note that, although the Jacobian matrix of $\mathbf{g}_j$ is a $12 \times 2$ matrix, it has only 12 degrees of freedom. Thus, we need to consider only its 12 values instead of all 24. 

We generated DO's training data as follows. Each image was assumed to be 640 by 480 pixels. \textit{Generating 3D shapes}: A 3D shape, composing of 100 to 500 points, was generated as random points in one of the following shapes: \textit{(i)} in a box; \textit{(ii)} on a spherical surface; and \textit{(iii)} on multiple planes. For \textit{(iii)}, we randomly generated normal and shift vectors for 2 to 4 planes, then added points to them. All shapes were randomly rotated, then normalized to fit in $[-1,1]^3$. \textit{Generating camera matrix}: We randomized the focal length in $[600,1000]$ with the principal point at the center of the image. We sampled the rotation matrix from $SO(3)$, while the translation was generated such that the projected 3D points lie in the image boundary. \textit{Generating image points}: We first projected the 3D shape using the generated camera parameters, then randomly selected 0\% to 80\% of the image points as outliers by changing their coordinates to random locations. All random numbers were uniformly sampled. No noise is added for the training samples. To reduce the effect of varying sizes of images and 3D points, we normalized the inputs to lie in $[-0.5,0.5]$.\footnote{Camera matrix needs to be transformed accordingly, similar to~\cite{hartley_PAMI1997}.} Since the camera matrix is homogeneous, we normalize it to have a unit Frobenius norm. We use $[-1,1]$ as the range for each dimension of $\mathbf{g}_j$, and discretize it to 10 bins. We generated 50000 training samples, and trained 30 maps with $\lambda=10^{-4}$. The training time was 252 seconds.

We compared 3 DO-based approaches: \textit{DO}, \textit{DO+P3P+RANSAC}, and \textit{DO+RPnP}. When DO returned $\mathbf{x}$ as result,  we transformed it back to a $3\times 4$ matrix $\mathbf{M}$, then projected the first three columns to obtain the rotation matrix. For DO+P3P+RANSAC and DO+RPnP, we used $\mathbf{M}$ from DO to select matches with small projection errord as inliers, then calculated the camera parameters using P3P+RANSAC~\cite{Kneip_CVPR2011} and RPnP~\cite{Li_PAMI2012} (without RANSAC). 
% [TODO] The threshold for inliers was set to 32 pixels. 

\subsubsection{Baselines and evaluation metrics}\label{sec:PnPbaseline}
We compared our approach against 5 baselines. EPnP~\cite{Lepetit_IJCV2008} and REPPnP~\cite{Ferraz_CVPR2014} are deterministic approaches. The other three baselines, P3P+RANSAC~\cite{Kneip_CVPR2011}, RPnP+RANSAC~\cite{Li_PAMI2012}, and EPnP+RANSAC~\cite{Lepetit_IJCV2008} rely on RANSAC to select inliers and use the respective PnP algorithms to find the camera parameters. The minimum number of matches for each algorithm is 3, 4, and 6, \textit{resp}. We use the code from~\cite{Ferraz_CVPR2014} as implementation of the PnP algorithms. The RANSAC routine automatically determines the number of iterations to guarantee 99\% chance of obtaining the inlier set. The performance are measured in terms of \textit{(i)} mean computation time, \textit{(ii)} mean rotation angle error, and \textit{(iii)} mean inlier reprojection error. 

\subsubsection{Experiments and results}
We first performed experiments using synthetic data. We generated the test data using the same approach as training samples. We vary 3 parameters: \textit{(i)} the number of points from 200\textasciitilde2000 [default = 400]; \textit{(ii)} the ratio of outliers from 0\%\textasciitilde90\% [default = 30\%]; and \textit{(iii)} the noise standard deviation from 0\textasciitilde10 pixels [default = 2]. When one parameter is varied, the other two parameters were set to the default values. We performed a total of 500 trials for each setting. 

Fig.~\ref{fig:pnpResultSynth} shows the results of the experiments. In Fig.~\ref{fig:pnpResultSynth}a, we can see that RANSAC-based approaches could obtain accurate results, but their computation time grows exponentially with the outlier ratio. On the other hand, EPnP and REPPnP which are deterministic performed very fast, but they are not robust against outliers even at 10\%. For our approaches, it can be seen that DO alone did not obtain good rotations since it did not enforce any geometric constraints. However, DO could accurately align the 3D inlier points to their image points as can be seen by its low inlier reprojection errors. This is a good indication that DO can be used for identifying inliers. By using this strategy, DO+P3P+RANSAC could obtain accurate rotation up to 80\% of outliers while maintaining low computation time. In contrast, DO+RPnP could obtain very accurate rotation when there were small outliers, but the error increases as it was easier to mistakenly include outliers in the post-DO step. For the noise case (Fig.~\ref{fig:pnpResultSynth}b), DO+RPnP has constant time for all noise levels and could comparatively obtain good rotations under all noise levels, while DO+P3P+RANSAC required exponentially increasing time as points with very high noise may be considered as outliers. Finally, in Fig.~\ref{fig:pnpResultSynth}c, we can see that computation times of all approaches grow linearly with the number of points, but those of DO approaches grow with faster rate, which is a downside of our approach.

Next, we performed experiments on real images. We used an image provided with the code in~\cite{Zheng_ICCV2013}. Fig.~\ref{fig:pnpResultReal}a shows the input matches. Notice that the matches are not one-to-one. Although DO is a deterministic algorithm, different configurations of the same 3D shape can affect the result. For example, we might consider either a 3D shape or its 90$^{\circ}$ rotated shape as the initial configuration with the identity transformation. To measure this effect, we performed 100 trials for DO-based algorithms, where we randomly rotate the 3D shape as the initial configuration. Similarly, we performed 100 trials for P3P+RANSAC. ~\ref{fig:pnpResultReal}b-e show the results. It can be seen that DO can gives a rough estimate of the camera pose, then DO+P3P+RANSAC and DO+RPnP can postprocess to obtain accurate pose. P3P+RANSAC also obtained the correct pose, but it required 8 times the computation time of DO-based approaches. (More results provided in the appendex.)

Since DO is a learning-based approach, the main limitation of DO is that it may not work well with data that are not represented in training, e.g., when the depths and perturbations of training data and test data are different. It is not simple to generate training data to cover all possible cases. On the other hand, PnP solvers of RANSAC-based approaches can reliably obtain the correct pose since they directly solve the geometric problem.

\subsection{Image Denoising}
In this final experiment, we demonstrate the potential of DO for image denoising. This experiments serves to illustrate the potential of DO in multiple ways. First, we illustrate that an SUM trained in a simple fashion can compare favorably against state-of-the-art total variation (TV) denoising algorithms for impulse noises. Second, we show that a SUM can be used to estimate a large and variable number of parameters (number of pixels in this case). This differs from previous experiments that used DO to estimate a small, fixed number of parameters. Third, we show that it is simple for DO to incorporate additional information, such as intensity mask, during both training and testing. Finally, we demonstrate the effect of training data on the results.

\subsubsection{DO parametrization and training}
We based our design of $\mathbf{h}$ on the TV denoising model~\cite{Chan_TV2006}, where we replace the penalty functions on both the data fidelity term and the regularization term with unknown functions $\varphi_1$ and $\varphi_2$:
\begin{align}
\minimize_{\{x_i\}} & \sum_{i\in\Omega} \left(m_i\varphi_1(x_i-u_i) + \sum_{j\in\mathcal{N}(i)} \varphi_2(x_i-x_j)\right),
\end{align}
where $\Omega$ is the image support, $u_i\in[0,1]$ is the intensity at pixel $i$ of the noisy input image, $m_i\in\{0,1\}$ is a given mask, and  $\mathcal{N}(i)$ is the set of neighboring pixels of $i$. The goal is to estimate the clean image $\{x_i\}$. 

In order to allow the learned SUM to work with images of different size, we will treat each pixel $i$ independently: Each pixel will have its own estimate $x^i$.\footnote{The idea is similar to parameter sharing in deep neural network.} Since we have two error terms, we follow Sec.~\ref{sec:designH} and concatenate the indicator of the two errors to form $\mathbf{h}$ as 
\begin{equation}
\mathbf{h}(x_i) = \left[m_i\mathbf{e}_{\gamma(x_i-u_i)}^\top,\sum_{j\in\mathcal{N}(i)}\mathbf{e}_{\gamma(x_i-x_j)}^\top\right]^\top.
\end{equation}
The first part of $\mathbf{h}$ accounts for the data fidelity term, while the second part accounts for the regularization term.

In order to train DO, we randomly sample $1000$ patches of size $40\times40$ to $80\times80$ from the training image, then randomly replace $0\%$ to $80\%$ of the pixels with impulse noise to create noisy images. We trained 3 SUMs: \textit{(i)} \textit{DO-SP}, where we used salt-pepper (SP) impulse noise in $\{0,1\}$; \textit{(ii)} \textit{DO-RV}, where we used random-value (RV) impulse noise in $[0,1]$; and \textit{(iii)} \textit{DO-SPRV}, where 50\% of the images has RV noise, while the rest have SP noise.  This is to study the effect of training data on learned SUMs. Following~\cite{Yuan_CVPR2015}, for images with SP noise, we set the mask $m_i=0$ for pixels with intensity $0$ and $1$ and $m_i=1$ for others. For images with RV noise, we set $m_i=1$ for all pixels as we cannot determine whether a pixel is an impulse noise or not. The intensity of each pixel in the noisy image is treated as initial estimate $x_{0}$, and $x_*$ is its noise-free counterpart. We use $[-2,2]$ as the ranges for both $x_i-u_i$ and $x_i-x_j$, and discretize them to 100 bins. We train a total of 30 maps for DO with $\lambda=10^{-2}$. The training time took on average 367 seconds. During test, we use $maxIter=200$ as the stopping criteria.

\begin{figure}
\centering
\includegraphics[width=3.2in]{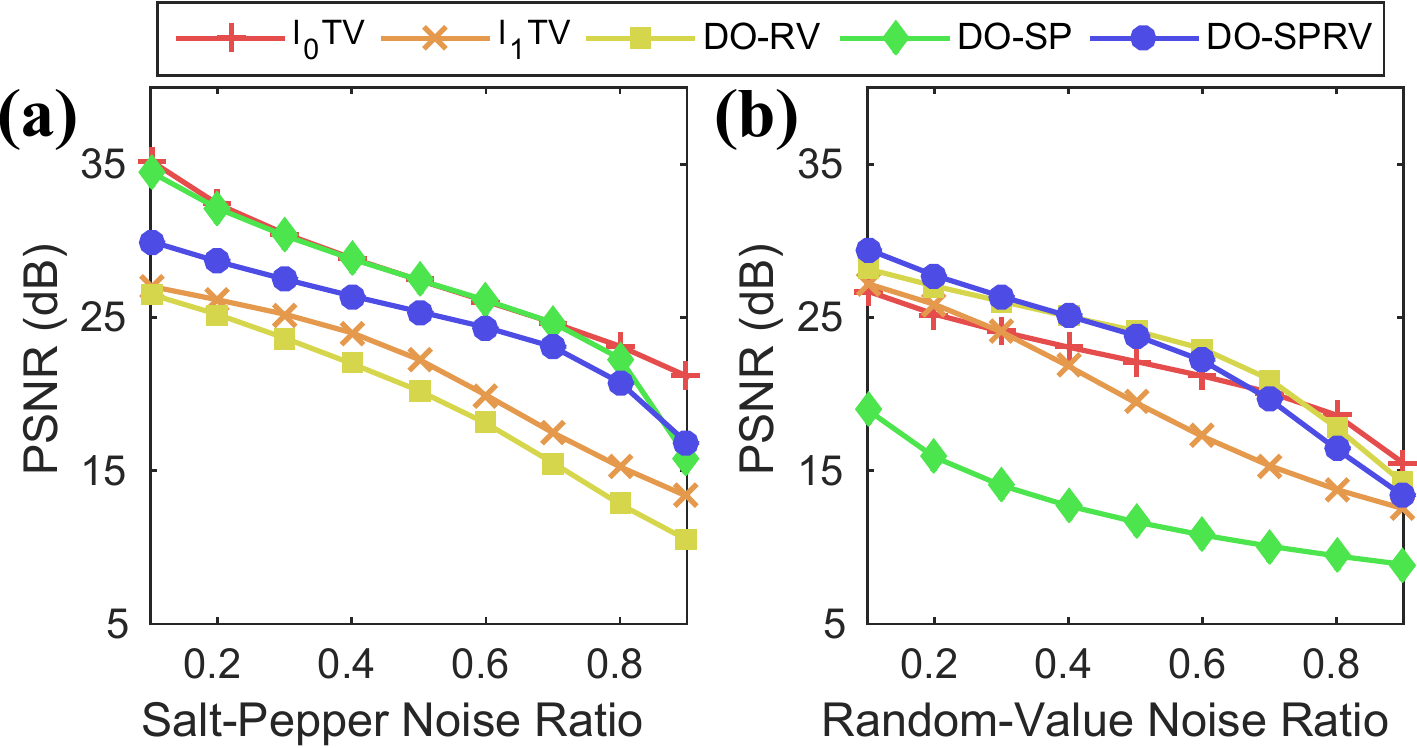}%
\caption{Results for image denoising for (a) salt-pepper impulse noise, and (b) random-value impulse noise.}
\label{fig:exp_denoise_result}
\end{figure}

\subsubsection{Baseline and evaluation metrics}
We compared our approach with two total variation (TV) denoising algorithms which are suitable for impulse noise. The first baseline is the convex $\ell_1TV$~\cite{Chambolle_l1tv2009}, which uses $\ell_1$ for the data fidelity term and isotropic TV as the regularization term. The optimization is solved by the ADMM algorithm. The second baseline is $\ell_0TV$~\cite{Yuan_CVPR2015}, which uses the nonconvex $\ell_0$ for the data term and isotropic TV for the regularization term. The optimization is solved by the Proximal ADMM algorithm. The codes of both algorithms are provided in the toolbox of~\cite{Yuan_CVPR2015}. We used the same mask $m_i$ as in the DO algorithms. We compare the results in terms of Peak Signal-to-Noise Ratio (PSNR). 

\begin{figure*}[!t]
\centering
\includegraphics[width=7in]{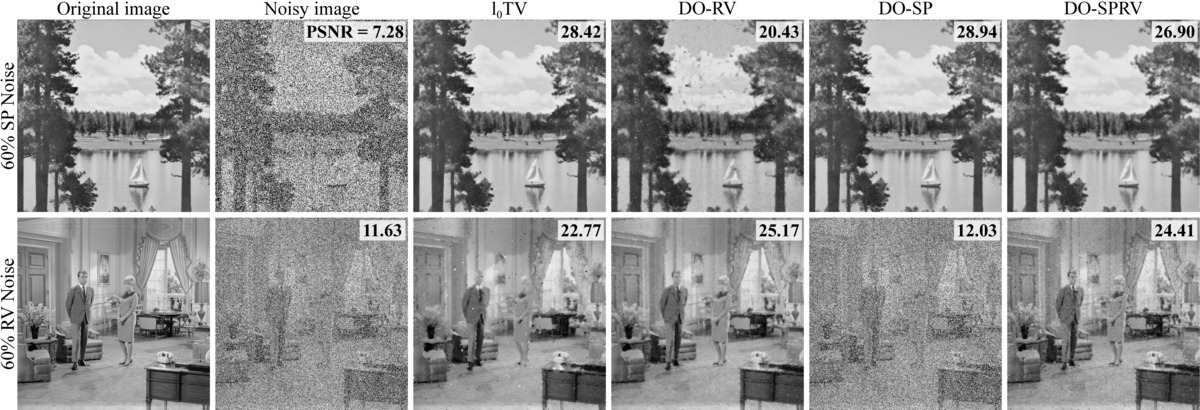}%
\caption{Examples of images denoising results for (top) salt-pepper impulse noise, and (bottom) random-value impulse noise. The PSNR for each image is shown on the top-right. (Best viewed electronically).}
\label{fig:exp_denoise_example}
\end{figure*}

\subsubsection{Experiments and results}

We downloaded 96 grayscale images of size $512\times512$ pixels from the Image Database\footnote{http://decsai.ugr.es/cvg/dbimagenes/g512.php} of University of Granada's Computer Vision Group. 
The first 30 images were used for training DO and selecting the best hyperparameters for the baselines and noise types, while the remaining 66 images were used for evaluation.
For each image, we add impulse noise of $10\%$ to $90\%$ to measure the algorithm robustness.

Fig.~\ref{fig:exp_denoise_result} show the result PNSR over different noise ratios. It can be seen that DO trained with the correct noise type can match or outperform state-of-the-art algorithms, while using a wrong DO give a very bad result. Interestingly, DO-SPRV which was trained with both noise performed well for both cases. Fig.~\ref{fig:exp_denoise_example} shows examples of denoising results of each algorithm ($\ell_1TV$ omitted for clarity of other approaches). For SP noise, $\ell_0TV$, DO-SP, and DO-SPRV can recover small details, while $\ell_1TV$ oversmoothed the image and DO-SP returned an image with smudges. For RV noise, DO-RV returned the best result. DO-SPRV also returned an acceptable image but still contain intensity clumps, while DO-SP cannot recover the image at all. On the other hand, both baselines oversmoothed the image (notice the persons' heads) and still have intensity clumps over the images. This experiment shows that DO can robustly handle different types of impulse noises, and that the training data have a strong effect on types and amount of noise that it can handle. The best approach for solving the problem is to select the correctly trained model. Still, training DO with both noise can return a good result, illustrating the potential of DO in solving a hard problem.

\section{Conclusion and Discussions}
We presented Discriminative Optimization (DO) as a new framework for solving computer vision problem. DO learns a sequence of update maps that update a set of parameters to a stationary point from training data. We study the relationship between DO and optimization-based and learning-based algorithms. A theoretical result on the convergence of the training error is given. We also proposed a framework for designing features for DO based on gradient descent, and provide a theoretical result that DO can learn to mimic gradient descent to solve some optimization problems with unknown cost functions. The result is supported by synthetic experiment that the maps learn the approximation of the gradients. In terms of applications, we show that DO can provide favorable results against state-of-the-art algorithms in the problems of 3D point cloud registration and tracking, camera pose estimation, and image denoising. This also shows that DO can deal with both ordered and unordered data. 

Although algorithms similar to DO have been proposed previously, this paper opens the connection between DO and optimization. Future work may import ideas and intuition from optimization to DO, such as the incorporation of constraints and momentum methods. Also, our theoretical result provides only sufficient conditions, and monotonicity-at-a-point conditions suggest relationship to the variation inequality problems~\cite{Nagurney_VarIneq1999}, which can be considered as a generalization of optimization and which might explain the robustness of DO. We also observe that DO's update rule can be compared with the layer in deep residual network~\cite{he_cvpr2016}, since they both iteratively perform update of the form $\mathbf{x}+T\mathbf{x}$ for some transformation $T$. This can provide some connection to some deep learning algorithms. Future research may also address convergence in the test data or other approaches for designing feature function $\mathbf{h}$. With a strong theoretical foundation and practical potential, we believe DO opens a new exciting research area which would have strong impact to computer vision. 

\bibliographystyle{IEEEtran}
%% argument is your BibTeX string definitions and bibliography database(s)
\bibliography{SDM_PAMI}
%%
%% <OR> manually copy in the resultant .bbl file
%% set second argument of \begin to the number of references
%% (used to reserve space for the reference number labels box)
%%\begin{thebibliography}{1}
%
%\bibitem{IEEEhowto:kopka}
%H.~Kopka and P.~W. Daly, \emph{A Guide to \LaTeX}, 3rd~ed.\hskip 1em plus
%  0.5em minus 0.4em\relax Harlow, England: Addison-Wesley, 1999.
%
%\end{thebibliography}
%
%% biography section
%% 
%% If you have an EPS/PDF photo (graphicx package needed) extra braces are
%% needed around the contents of the optional argument to biography to prevent
%% the LaTeX parser from getting confused when it sees the complicated
%% \includegraphics command within an optional argument. (You could create
%% your own custom macro containing the \includegraphics command to make things
%% simpler here.)
%%\begin{IEEEbiography}[{\includegraphics[width=1in,height=1.25in,clip,keepaspectratio]{mshell}}]{Michael Shell}
%% or if you just want to reserve a space for a photo:
%
\appendices
\setcounter{theorem}{0}
\section{Proofs for Theoretical Results}\label{sec:proof}
In this section, we provide the proofs for the theoretical results in the main DO paper.
\subsection{Proof of the Thm.~\ref{thm:trainErr}}

\begin{theorem}
{\bf (Convergence of SUM's training error)} Given a training set $\{(\mathbf{x}^{(i)}_0,\mathbf{x}_*^{(i)},\mathbf{h}^{(i)})\}_{i=1}^N$, if there exists a linear map $\mathbf{\hat{D}}\in\mathbb{R}^{p\times f}$
where $\hat{\mathbf{D}}\mathbf{h}^{(i)}$ is strictly monotone at $\mathbf{x}_*^{(i)}$ for all $i$, and if there exists an $i$ where $\mathbf{x}^{(i)}_t\neq\mathbf{x}_*^{(i)}$, then the update rule:
\begin{equation}
\mathbf{x}^{(i)}_{t+1}=\mathbf{x}^{(i)}_{t}-
\mathbf{D}_{t+1}\mathbf{h}^{(i)}(\mathbf{x}^{(i)}_{t}),
\end{equation}
with $\mathbf{D}_{t+1}\subset\mathbb{R}^{p\times f}$
obtained from~(3) in the main paper, guarantees that the training error strictly decreases in each iteration:
\begin{equation}
\sum_{i=1}^{N}
\Vert\mathbf{x}_*^{(i)}-\mathbf{x}^{(i)}_{t+1}\Vert_2^{2}
<
\sum_{i=1}^{N}
\Vert\mathbf{x}_*^{(i)}-\mathbf{x}^{(i)}_t\Vert_2^{2}.
\end{equation}
Moreover, if $\hat{\mathbf{D}}\mathbf{h}^{(i)}$ is strongly monotone at $\mathbf{x}_*^{(i)}$, and if there exist $M>0,H\geq0$ such that $\Vert \hat{\mathbf{D}}\mathbf{h}^{(i)}(\mathbf{x}^{(i)})\Vert_2^2\leq H+M\Vert\mathbf{x}_*^{(i)}-\mathbf{x}^{(i)}\Vert_2^2$ for all $i$, then the training error converges to zero. If $H=0$ then the error converges to zero linearly.
\label{thm:trainErr}
\end{theorem}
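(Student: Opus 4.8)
The plan is to exploit the fact that $\mathbf{D}_{t+1}$ is the global minimizer of the least-squares objective that defines it, by comparing the error it induces against that of a one-parameter competitor family built from the hypothesized map $\hat{\mathbf{D}}$. Write $\mathbf{e}^{(i)}_t = \mathbf{x}_*^{(i)} - \mathbf{x}^{(i)}_t$ and $\mathbf{f}^{(i)}_t = \hat{\mathbf{D}}\mathbf{h}^{(i)}(\mathbf{x}^{(i)}_t)$. Since $\mathbf{D}_{t+1}$ minimizes $\sum_i \Vert \mathbf{x}_*^{(i)} - \mathbf{x}^{(i)}_t + \tilde{\mathbf{D}}\mathbf{h}^{(i)}(\mathbf{x}^{(i)}_t)\Vert_2^2$ over all $\tilde{\mathbf{D}}\in\mathbb{R}^{p\times f}$, and since the scalar multiples $\{\alpha\hat{\mathbf{D}}:\alpha\in\mathbb{R}\}$ form a subset of this feasible set, the post-update error $E_{t+1}:=\sum_i\Vert\mathbf{e}^{(i)}_{t+1}\Vert_2^2$ is bounded above by $\min_\alpha \sum_i \Vert \mathbf{e}^{(i)}_t + \alpha\mathbf{f}^{(i)}_t\Vert_2^2$. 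First I would expand this scalar-parametrized objective as $E_t + 2\alpha B_t + \alpha^2 C_t$, where $E_t=\sum_i\Vert\mathbf{e}^{(i)}_t\Vert_2^2$, $B_t=\sum_i(\mathbf{e}^{(i)}_t)^\top\mathbf{f}^{(i)}_t$, and $C_t=\sum_i\Vert\mathbf{f}^{(i)}_t\Vert_2^2$.

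The key step is to feed strict monotonicity of $\hat{\mathbf{D}}\mathbf{h}^{(i)}$ at $\mathbf{x}_*^{(i)}$, evaluated at $\mathbf{x}=\mathbf{x}^{(i)}_t$, into these coefficients. The condition reads $(\mathbf{x}^{(i)}_t-\mathbf{x}_*^{(i)})^\top\mathbf{f}^{(i)}_t\geq 0$ with equality iff $\mathbf{x}^{(i)}_t=\mathbf{x}_*^{(i)}$, i.e. $(\mathbf{e}^{(i)}_t)^\top\mathbf{f}^{(i)}_t\leq 0$. Summing and invoking the hypothesis that some $\mathbf{x}^{(i)}_t\neq\mathbf{x}_*^{(i)}$ yields $B_t<0$ strictly; moreover $C_t>0$, since $C_t=0$ would force every $\mathbf{f}^{(i)}_t$ to vanish and hence $B_t=0$, a contradiction. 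Minimizing the quadratic $E_t+2\alpha B_t+\alpha^2 C_t$ at $\alpha^\star=-B_t/C_t>0$ gives the value $E_t-B_t^2/C_t<E_t$, which upper-bounds $E_{t+1}$ and establishes the claimed strict decrease.

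For convergence to zero under strong monotonicity I would sharpen both coefficient estimates. Strong monotonicity at $\mathbf{x}^{(i)}_t$ gives $(\mathbf{e}^{(i)}_t)^\top\mathbf{f}^{(i)}_t\leq -m\Vert\mathbf{e}^{(i)}_t\Vert_2^2$, hence $B_t\leq -mE_t$ and $B_t^2\geq m^2E_t^2$, while the growth hypothesis $\Vert\hat{\mathbf{D}}\mathbf{h}^{(i)}(\mathbf{x}^{(i)})\Vert_2^2\leq H+M\Vert\mathbf{e}^{(i)}\Vert_2^2$ gives $C_t\leq NH+ME_t$. Substituting into $E_{t+1}\leq E_t-B_t^2/C_t$ produces the scalar recursion
\[
E_{t+1}\leq E_t-\frac{m^2E_t^2}{NH+ME_t}.
\]
When $H=0$ this collapses to $E_{t+1}\leq(1-m^2/M)E_t$; combining the two hypotheses through Cauchy--Schwarz ($m\Vert\mathbf{e}^{(i)}\Vert_2\leq\Vert\mathbf{f}^{(i)}\Vert_2$, so $m^2\leq M$) shows the rate $1-m^2/M$ lies in $[0,1)$, giving linear convergence. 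When $H>0$, the sequence $\{E_t\}$ is non-increasing and bounded below by $0$, hence convergent to some $E_\infty\geq0$; passing to the limit in the recursion and using continuity of $a\mapsto m^2a^2/(NH+Ma)$ forces $E_\infty=0$.

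The routine parts are the quadratic expansion and the least-squares optimality comparison. The one point requiring genuine care is the $H>0$ convergence argument: there is no uniform contraction factor, so instead of a geometric rate I would argue via monotone convergence of $\{E_t\}$ together with a fixed-point consistency check on the limiting recursion. A secondary subtlety worth spelling out rather than asserting is the inequality $m^2\leq M$ in the $H=0$ case, which is what guarantees the claimed linear rate is genuinely below one.
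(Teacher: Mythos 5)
Your proposal is correct and follows essentially the same route as the paper's own proof: the competitor family $\alpha\hat{\mathbf{D}}$ inside the least-squares problem, the quadratic expansion in $\alpha$, and the recursion $E_{t+1}\leq E_t - m^2E_t^2/(NH+ME_t)$ are exactly the paper's argument (the paper's $\beta,\gamma$ are your $-B_t,C_t$, and its chosen $\alpha=\beta/\gamma$ is precisely your minimizer $\alpha^\star$). The only departures are minor refinements of sub-steps: for $H>0$ you conclude via monotone convergence plus passing to the limit in the recursion, where the paper argues by contradiction through a product of contraction factors, and your Cauchy--Schwarz check that $m^2\leq M$ makes explicit a fact the paper leaves implicit.
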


\begin{proof}
First, we show the case of strictly monotone at a point. For simplicity, we denote $\mathbf{x}_{t+1}^{(i)}$ and $\mathbf{x}_{t}^{(i)}$ as $\mathbf{x}_{+}^{(i)}$ and $\xii$, respectively. We assume that not all $\xii_*=\xii$, otherwise
all $\xii_*$ are already at their stationary points. Thus, there exists an $i$ such that $(\xii-\xii_*)^{\top}\hat{\mathbf{D}}\hi(\xii)>0$.
We need to show that
\begin{equation}
\sum_{i=1}^{N}\Vert\xii_*-\mathbf{x}_{+}^{(i)}\Vert_{2}^{2}<\sum_{i=1}^{N}\Vert\xii_*-\xii\Vert_{2}^{2}.
\end{equation}
This can be shown by letting $\bar{\mathbf{D}}=\alpha\hat{\mathbf{D}}$
where:
\begin{equation}
\alpha=\frac{\beta}{\gamma},
\end{equation}
\begin{equation}
\beta=\sum_{i=1}^N(\xii-\xii_*)^{\top}\hat{\mathbf{D}}\hi(\xii),
\end{equation}
\begin{equation}
\gamma=\sum_{i=1}^N\Vert\hat{\mathbf{D}}\hi(\xii)\Vert_2^2.
\end{equation}
Since there exists an $i$ such that $(\xii-\xii_*)^{\top}\hat{\mathbf{D}}\mathbf{h}(\xii)>0$, both $\beta$ and $\gamma$ are both positive,
and thus $\alpha$ is also positive. Now, we show that the training error decreases in each iteration as follows:

\begin{align}
\sum_{i=1}^{N}\Vert\xii_*-\mathbf{x}_{+}^{(i)}\Vert_2^2 
& = \sum_{i=1}^{N}\Vert\xii_*-\xii+\mathbf{D}_{t+1}\hi(\xii)\Vert_2^2
\\
 & \leq \sum_{i=1}^{N}\Vert\xii_*-\xii+\bar{\mathbf{D}}\hi(\xii_t)\Vert_2^2\label{eq:strDecEq}
 \\
 & = \sum_{i=1}^{N}\Vert\xii_*-\xii\Vert_2^2 + 
 \\
 & \phantom{{}=1}+ \underbrace{\sum_{i=1}^{N}\Vert\alpha\hat{\mathbf{D}}\hi(\xii)\Vert_2^2}_{\alpha^{2}\gamma} + \notag
 \\
 & \phantom{{}=1} + 2\alpha\underbrace{\sum_{i=1}^{N}(\xii_*-\xii)^{\top}\hat{\mathbf{D}}\hi(\xii)}_{=-\beta} \notag
 \\
 & = \sum_{i=1}^{N}\Vert\xii_*-\xii\Vert_2^2+\alpha^{2}\gamma-2\alpha\beta
 \\
 & = \sum_{i=1}^{N}\Vert\xii_*-\xii\Vert_2^2+\frac{\beta^{2}}{\gamma}-2\frac{\beta^{2}}{\gamma}
 \\
 & = \sum_{i=1}^{N}\Vert\xii_*-\xii\Vert_2^2-\underbrace{\frac{\beta^{2}}{\gamma}}_{>0}\label{eq:proofReferStrict}
 \\
 & < \sum_{i=1}^{N}\Vert\xii_*-\xii\Vert_2^2.\label{eq:strictNonIncrease}
\end{align}
Eq.~\ref{eq:strDecEq} is due to $\mathbf{D}_{t+1}$ being the optimal matrix that minimizes the squared error. Note that Thm.~\ref{thm:trainErr} does not guarantee that the error of each sample $i$ reduces in each iteration, but guarantees the reduction in the average error.

For the case of strongly monotone at a point, we make additional assumption that there exist $H\geq0,M>0$ such that $\Vert\hat{\mathbf{D}}\hi(\mathbf{x}^{(i)})\Vert_2^2\leq H+M\Vert\mathbf{x}_*^{(i)}-\mathbf{x}^{(i)}\Vert_2^2$ for all $\mathbf{x}$ and $i$. Thus, we have 
\begin{equation}
\beta=\sum_{i=1}^N(\xii-\xii_*)^{\top}\hat{\mathbf{D}}\hi(\xii)
\geq m\sum_{i=1}^N\Vert\xii-\xii_*\Vert_2^2,
\end{equation}
\begin{equation}
\gamma=\sum_{i=1}^N\Vert\hat{\mathbf{D}}\hi(\xii)\Vert_2^2
\leq NH+M\sum_{i=1}^N\Vert\xii_*-\xii\Vert_2^2.
\end{equation}
Also, let us denote the training error in iteration $k$ as $E_t$:
\begin{equation}
E_t = \sum_{i=1}^{N}\Vert\xii_*-\xii_t\Vert_2^2.
\end{equation}
From Eq.~\ref{eq:proofReferStrict}, we have
\begin{align}
E_{t+1} & = E_{t}-{\frac{\beta^{2}}{\gamma}}
\\
& \leq E_{t}-
\frac{m^2E_{t}^2}
{NH+ME_t}\label{eq:errorReduceProof}
\\
& =
\left(1-\frac
{m^2E_{t}}
{NH+ME_t}
\right)
E_{t}.
\end{align}
Recursively applying the above inequality, we have
\begin{equation}
E_{t+1} 
\leq
E_{0}
\prod_{l=1}^{t+1}\left(1-\frac{m^2E_{l}}{NH+ME_l}
\right).\label{eq:strongEtPlus1Ineq}
\end{equation}
Next, we will show the following result:
\begin{equation}
\lim_{t\rightarrow\infty}E_{t+1}=0
\end{equation}
This can be shown by contradiction. Suppose $E_t$ converges to some positive number $\mu>0$. Since $\{E_t\}_t$ is a nonincreasing sequence~\eqref{eq:strictNonIncrease}, we have that $E_0>E_t\geq\mu$ for all $t>0$. This means
\begin{equation}
0\leq 1-\frac{m^2E_{t}}{NH+ME_t}<1-\frac{m^2\mu}{NH+ME_0}< 1.\label{eq:strongIneq}
\end{equation}
By recursively multiplying~\eqref{eq:strongIneq}, we have
\begin{align}
\lim_{t\rightarrow\infty}\prod_{l=1}^{t+1}\left(1-\frac{m^2E_{l}}{NH+ME_{l}}\right) 
& \leq \lim_{t\rightarrow\infty}\left(1-\frac{m^2\mu}{NH+ME_0}\right)^{t+1} 
\\
& = 0.\label{eq:strongProdError}
\end{align}
Combining~\eqref{eq:strongProdError} and~\eqref{eq:strongEtPlus1Ineq}, we have
\begin{equation}
\lim_{t\rightarrow\infty}E_{t+1} \leq E_{0} \lim_{t\rightarrow\infty}\prod_{l=1}^{t+1}\left(1-\frac{m^2E_{l}}{NH+ME_{l}}\right) = 0.
\end{equation}
This contradicts our assumption that $\{E_t\}_t$ converges to $\mu>0$. Thus, the training error converges to zero. 

Next, we consider the case where $H=0$. In this case, in~\eqref{eq:errorReduceProof}, we will have
\begin{align}
E_{t+1} & \leq E_{t}-
\frac{m^2E_{t}^2}
{ME_t}
\\
& =
\left(1-\frac
{m^2}
{M}
\right)
E_{t}.
\end{align}
Recursively applying the above inequality, we have
\begin{equation}
E_{t+1} \leq \left(1-\frac{m^2}{M}\right)^{t+1}E_0.
\end{equation}
This proves that the training error converges linearly to zero.
\end{proof}
%Appendix one text goes here.
%
%% you can choose not to have a title for an appendix
%% if you want by leaving the argument blank

\subsection{Proof of the Prop.~\ref{prop:psMonIsMonAtPt}}
\setcounter{proposition}{1}
\begin{proposition}
(Pseudomonotonicity and monotonicity at a point) If a function $\mathbf{f}:\mathbb{R}^p\rightarrow \mathbb{R}^p$ is pseudomonotone (resp., strictly, strongly) and $\mathbf{f}(\mathbf{x}_*)=\mathbf{0}_p$, then $\mathbf{f}$ is monotone (resp., strictly, strongly) at $\mathbf{x}_*$. 
\label{prop:psMonIsMonAtPt}
\end{proposition}

\begin{proof}
We will show the case of pseudomonotone $\mathbf{f}$. Let $\mathbf{x}'=\mathbf{x}_*$, then we have
\begin{equation}
(\mathbf{x}-\mathbf{x}_*)^\top \mathbf{f}(\mathbf{x}_*) = 0,
\end{equation}
which, by the definition of pseudomonotonicity, implies
\begin{equation}
(\mathbf{x}-\mathbf{x}_*)^\top \mathbf{f}(\mathbf{x}) \geq 0,
\end{equation}
for all $\mathbf{x}\in\mathbb{R}^p$. That means $\mathbf{f}$ is monotone at $\mathbf{x}_*$. The proofs for strict and strong cases follow similar steps.
\end{proof}

\subsection{Proof of Prop.~\ref{prop:DesignH}}
\begin{proposition}
(Convergence of the training error with an unknown penalty function) Given a training set $\{(\mathbf{x}_0^{(i)},\mathbf{x}_*^{(i)},\{\mathbf{g}^{(i)}_j\}_{j=1}^{J_i})\}_{i=1}^N$, where $\mathbf{x}_0^{(i)},\mathbf{x}_*^{(i)}\in\mathbb{R}^{p}$ and $\mathbf{g}_j^{(i)}:\mathbb{R}^{p}\rightarrow\mathbb{R}^d$ differentiable, if there exists a function $\varphi:\mathbb{R}^d\rightarrow\mathbb{R}$ such that for each $i$, $\sum_{j=1}^{J_i} \varphi(\mathbf{g}^{(i)}_j(\mathbf{x}^{(i)}))$ is differentiable strictly pseudoconvex with the minimum at $\mathbf{x}_*^{(i)}$, then the training error of DO with $\mathbf{h}$ from Sec.~5.1 strictly decreases in each iteration. Alternatively, if $\sum_{j=1}^{J_i} \varphi(\mathbf{g}^{(i)}_j(\mathbf{x}^{(i)}))$ is differentiable strongly pseudoconvex with Lipschitz continuous gradient, then the training error of DO converges to zero.\label{prop:DesignH}
%(Design of $\mathbf{h}$) Given a training set $\{(\mathbf{x}_0^{(i)},\mathbf{x}_*^{(i)},\{\mathbf{g}^{(i)}_j\}_{j=1}^{J_i})\}_{i=1}^N$, where $\mathbf{x}_0^{(i)},\mathbf{x}_*^{(i)}\in\mathbb{R}^{p}$ and $\mathbf{g}_j^{(i)}:\mathbb{R}^{p}\rightarrow\mathbb{R}^d$ differentiable, if there exists a function $\varphi:\mathbb{R}^d\rightarrow\mathbb{R}$ such that for each $i$, $\sum_{j=1}^{J_i} \varphi(\mathbf{g}_j(\mathbf{x}^{(i)}))$ is strictly convex or differentiable strictly pseudoconvex with the minimum at $\mathbf{x}_*^{(i)}$, then the training error of DO with $\mathbf{h}$ from Sec.~5.1 strictly decreases in each iteration. If  $\sum_{j=1}^{J_i} \varphi(\mathbf{g}_j(\mathbf{x}^{(i)}))$ is strongly convex or differentiable strongly pseudoconvex with the minimum at $\mathbf{x}_*^{(i)}$, then the training error of DO with $\mathbf{h}$ from Sec.~5.1 converges to zero.\label{prop:DesignH}
\end{proposition}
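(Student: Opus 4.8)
The plan is to reduce this proposition to Theorem~\ref{thm:trainErr} by exhibiting a single instance-independent ideal map $\hat{\mathbf{D}}$ that reproduces, for every training instance simultaneously, the gradient of the per-instance cost. Writing $\Phi_i(\mathbf{x})=\tfrac{1}{J_i}\sum_{j=1}^{J_i}\varphi(\mathbf{g}^{(i)}_j(\mathbf{x}))$, which is a positive rescaling of the cost in the hypothesis and therefore shares its (strict/strong) pseudoconvexity and its minimizer $\mathbf{x}_*^{(i)}$, I would set $\boldsymbol{\phi}=\nabla\varphi$ in the construction of Sec.~5.1, i.e.\ take $\hat{\mathbf{D}}(\mathbf{v},k)=[\nabla\varphi(\mathbf{v})]_k$ as in~\eqref{eq:designH-D}. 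The crucial point is that this choice depends only on $\varphi$ and not on the instance index $i$, so the \emph{same} $\hat{\mathbf{D}}$ acts on every $\mathbf{h}^{(i)}$; by the derivation culminating in~\eqref{eq:expDh} it then recovers exactly $\hat{\mathbf{D}}\mathbf{h}^{(i)}(\mathbf{x})=\tfrac{1}{J_i}\sum_{j}[\partial\mathbf{g}^{(i)}_j/\partial\mathbf{x}]^\top\nabla\varphi(\mathbf{g}^{(i)}_j(\mathbf{x}))=\nabla\Phi_i(\mathbf{x})$. I would work with the continuous (Dirac-delta) form of $\mathbf{h}$ from~\eqref{eq:designH-h} so that this gradient recovery is exact.

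Next I would convert the convexity hypothesis into a monotonicity-at-a-point property through the two earlier results. Since each $\Phi_i$ is strictly pseudoconvex, Proposition~\ref{prop:gradCvx} gives that $\nabla\Phi_i=\hat{\mathbf{D}}\mathbf{h}^{(i)}$ is strictly pseudomonotone. Because $\Phi_i$ is differentiable with its unconstrained minimum at $\mathbf{x}_*^{(i)}$, first-order optimality yields $\nabla\Phi_i(\mathbf{x}_*^{(i)})=\mathbf{0}_p$, so Proposition~\ref{prop:psMonIsMonAtPt} applies and shows that $\hat{\mathbf{D}}\mathbf{h}^{(i)}$ is strictly monotone at $\mathbf{x}_*^{(i)}$. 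As this holds for every $i$ under the one shared map $\hat{\mathbf{D}}$, the hypothesis of the first part of Theorem~\ref{thm:trainErr} is satisfied, and the strict decrease of the training error follows immediately.

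For the second claim I would run the identical chain in the strong regime: strongly pseudoconvex $\Phi_i$ gives, via Proposition~\ref{prop:gradCvx} followed by Proposition~\ref{prop:psMonIsMonAtPt}, that $\hat{\mathbf{D}}\mathbf{h}^{(i)}$ is strongly monotone at $\mathbf{x}_*^{(i)}$. It then remains only to furnish the growth bound demanded by Theorem~\ref{thm:trainErr}, and here I would invoke Lipschitz continuity of the gradient: if $\nabla\Phi_i$ is $L_i$-Lipschitz, then together with $\nabla\Phi_i(\mathbf{x}_*^{(i)})=\mathbf{0}_p$ one obtains $\Vert\hat{\mathbf{D}}\mathbf{h}^{(i)}(\mathbf{x})\Vert_2^2\le L_i^2\Vert\mathbf{x}-\mathbf{x}_*^{(i)}\Vert_2^2$. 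Choosing $M=\max_i L_i^2$, which is finite since there are only $N$ instances, and $H=0$ supplies precisely the bound of Theorem~\ref{thm:trainErr}; its conclusion then gives convergence of the training error to zero, and in fact linearly by the $H=0$ branch of that theorem.

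The main obstacle is really the insight in the first paragraph rather than any computation: one must recognize that the feature map of Sec.~5.1 is engineered so that a single instance-independent $\hat{\mathbf{D}}$ reproduces all the per-instance gradients at once, which is exactly the shared-ideal-map hypothesis that Theorem~\ref{thm:trainErr} requires. The only remaining technical care is mild: verifying that the growth constant $M$ can be chosen uniformly over the finite training set, and noting that the gradient-recovery identity is exact only for the continuous form of $\mathbf{h}$, so any discretization in practice must be treated as a separate approximation or absorbed into the nonnegative slack $H$.
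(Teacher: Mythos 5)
Your proposal is correct and follows essentially the same route as the paper's proof: both construct the shared ideal map $\hat{\mathbf{D}}$ from $\nabla\varphi$ via the Sec.~5.1 feature so that $\hat{\mathbf{D}}\mathbf{h}^{(i)}=\nabla\Phi^{(i)}$, then chain Proposition~\ref{prop:gradCvx} (pseudoconvexity $\Rightarrow$ pseudomonotone gradient), the vanishing gradient at the minimizer, and Proposition~\ref{prop:psMonIsMonAtPt} to obtain strict (resp.\ strong) monotonicity at $\mathbf{x}_*^{(i)}$, and finally invoke Theorem~\ref{thm:trainErr}, using the Lipschitz-gradient bound with $H=0$ and $M=L^2$ in the strong case. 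Your additions --- making explicit that $\hat{\mathbf{D}}$ is instance-independent, taking $M=\max_i L_i^2$ over the finite training set, and flagging that exactness holds only for the continuous form of $\mathbf{h}$ --- are refinements of, not departures from, the paper's argument.
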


\begin{proof}
Let $\Phi^{(i)}=\frac{1}{J}\sum_{j=1}^{J_i}\varphi(\mathbf{g}^{(i)}(\mathbf{x}))$. We divide the proof into two cases:

\underline{Case 1}: Differentiable strictly pseudoconvex $\Phi^{(i)}$.

Since $\Phi^{(i)}$ is differentiable strictly pseudoconvex, by Prop.~1, its gradient $\nabla\Phi^{(i)}$ is strictly pseudomonotone. Also, since $\Phi^{(i)}$ has a minimum at $\xii_*$, $\nabla\Phi^{(i)}(\xii_*)=\mathbf{0}_p$. By Prop.~\ref{prop:psMonIsMonAtPt}, this means that $\nabla\Phi^{(i)}$ is monotone at $\xii_*$. If we use $\mathbf{h}$ from Sec.~5.1 and set $\mathbf{\hat{D}}(\mathbf{v},k)$ to $[\mathbf{\phi}(\mathbf{v})]_k$, then we have that $\mathbf{\hat{D}}\mathbf{h}^{(i)}=\nabla\Phi^{(i)}$, meaning $\mathbf{\hat{D}}\mathbf{h}^{(i)}$ is strictly monotone at $\mathbf{x}_*^{(i)}$. Thus, by Thm.~\ref{thm:trainErr}, we have that the training error of DO strictly decreases in each iteration.

\underline{Case 2}: Differentiable strongly pseudoconvex $\Phi^{(i)}$.

The proof is similar to case 1, but differentiable strongly pseudoconvex $\Phi^{(i)}$ will have $\mathbf{\hat{D}}\mathbf{h}^{(i)}=\nabla\Phi^{(i)}$ which is strongly pseudomonotone at $\xii_*$. Since $\nabla\Phi^{(i)}=\mathbf{\hat{D}}\mathbf{h}^{(i)}$ is Lipschitz continuous and $\nabla\Phi^{(i)}(\xii_*)=\mathbf{0}_p$, this means 
\begin{equation}
\Vert\mathbf{\hat{D}}\mathbf{h}^{(i)}(\xii)\Vert_2\leq L\Vert\xii-\xii_*\Vert_2,
\end{equation}  
where $L$ is the Lipschitz constant. Thus, by Thm.~\ref{thm:trainErr}, we have that the training error of DO converges to zero.

\end{proof}

\subsection{Convergence result for nondifferentiable convex $\varphi$}
Here, we provide the convergence result in the case of nondifferentiable convex cost function. Since we will need to refer to subdifferential which is a multivalued map rather than a function, we will need to generalize Prop.~\ref{prop:psMonIsMonAtPt} to the case of multivalued maps. First, we define pseudomontone multivalued maps.
\setcounter{definition}{3}
\begin{definition}
(Pseudomonotone multivalued map~\cite{yao_JOTA1994}) A multivalued map $\mathbf{f}$ is 

(1) \emph{pseudomonotone} if for any distinct points $\mathbf{x},\mathbf{x}'\in\mathbb{R}^p$ and any $\mathbf{u}\in\mathbf{f}(\mathbf{x}),\mathbf{u}'\in\mathbf{f}(\mathbf{x}')$,
\begin{equation}
(\mathbf{x}-\mathbf{x}')^\top \mathbf{u}' \geq 0 \implies (\mathbf{x}-\mathbf{x}')^\top \mathbf{u} \geq 0,
\end{equation}

(2) \emph{strictly pseudomonotone} if for any distinct points $\mathbf{x},\mathbf{x}'\in\mathbb{R}^p$ and any $\mathbf{u}\in\mathbf{f}(\mathbf{x}),\mathbf{u}'\in\mathbf{f}(\mathbf{x}')$,
\begin{equation}
(\mathbf{x}-\mathbf{x}')^\top \mathbf{u}' \geq 0 \implies (\mathbf{x}-\mathbf{x}')^\top \mathbf{u} > 0,
\end{equation}

(3) \emph{strongly pseudomonotone} if there exists $m>0$ such that for any distinct points $\mathbf{x},\mathbf{x}'\in\mathbb{R}^p$ and any $\mathbf{u}\in\mathbf{f}(\mathbf{x}),\mathbf{u}'\in\mathbf{f}(\mathbf{x}')$,
\begin{equation}
(\mathbf{x}-\mathbf{x}')^\top \mathbf{u}' \geq 0 \implies (\mathbf{x}-\mathbf{x}')^\top \mathbf{u} \geq m\Vert\mathbf{x}-\mathbf{x}'\Vert_2^2.
\end{equation}
\end{definition}

Next, we show the following results which generalize Prop.~\ref{prop:psMonIsMonAtPt} to the case of multivalued maps.

\begin{proposition}
(Pseudomonotone multivalued map and monotonicity at a point) If a multivalued map $\mathbf{f}$ is pseudomonotone (resp., strictly, strongly) and $\mathbf{0}_p\in\mathbf{f}(\mathbf{x}_*)$, then $\mathbf{f}$ is monotone (resp., strictly, strongly) at $\mathbf{x}_*$. 
\label{prop:psMonIsMonAtPtMultiMap}
\end{proposition}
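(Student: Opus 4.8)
The plan is to lift the proof of Prop.~\ref{prop:psMonIsMonAtPt} essentially verbatim, replacing the single-valued evaluation $\mathbf{f}(\mathbf{x}_*)$ by a \emph{selection} of the element $\mathbf{0}_p$ that the hypothesis guarantees to lie in the set $\mathbf{f}(\mathbf{x}_*)$. For a multivalued map the natural reading of ``monotone at $\mathbf{x}_*$'' (generalizing Def.~1) is that $(\mathbf{x}-\mathbf{x}_*)^\top\mathbf{u}\geq 0$ for every $\mathbf{x}\in\mathbb{R}^p$ and every $\mathbf{u}\in\mathbf{f}(\mathbf{x})$, with the obvious strict and strong variants. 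So I would fix an arbitrary $\mathbf{x}$ and an arbitrary $\mathbf{u}\in\mathbf{f}(\mathbf{x})$ and aim to produce the appropriate sign on $(\mathbf{x}-\mathbf{x}_*)^\top\mathbf{u}$.

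First I would dispose of the degenerate case $\mathbf{x}=\mathbf{x}_*$, where $(\mathbf{x}-\mathbf{x}_*)^\top\mathbf{u}=0$; this already satisfies the monotone and strict requirements, and the strong one too since $\Vert\mathbf{x}-\mathbf{x}_*\Vert_2^2=0$. This step is worth isolating because the definition of multivalued pseudomonotonicity only constrains \emph{distinct} points, so it says nothing when $\mathbf{x}=\mathbf{x}_*$; handling this case by hand is the only genuine bookkeeping subtlety.

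For $\mathbf{x}\neq\mathbf{x}_*$ I would set $\mathbf{x}'=\mathbf{x}_*$ and choose the specific element $\mathbf{u}'=\mathbf{0}_p$, which is available precisely by the hypothesis $\mathbf{0}_p\in\mathbf{f}(\mathbf{x}_*)$. Then $(\mathbf{x}-\mathbf{x}_*)^\top\mathbf{u}'=0\geq 0$, so the antecedent of the pseudomonotonicity definition holds for the pair $(\mathbf{x},\mathbf{x}_*)$ and this $\mathbf{u}'$. Since that definition quantifies universally over $\mathbf{u}\in\mathbf{f}(\mathbf{x})$, the implication delivers $(\mathbf{x}-\mathbf{x}_*)^\top\mathbf{u}\geq 0$ for the arbitrary $\mathbf{u}$ I fixed, i.e.\ monotonicity at $\mathbf{x}_*$. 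The strict and strong cases run identically: the same choice $\mathbf{u}'=\mathbf{0}_p$ triggers the strict conclusion $(\mathbf{x}-\mathbf{x}_*)^\top\mathbf{u}>0$ or the strong conclusion $(\mathbf{x}-\mathbf{x}_*)^\top\mathbf{u}\geq m\Vert\mathbf{x}-\mathbf{x}_*\Vert_2^2$ for every $\mathbf{u}\in\mathbf{f}(\mathbf{x})$.

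I do not expect a real obstacle here, since the argument is a direct lift of Prop.~\ref{prop:psMonIsMonAtPt}. The only two points needing care are \emph{(a)} selecting $\mathbf{0}_p$ as the relevant element of the image set $\mathbf{f}(\mathbf{x}_*)$ rather than relying on a single value, and \emph{(b)} checking that the universal quantifier over $\mathbf{u}\in\mathbf{f}(\mathbf{x})$ in the multivalued definition lines up with the ``for all $\mathbf{u}$'' in the target monotone-at-a-point condition, so the conclusion concerns the whole image set and not just one selection.
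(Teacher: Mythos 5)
Your proposal is correct and follows essentially the same argument as the paper's own proof: set $\mathbf{x}'=\mathbf{x}_*$, use the hypothesis $\mathbf{0}_p\in\mathbf{f}(\mathbf{x}_*)$ to choose $\mathbf{u}'=\mathbf{0}_p$ so the antecedent $(\mathbf{x}-\mathbf{x}_*)^\top\mathbf{u}'=0\geq 0$ holds, and let the pseudomonotonicity implication deliver the conclusion for every $\mathbf{u}\in\mathbf{f}(\mathbf{x})$. Your explicit handling of the degenerate case $\mathbf{x}=\mathbf{x}_*$ is a minor tidiness improvement the paper leaves implicit, not a different route.
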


\begin{proof}
We will show the case of pseudomonotonicity. Let $\mathbf{x}'=\mathbf{x}_*$. Since $\mathbf{0}_p\in\mathbf{f}(\mathbf{x}_*)$, we have
\begin{equation}
(\mathbf{x}-\mathbf{x}_*)^\top \mathbf{0}_p = 0,
\end{equation}
which, by the definition of pseudomonotonicity, implies for all $\mathbf{u}\in\mathbf{f}(\mathbf{x})$
\begin{equation}
(\mathbf{x}-\mathbf{x}_*)^\top \mathbf{u} \geq 0,
\end{equation}
for all $\mathbf{x}\in\mathbb{R}^p$. That means $\mathbf{f}$ is monotone at $\mathbf{x}_*$. The proofs for strict and strong cases follow similar steps.
\end{proof}

It is known~\cite{Hadjisavvas_EncOpt2001} that monotone (resp., strictly, strongly) multivalued maps are pseudomonotone (resp., strictly, strongly) multivalued maps.

With these results, we can show the convergence result for learning DO under unknown nondifferentiable convex cost functions. 

\begin{proposition}
(Convergence of the training error with an unknown convex penalty function) Given a training set $\{(\mathbf{x}_0^{(i)},\mathbf{x}_*^{(i)},\{\mathbf{g}^{(i)}_j\}_{j=1}^{J_i})\}_{i=1}^N$, where $\mathbf{x}_0^{(i)},\mathbf{x}_*^{(i)}\in\mathbb{R}^{p}$ and $\mathbf{g}_j^{(i)}:\mathbb{R}^{p}\rightarrow\mathbb{R}^d$ differentiable, if there exists a function $\varphi:\mathbb{R}^d\rightarrow\mathbb{R}$ such that for each $i$, $\sum_{j=1}^{J_i} \varphi(\mathbf{g}^{(i)}_j(\mathbf{x}^{(i)}))$ is strictly convex with the minimum at $\mathbf{x}_*^{(i)}$, then the training error of DO with $\mathbf{h}$ from Sec.~5.1 strictly decreases in each iteration. Alternatively, if $\sum_{j=1}^{J_i} \varphi(\mathbf{g}^{(i)}_j(\mathbf{x}^{(i)}))$ is strongly convex with the minimum at $\mathbf{x}_*^{(i)}$ and if there exist $M>0,H\geq 0$ such that 
\begin{equation}
\Vert(1/J)\sum_{j=1}^{J_i} \bar{\varphi}(\mathbf{g}^{(i)}_j(\mathbf{x}^{(i)}))\Vert_2^2\leq H+M\Vert\xii_*-\xii\Vert_2^2 \label{eq:nonDiffCvxAssump}
\end{equation}
for all $i,\xii,\text{ and }\bar{\varphi}(\mathbf{g}^{(i)}_j(\mathbf{x}^{(i)}))\in\partial\varphi(\mathbf{g}^{(i)}_j(\mathbf{x}^{(i)}))$, then the training error of DO converges to zero.\label{prop:DesignHcvx}
\end{proposition}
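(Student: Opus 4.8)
The plan is to reproduce the proof of Prop.~\ref{prop:DesignH} almost verbatim, with the gradient $\nabla\Phi^{(i)}$ replaced by the subdifferential $\partial\Phi^{(i)}$ and the single-valued monotonicity tools replaced by the multivalued analogues just developed (the multivalued pseudomonotonicity of Def.~4, Prop.~\ref{prop:psMonIsMonAtPtMultiMap}, and the cited fact that monotone multivalued maps are pseudomonotone). Write $\Phi^{(i)}=\frac{1}{J}\sum_{j=1}^{J_i}\varphi(\mathbf{g}^{(i)}_j(\mathbf{x}))$. First I would record the relevant subdifferential calculus: since each $\mathbf{g}^{(i)}_j$ is differentiable and $\varphi$ is finite convex, the composition obeys the chain rule $\partial(\varphi\circ\mathbf{g}^{(i)}_j)(\mathbf{x})=[\partial\mathbf{g}^{(i)}_j/\partial\mathbf{x}]^\top\partial\varphi(\mathbf{g}^{(i)}_j(\mathbf{x}))$, and the Moreau--Rockafellar sum rule gives $\partial\Phi^{(i)}(\mathbf{x})=\frac{1}{J}\sum_j[\partial\mathbf{g}^{(i)}_j/\partial\mathbf{x}]^\top\partial\varphi(\mathbf{g}^{(i)}_j(\mathbf{x}))$. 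This exhibits the map $\hat{\mathbf{D}}\mathbf{h}^{(i)}$ of Sec.~5.1 --- obtained by setting $\hat{\mathbf{D}}(\mathbf{v},k)=[\bar\varphi(\mathbf{v})]_k$ for a fixed subgradient selection $\bar\varphi(\mathbf{v})\in\partial\varphi(\mathbf{v})$ --- as a single-valued selection of the multivalued map $\partial\Phi^{(i)}$, i.e. $\hat{\mathbf{D}}\mathbf{h}^{(i)}(\mathbf{x})\in\partial\Phi^{(i)}(\mathbf{x})$ for every $\mathbf{x}$.

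For the strictly convex case, strict convexity of $\Phi^{(i)}$ makes $\partial\Phi^{(i)}$ a strictly monotone multivalued map, hence strictly pseudomonotone by the cited result. Because the minimum lies at $\mathbf{x}_*^{(i)}$, first-order optimality gives $\mathbf{0}_p\in\partial\Phi^{(i)}(\mathbf{x}_*^{(i)})$, so Prop.~\ref{prop:psMonIsMonAtPtMultiMap} yields that $\partial\Phi^{(i)}$ is strictly monotone at $\mathbf{x}_*^{(i)}$: every $\mathbf{u}\in\partial\Phi^{(i)}(\mathbf{x})$ satisfies $(\mathbf{x}-\mathbf{x}_*^{(i)})^\top\mathbf{u}>0$ for $\mathbf{x}\neq\mathbf{x}_*^{(i)}$. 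Specializing to the particular element $\mathbf{u}=\hat{\mathbf{D}}\mathbf{h}^{(i)}(\mathbf{x})$ shows that the single-valued function $\hat{\mathbf{D}}\mathbf{h}^{(i)}$ is strictly monotone at $\mathbf{x}_*^{(i)}$ in the sense of Definition~1, and Thm.~\ref{thm:trainErr} then delivers strict decrease of the training error.

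The strongly convex case is identical with ``strictly'' replaced by ``strongly'': strong convexity makes $\partial\Phi^{(i)}$ a strongly monotone, hence strongly pseudomonotone, multivalued map, and Prop.~\ref{prop:psMonIsMonAtPtMultiMap} upgrades this to strong monotonicity of $\hat{\mathbf{D}}\mathbf{h}^{(i)}$ at $\mathbf{x}_*^{(i)}$. The only remaining hypothesis of Thm.~\ref{thm:trainErr} is a quadratic-growth bound on $\hat{\mathbf{D}}\mathbf{h}^{(i)}$, and this is exactly what assumption~\eqref{eq:nonDiffCvxAssump} supplies once $\hat{\mathbf{D}}\mathbf{h}^{(i)}$ is identified with the subgradient selection. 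Invoking the strongly-monotone branch of Thm.~\ref{thm:trainErr} then gives convergence of the training error to zero.

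The subdifferential calculus and the reuse of Thm.~\ref{thm:trainErr} are routine; I expect the only real obstacle to be the first step, namely verifying rigorously that $\hat{\mathbf{D}}\mathbf{h}^{(i)}$, built from the discretized $\mathbf{h}$ of Sec.~5.1 together with a \emph{single} fixed linear map $\hat{\mathbf{D}}$, is genuinely a subgradient selection of $\partial\Phi^{(i)}$ for all $i$ simultaneously. This needs one measurable selection $\bar\varphi$ of $\partial\varphi$ to be fixed consistently across all discretization bins and all instances, and it needs the chain and sum rules to apply without constraint-qualification issues --- which they do here precisely because the inner maps $\mathbf{g}^{(i)}_j$ are differentiable and $\varphi$ is finite convex.
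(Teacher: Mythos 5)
Your proposal is correct and follows essentially the same route as the paper's own proof: the subdifferential chain/sum rule for $\Phi^{(i)}$, the identification of $\hat{\mathbf{D}}\mathbf{h}^{(i)}$ as a single-valued selection of $\partial\Phi^{(i)}$ via a fixed subgradient selection of $\partial\varphi$, the passage from strict (resp.\ strong) convexity to strict (resp.\ strong) monotonicity at $\mathbf{x}_*^{(i)}$ through Prop.~\ref{prop:psMonIsMonAtPtMultiMap}, and the final appeal to the two branches of Thm.~\ref{thm:trainErr} with assumption~\eqref{eq:nonDiffCvxAssump} supplying the quadratic growth bound. Your closing remark about fixing one consistent selection $\bar\varphi$ across all bins and instances is a point the paper treats implicitly, so your write-up is, if anything, slightly more careful on that step.
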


\begin{proof}
Let $\Phi^{(i)}=\frac{1}{J}\sum_{j=1}^{J_i}\varphi(\mathbf{g}^{(i)}(\mathbf{x}))$. We divide the proof into two cases:

\underline{Case 1}: Strictly convex $\Phi^{(i)}$.

The subdifferential of $\Phi^{(i)}$ is a multivalued map~\cite{Romano_AMO1995}:
\begin{equation}
\partial\Phi^{(i)}=\frac{1}{J}\partial\sum_{j=1}^J\varphi(\mathbf{g}^{(i)}_j(\mathbf{x}))
=
\frac{1}{J}\sum_{j=1}^J\frac{\partial \mathbf{g}^{(i)}_j(\mathbf{x})}{\partial \mathbf{x}}
\partial \varphi(\mathbf{g}^{(i)}_j(\mathbf{x})),
\end{equation}
where $\partial$ denotes subdifferential. Let $\mathbf{\hat{\phi}}$ be a function where $\mathbf{\hat{\phi}}(\mathbf{x})=\mathbf{y}$ for any $\mathbf{y}\in\partial\varphi(\mathbf{x})$. If we use $\mathbf{h}$ from Sec.~5.1 and set $\mathbf{\hat{D}}(\mathbf{v},k)$ to $[\mathbf{\phi}(\mathbf{v})]_k$, we have that $\mathbf{\hat{D}}\mathbf{h}^{(i)}(\mathbf{x})\in\partial\Phi^{(i)}(\mathbf{x})$. 

From~\cite{Rockafellar_cvxAna1970}, we know that $\partial\Phi^{(i)}$ is a strictly monotone map, which means it is also strictly pseudomonotone. Since $\Phi^{(i)}$ has a global minimum at $\xii_*$, then we know that $\partial\Phi^{(i)}$ has zero only at $\xii_*$. Then, by Prop.~\ref{prop:psMonIsMonAtPtMultiMap}, $\mathbf{\hat{D}}\mathbf{h}^{(i)}$ is strictly monotone at $\xii_*$. Thus, by Thm.~\ref{thm:trainErr}, we have that the training error of DO strictly decreases in each iteration.

\underline{Case 2}: Strongly convex $\Phi^{(i)}$.

The proof is similar to case 1, but strongly convex $\Phi^{(i)}$ will have $\mathbf{\hat{D}}\mathbf{h}^{(i)}(\mathbf{x})\in\partial\Phi^{(i)}(\mathbf{x})$ which is strongly monotone at $\xii_*$. By assumption in~\eqref{eq:nonDiffCvxAssump}, we also have that $\Vert\hat{\mathbf{D}}\mathbf{h}^{(i)}(\xii)\Vert_2^2\leq H+M\Vert\xii_*-\xii_t\Vert_2^2$ for all $i,\xii$. Thus, by Thm.~\ref{thm:trainErr}, we have that the training error of DO converges to zero.
\end{proof}

Note that many useful convex functions have subgradients that follow~\eqref{eq:nonDiffCvxAssump}. Examples of such functions include differentiable functions with Lipschitz gradient, e.g., squared $\ell_2$ norm, and functions which are point-wise maximum of a finite number of affine functions, e.g., $\ell_1$ norm. Note that, however, a function which is a point-wise maximum of a finite number of affine functions is not strongly monotone at its minimum since its subgradients are bounded by a constant.

\section{Additional Results for Point Cloud Registration}\label{sec:2Dregis}
In this section, we provide an additional experiment for 2D point cloud registration.
\subsection{2D point cloud registration}
We performed experiments on 2D registration using 4 shapes in Fig.~\ref{fig:model2D} from~\cite{Gold_PR1998,Tsin_ECCV2004}\footnote{Available from http://www.cs.cmu.edu/~ytsin/KCReg/KCReg.zip and http://cise.ufl.edu/$\sim$anand/students/chui/rpm/TPS-RPM.zip}. The shapes were normalized by removing the mean and scaling so that the largest dimension fitted $[-1,1]$. We used the same baselines and performance metrics as in the main paper. Since MATLAB does not provide ICP for 2D case, we used the code of IRLS and set the cost function to least-squared error as ICP. In this 2D experiment, we set the error threshold for successful registration to 0.05 of the model's largest dimension. 

\begin{figure}[!h]
\centering
\hspace{-0.5em}\includegraphics[width=3in]{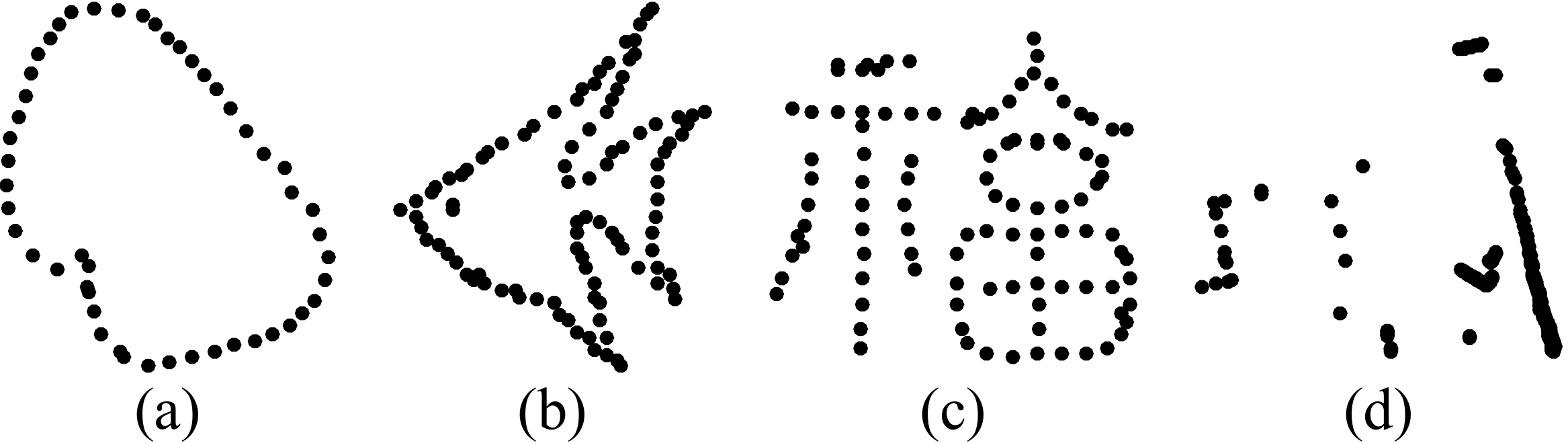}%
\caption[]{Point clouds for 2D registration experiment.}
\label{fig:model2D}
\end{figure}

\subsubsection{DO parametrization and training}
We used the same feature as that in the 3D point cloud registration experiment. For each shape, we generated $N=10000$ samples to train 30 maps as its SUM. Each sample is generated by adding the following perturbations: \emph{(1) Rotation and translation:} We added a random rotation within 85 degrees and translation within $[-0.4,0.4]^2$. \emph{(2) Noise and outliers:} We added Gaussian noise of variance of $0.03$ to each point, and added outliers of 0 to $N_M$\footnote{Recall that $N_M$ is the number of points in the model point cloud.} points in $[1.5,1.5]^2$. \emph{(3) Incomplete model:} We randomly sampled a point and removed from $0\%$ to $60\%$ of its closest points. For 2D case, we found that the shape in Fig.~\ref{fig:model2D}d had very different densities in different parts, which causes the denser area to dominate the values in $\mathbf{h}$. To alleviate this problem, in each training iteration, we preprocessed the features $\mathbf{h^{(i)}}$ by normalizing each element to lie in $[0,1]$ before learning an update map. We trained a total of $K=30$ maps, and set $\sigma^2=0.5$ and $\lambda=2\times10^{-2}$. During test, we set the maximum number of iterations to 100. Training time for DO for all shapes was less than 45 seconds, except for the shape in Fig.~\ref{fig:model2D}d which took 147 seconds due to its large number of points.

\subsubsection{Baselines and evaluation metrics}
We used the same baselines and evaluation metrics as those in the 3D point cloud registration experiment.

\begin{figure*}[!t]
\centering
\hspace{-0.5em}\includegraphics[width=6.5in]{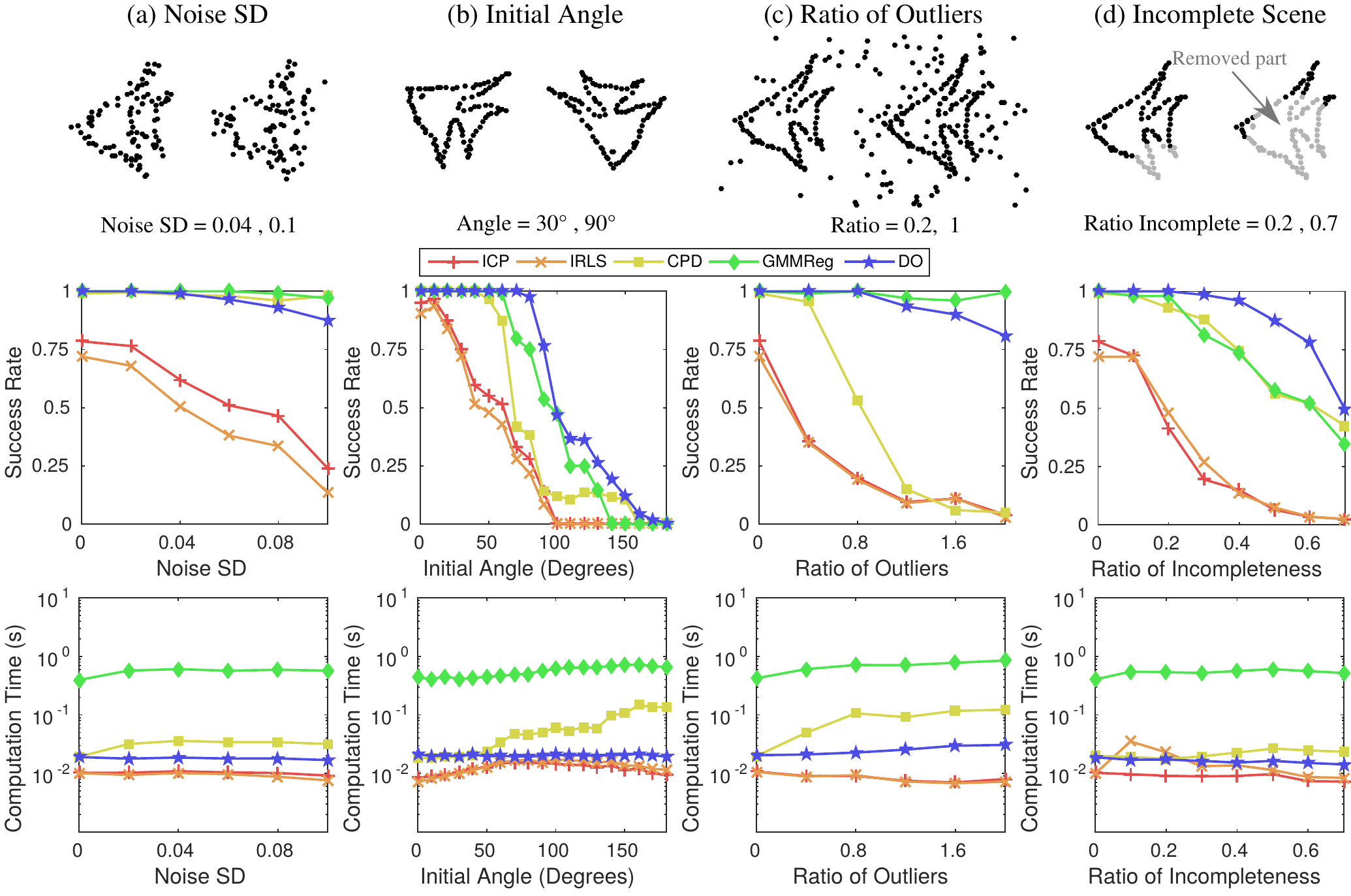}%
\caption[]{Results for 2D registration experiment. (Top) Examples of scene points with different perturbations. (Middle) Success rate. (Bottom) Computation time.}
\label{fig:synth2D}
\end{figure*}

\subsubsection{Experiments and results}
We evaluated the performance of the alignment method by varying four types of perturbations:
(1) the standard deviation of the noise ranging from 0 to 0.1, (2) the initial angle from 0 to 180 degrees, (3) the ratio of outliers from 0 to 2, and (4) the ratio of incomplete scene shape from 0 to 0.7. While we perturbed one parameter, the values of the other parameters were set as follows: noise SD = 0, initial angle uniformly sampled from 0 to 60 degrees, ratio of outliers = 0, and ratio of incompleteness = 0. For the 2D case, we did not vary the number of scene points (as in 3D case) because each point cloud was already a sparse outline of its shape. The ratio of outliers is the fraction of the number of points $N_M$ of each shape. All generated scenes included random initial translation within $[-0.4,0.4]^2$. A total of 50 rounds were run for each variable setting for each shape. 

Fig.~\ref{fig:synth2D} shows the results for the 2D registration. In terms of speed, DO performed faster than CPD and GMMReg, while being slower than ICP and IRLS. In terms of successful registration, ICP and IRLS had good success rates only when the perturbations and initial angles were small. GMMReg performed well in almost all cases, while CPD did not do well when there were a large number of outliers. DO, which learns the update steps from training data, obtained high success rates in almost all cases, but it did not perform as well as CPD and GMMReg when the noise was extremely high. This is because the noise we generated was Gaussian noise, which is the noise model assumed by both CPD and GMMReg. This shows that when the problem is accurately modelled as an optimization problem, the optimum is generally the correct solution. GMMReg also outperformed DO when the outliers were high which may be due to its annealing steps. On the other hand, DO obtained the best success rate in terms of initial angles and incomplete scenes. These perturbations cannot be easily modelled, and this result shows that it is beneficial to use learning to obtain a good solution, as done by DO. 

\section{Additional Results for Camera Pose Estimation}\label{sec:CamPoseEst}
In this section, we provide addition results for camera pose estimation experiments.

\begin{figure*}[!t]
\centering
\hspace{-0.5em}\includegraphics[width=6.5in]{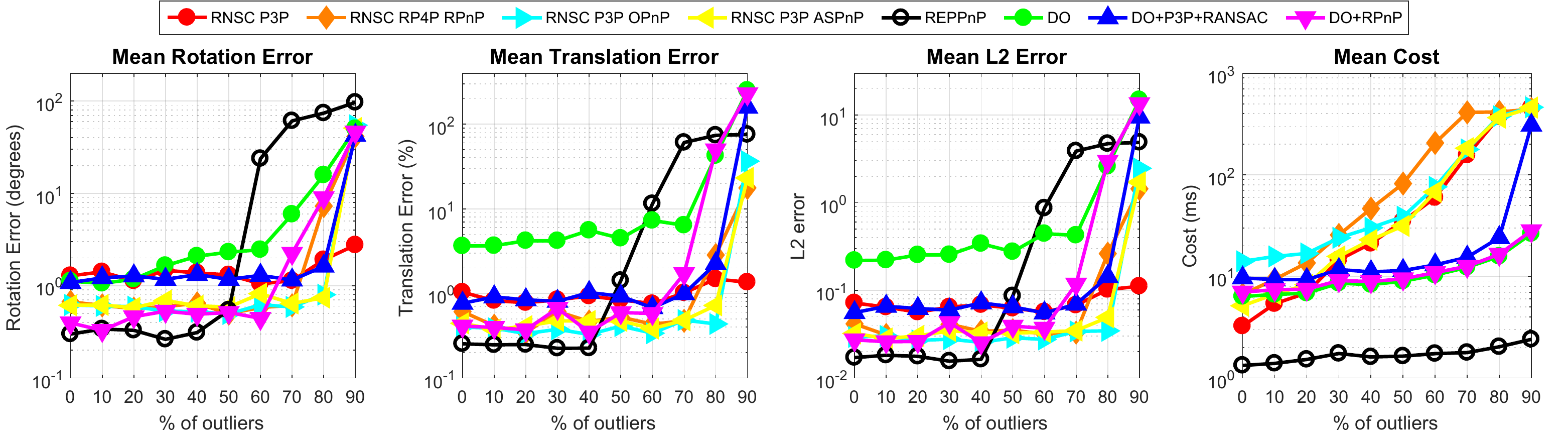}%
\caption[]{Additional results for camera pose estimation under varying percentage of outliers.}
\label{fig:resultPnP_REPPnP}
\end{figure*}

\subsection{Experiments on the code of~\cite{Ferraz_CVPR2014}}
Here, we performed additional synthetic experiments using the code of~\cite{Ferraz_CVPR2014}. The experiment varied the percentage of outliers from $0\%$~$90\%$ while fixing number of inliers at $100$ matches. For more details on the experiments and the baselines, please refer to~\cite{Ferraz_CVPR2014}.  

The result is measured in terms of the means of four metrics: (i) rotation error (in degrees), defined as $e_{\text{rot}}=\max_{k=1}^3{\text{acos}(\mathbf{r}_{k,\text{true}}^\top\mathbf{r}_k)\times 180/\pi}$, where $\mathbf{r}_{k,\text{true}}$ and $\mathbf{r}$ are the $k^{th}$ column of the ground truth and estimated rotation matrix\footnote{Note that, in the main DO paper, we measure the distance between rotation matrices using rotation angle, while $e_{\text{rot}}$ in the code of~\cite{Ferraz_CVPR2014} does not actually measure the angle. One can check this by computing $e_{\text{rot}}$ between $\mathbf{I}_3$ and a randomly generated a $180^{\circ}$ rotation matrix. If the rotation axis is not $x,y, \text{or } z$-axis, then $e_{\text{rot}}$ will not be $180^{\circ}$.}; (ii) translation error (in percentage), defined as $e_{\text{trans}}=\Vert\mathbf{t}_{true}-\mathbf{t}\Vert/\Vert\mathbf{t}\Vert\times100$, where $\mathbf{t}_{true}$ and $\mathbf{t}$ are the ground truth and estimated translation vectors, respectively; (iii) L2 error, defined as the mean $\ell_2$ distance between the 3D points in the ground truth orientation and the corresponding 3D points transformed by the estimated rotation and translation; and (iv) cost, i.e., computation time. 

Fig.~\ref{fig:resultPnP_REPPnP} shows the result. This result confirms the results in the main DO paper that DO-based approaches could be robust upto $80\%$ outliers while maintaining lower computation time than RANSAC-based approaches. On the other hand, REPPnP~\cite{Ferraz_CVPR2014} required the lowest time of all approaches while being robust upto $50\%$ outliers. Since DO main paper and~\cite{Ferraz_CVPR2014} use different data generation methods, this result shows that there can be many types of outliers, and different algorithms may be robust to one but not the others. Thus, it may not be trivial to concretely measure robustness of camera pose estimation algorithms using synthetic data, especially for non-RANSAC-based algorithms.

\subsection{Real results}
Here, we provide additional results with real images. First, we show the result on the other image from~\cite{Zheng_ICCV2013} where the object is planar in fig.~\ref{fig:real_opencv}.

\begin{figure}[!t]
\centering
\hspace{-0.5em}\includegraphics[width=3.4in]{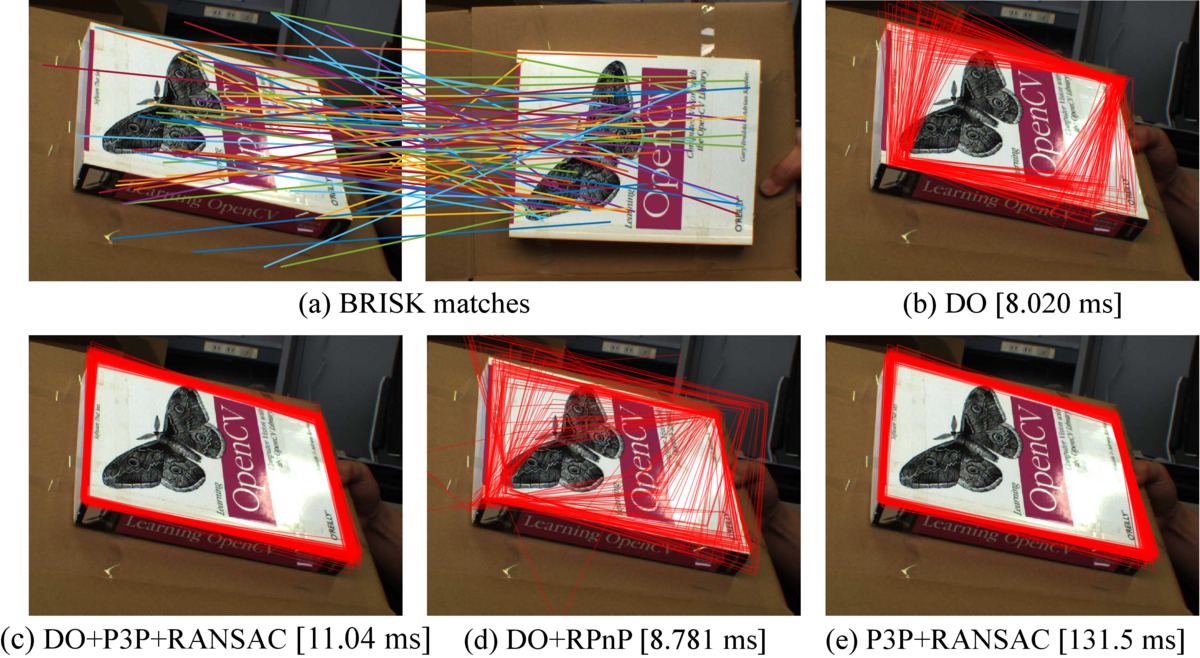}%
\caption[]{Results for camera pose estimation on real image. (a) Feature matches. Left and right images contain 2D points and projection of 3D points,~\emph{resp}. (b-d) Projected shape with average time over 100 trials.}
\label{fig:real_opencv}
\end{figure}

Next, we use the dataset from University of Oxford's Visual Geometry Group\footnote{http://www.robots.ox.ac.uk/$\sim$vgg/data/data-mview.html}~\cite{werner2002new,werner2002model} for real image experiments. The dataset contains several images with 2D points and their reconstructed 3D points, camera matrix of each image, and also reconstructed lines. Specifically, we used 'Corridor' (11 images), 'Merton College I' (3 images), 'Merton College II' (3 images), 'Merton College III' (3 images), 'University Library' (3 images), and 'Wadham College' (5 images). To perform this experiment, we selected one image from each group as a reference image (for extracting feature for the 3D point), and selected another image as the target for estimating camera pose. This results in a total of 182 pairs of images. Similar to real image experiment in the main paper, we performed 100 trials for each pair, where in each trial we transformed the 3D points with a random rotation.

Fig.~\ref{fig:oxford_result} shows the result in terms of accumulated rotation error and computation time. P3P+RANSAC obtained the best rotation accuracy since RANSAC can reliably selected the set of inliers, but it could require a long computation time to achieve this result (note that P3P+RANSAC is the fastest RANSAC-based approaches). On the other hand, DO-based approaches roughly require the same amount of time for all cases. However, DO and DO+RPnP did not obtain good rotation results. We believe this is due to the fact that the training data of DO may not well-represent the real distribution of outliers in the real-world. However, DO+P3P+RANSAC can still obtain good rotation results as the RANSAC in postprocessing step could further select a subset of good matches. To sum up, we see P3P+RANSAC obtained best rotation but it could take a long and varying computation to do so, while DO+P3P+RANSAC required roughly the same amount of time but is a little less reliable than P3P+RANSAC. This poses a trade-off between the two approaches. Future research should try to improve further on DO+P3P+RANSAC in order to improve its accuracy while maintaining low computation time.

We further provide visualization in Figs.~\ref{fig:univlib} to~\ref{fig:wadham}\footnote{We previously did not include 'Model House' as it contains pairs that do not have intersecting view of the object.}. The ground truth images were generated by projecting the 3D reconstruction of lines to the images. The results in subfigures c to f of Figs.~\ref{fig:univlib} to~\ref{fig:wadham} show the projection using the estimated camera parameters over 20 trials. We can see that most of the time DO could roughly estimate the projection, then the postprocessed results of DO+P3P+RANSAC and DO+RPnP tend to be more stable. It should be noted that even when DO did not return good result, e.g., Figs.~\ref{fig:merton3} and~\ref{fig:corridor1}, DO+P3P+RANSAC may be able to obtain a much better postprocessed results since DO may return a sufficient number of inliers for RANSAC to obtain good parameter estimate. Meanwhile, DO+RPnP cannot fix DO's result because it did not further perform sampling, and some outliers may be considered inliers, leaadting to bad estimates. There are also some cases, e.g., Fig.~\ref{fig:modelHouse2}, where all DO-based approaches failed. We believe this is due to (i) a significant large ratio of outliers, and (ii) the distribution of outliers was not well-represented in the training set. On the other hand, P3P+RANSAC obtained good estimates in all cases. In terms of time, most results of DO-based approaches terminated within 20ms, while RANSAC may use up to 200ms. 

\begin{figure}[!t]
\centering
\includegraphics[width=3.4in]{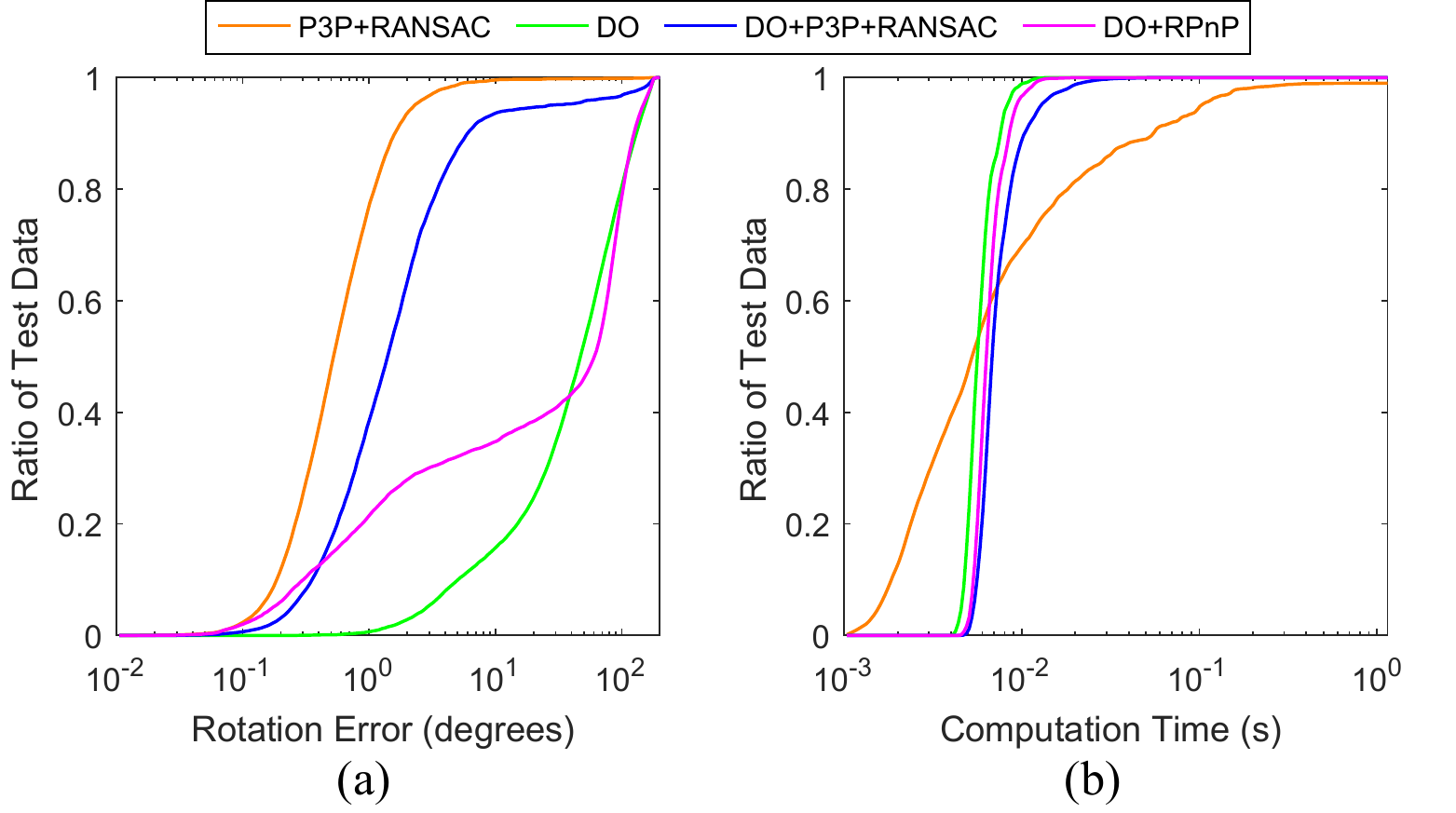}%
\caption[]{Results on the Oxford dataset in terms of (a) rotation error and (b) computation time.}
\label{fig:oxford_result}
\end{figure}

\begin{figure*}[!t]
\centering
\hspace{-0.5em}\includegraphics[width=6in]{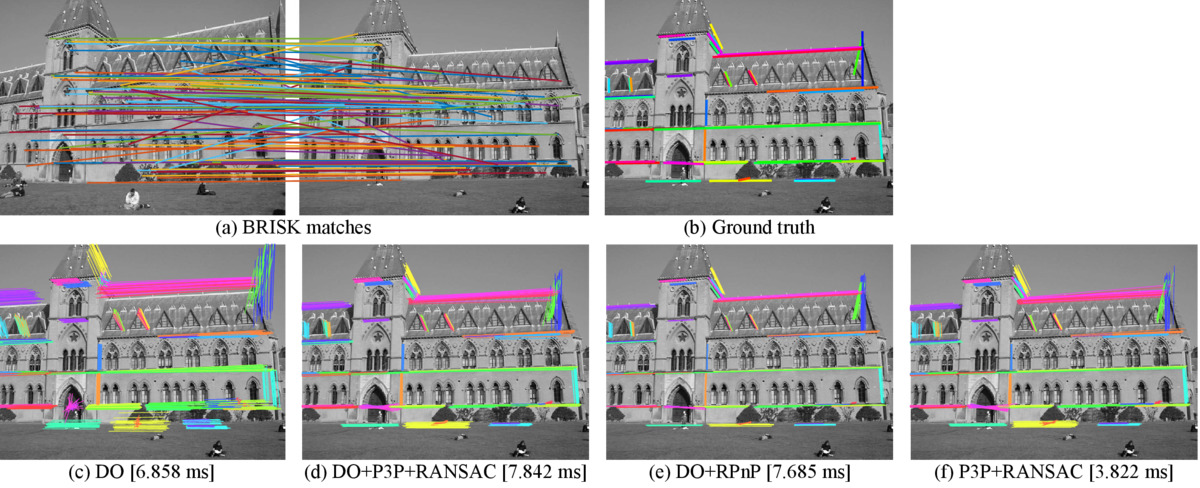}%
\caption[]{Results for camera pose estimation on real image. (a) Feature matches. Left and right images contain 2D points and projection of 3D points,~\emph{resp}. (b) Ground truth. (c-e) Projected shape with average time over 20 trials.}
\label{fig:univlib}
\end{figure*}

\begin{figure*}[!t]
\centering
\hspace{-0.5em}\includegraphics[width=6in]{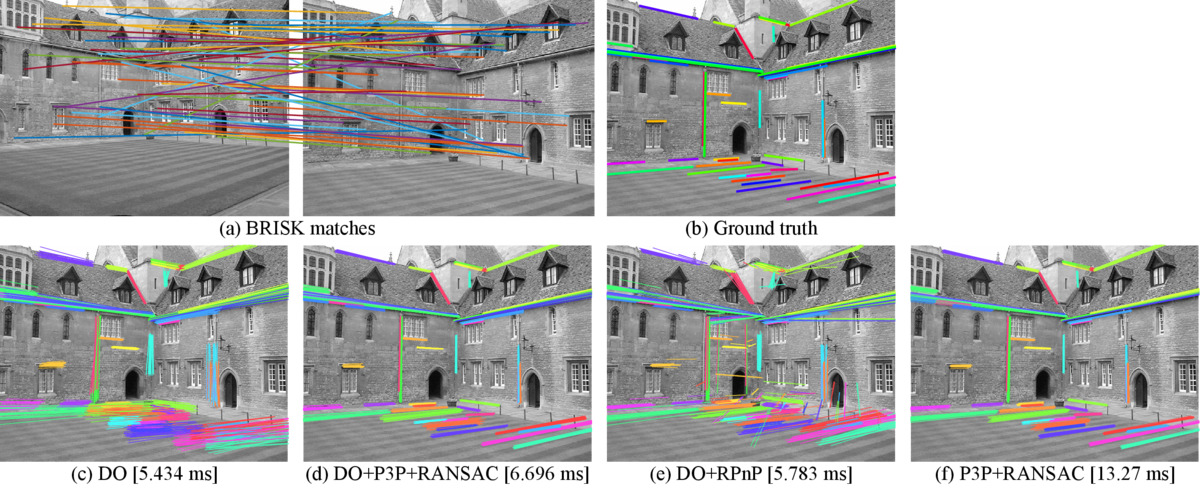}%
\caption[]{Results for camera pose estimation on real image. (a) Feature matches. Left and right images contain 2D points and projection of 3D points,~\emph{resp}. (b) Ground truth. (c-e) Projected shape with average time over 20 trials.}
\label{fig:merton1}
\end{figure*}

\begin{figure*}[!t]
\centering
\hspace{-0.5em}\includegraphics[width=6in]{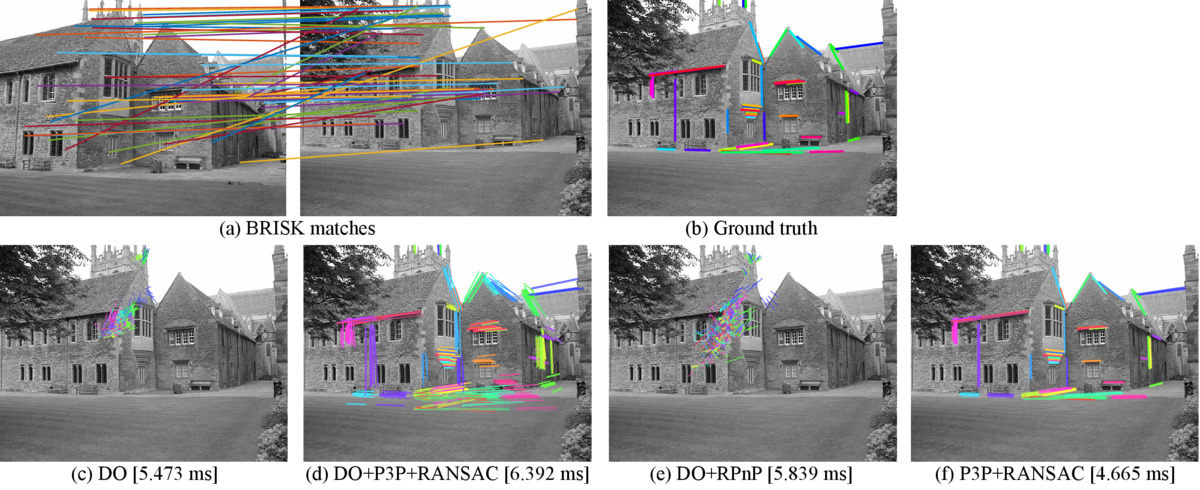}%
\caption[]{Results for camera pose estimation on real image. (a) Feature matches. Left and right images contain 2D points and projection of 3D points,~\emph{resp}. (b) Ground truth. (c-e) Projected shape with average time over 20 trials.}
\label{fig:merton3}
\end{figure*}

\begin{figure*}[!t]
\centering
\hspace{-0.5em}\includegraphics[width=6in]{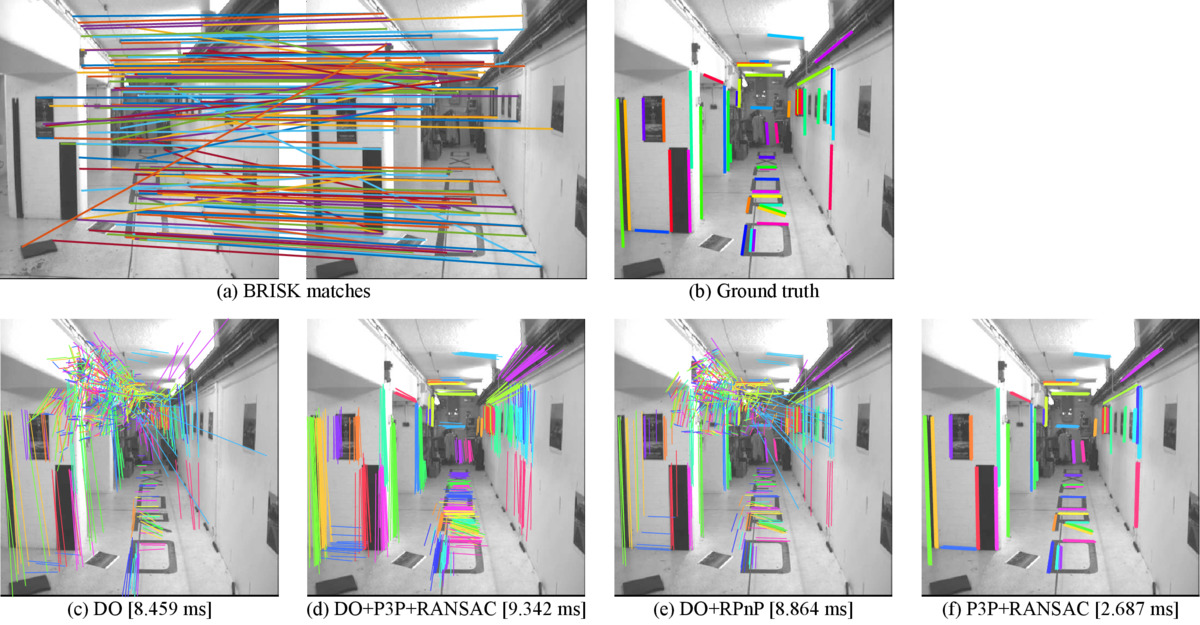}%
\caption[]{Results for camera pose estimation on real image. (a) Feature matches. Left and right images contain 2D points and projection of 3D points,~\emph{resp}. (b) Ground truth. (c-e) Projected shape with average time over 20 trials.}
\label{fig:corridor1}
\end{figure*}

\begin{figure*}[!t]
\centering
\hspace{-0.5em}\includegraphics[width=6in]{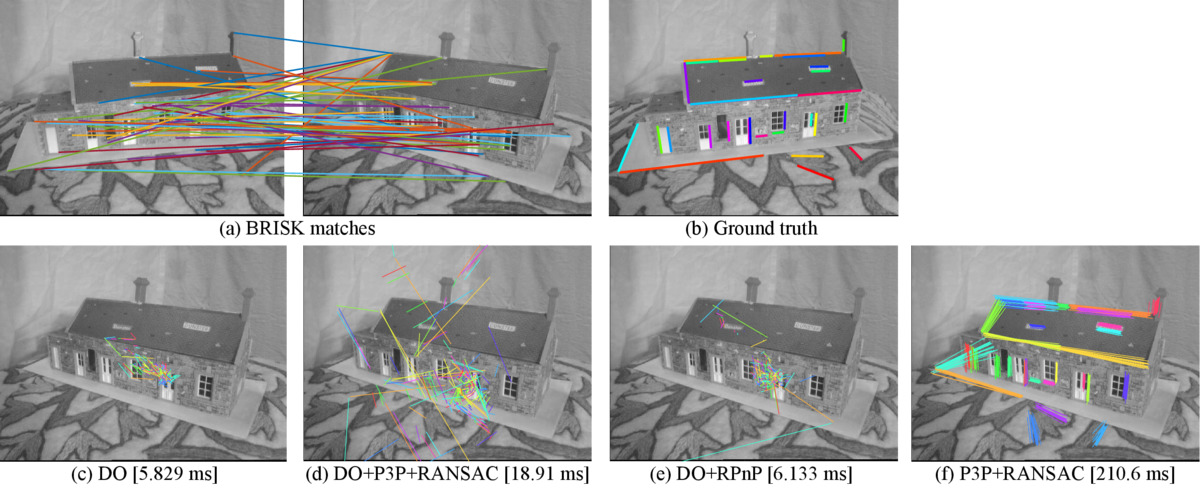}%
\caption[]{Results for camera pose estimation on real image. (a) Feature matches. Left and right images contain 2D points and projection of 3D points,~\emph{resp}. (b) Ground truth. (c-e) Projected shape with average time over 20 trials.}
\label{fig:modelHouse2}
\end{figure*}

\begin{figure*}[!t]
\centering
\hspace{-0.5em}\includegraphics[width=6in]{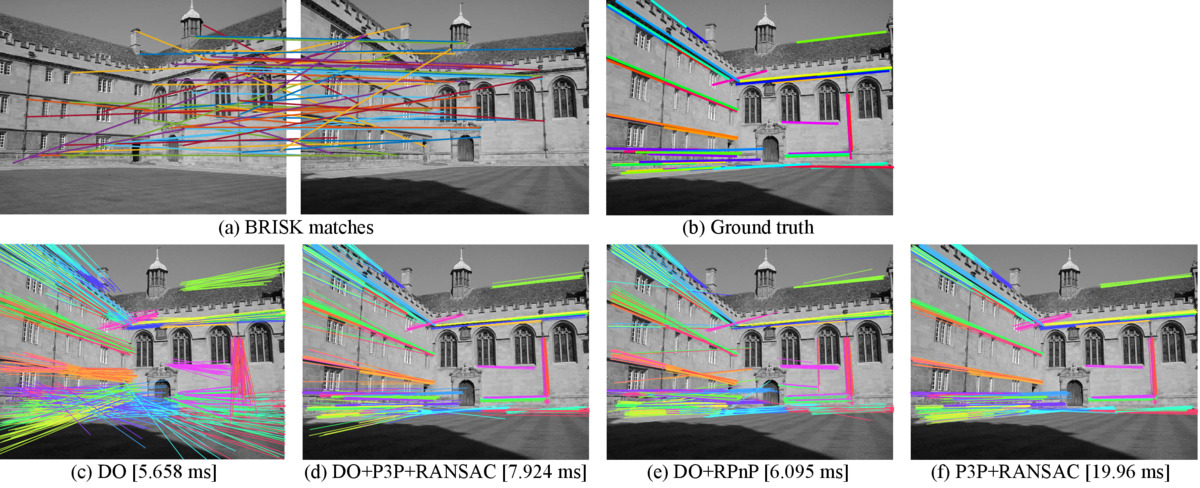}%
\caption[]{Results for camera pose estimation on real image. (a) Feature matches. Left and right images contain 2D points and projection of 3D points,~\emph{resp}. (b) Ground truth. (c-e) Projected shape with average time over 20 trials.}
\label{fig:wadham}
\end{figure*}

\section{Additional Results for Image Denoising}\label{sec:resDenoise}
Here, we provide additional visuals for image denoising experiments in Figs.~\ref{fig:denoiseAdd1} to~\ref{fig:denoiseAdd8}. PSNR is shown on the top-right of each image.

\begin{figure*}[!t]
\centering
\hspace{-0.5em}\includegraphics[width=6in]{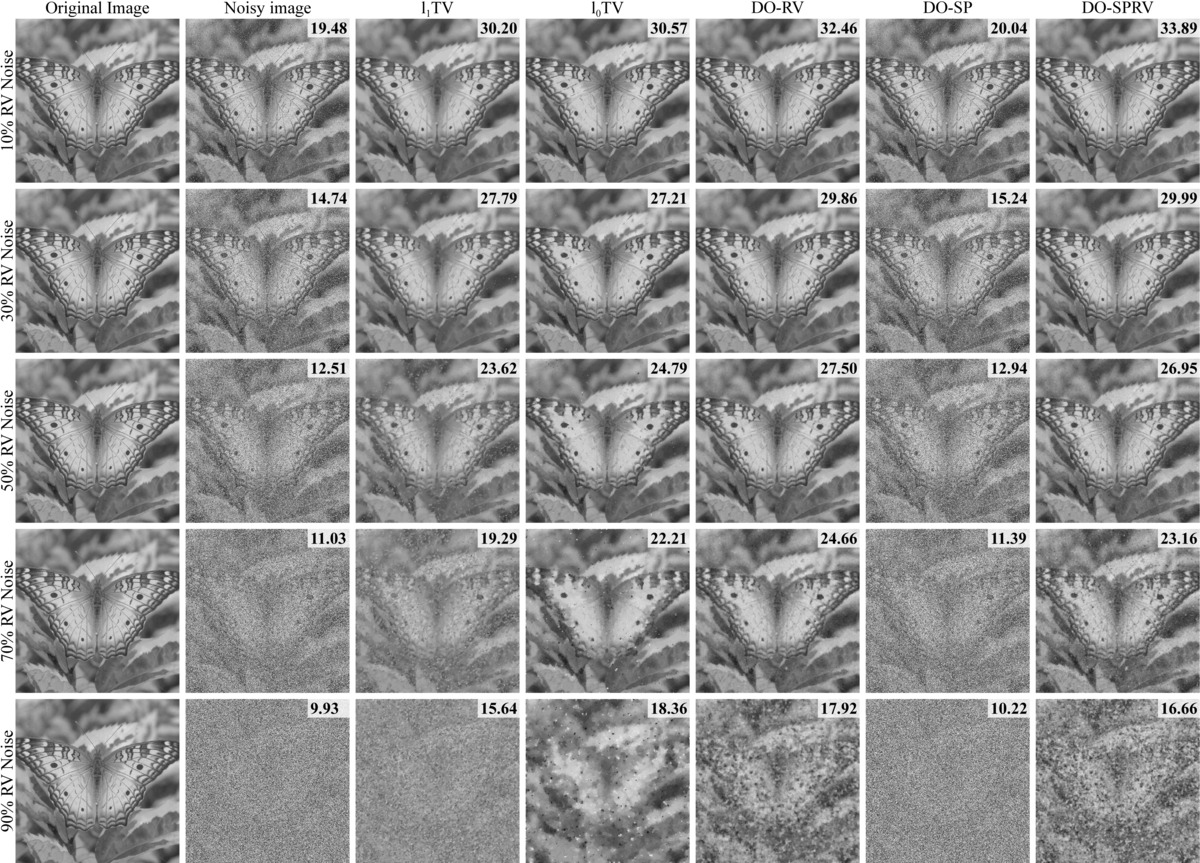}%
\caption[]{Additional results for denoising with RV noise. (Best viewed electronically)}
\label{fig:denoiseAdd1}
\end{figure*}

\begin{figure*}[!t]
\centering
\hspace{-0.5em}\includegraphics[width=6in]{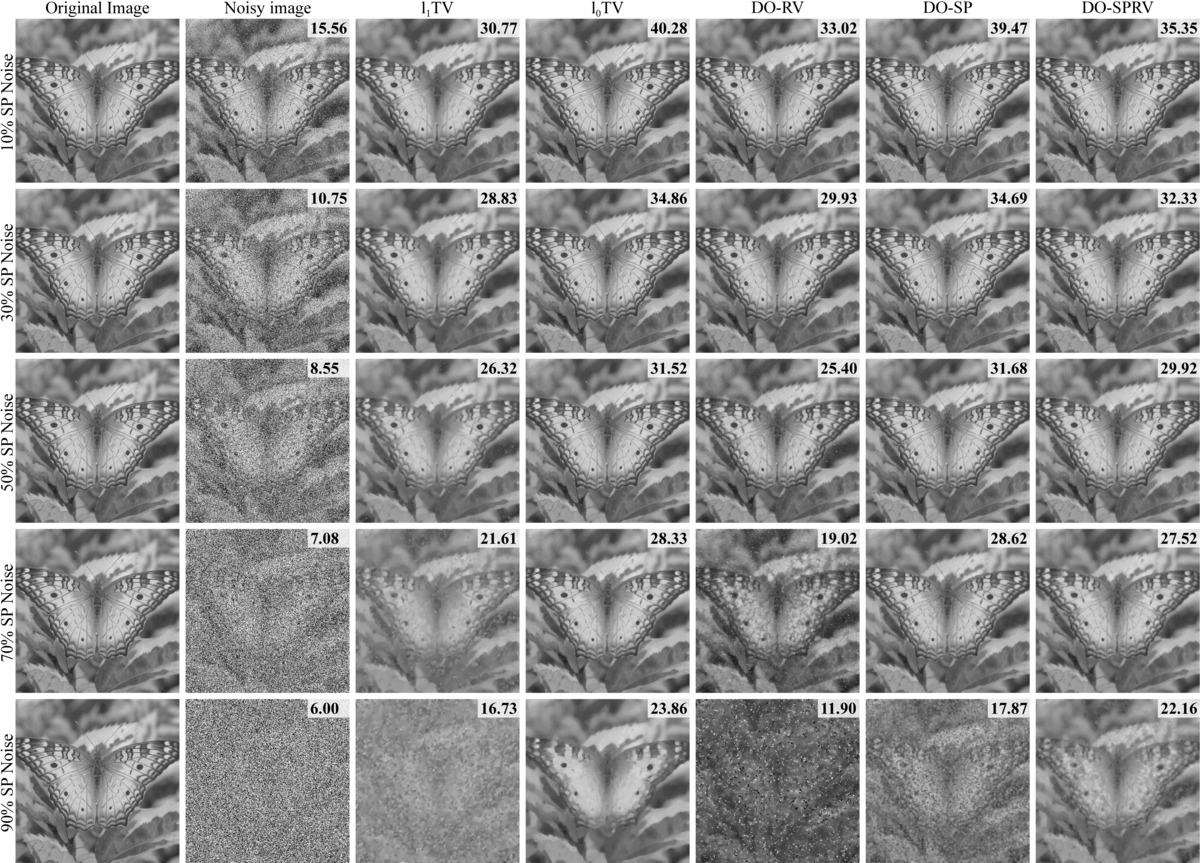}%
\caption[]{Additional results for denoising with SP noise. (Best viewed electronically)}
\label{fig:denoiseAdd2}
\end{figure*}

\begin{figure*}[!t]
\centering
\hspace{-0.5em}\includegraphics[width=6in]{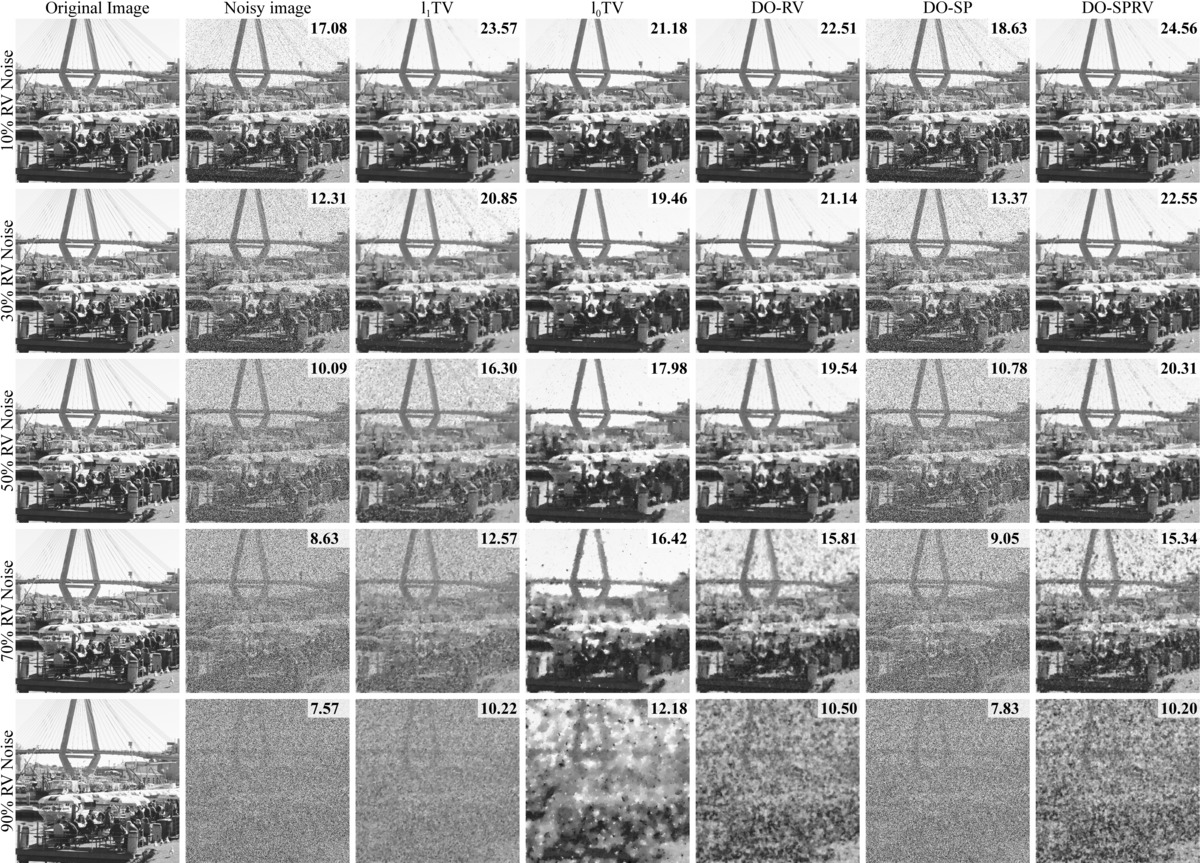}%
\caption[]{Additional results for denoising with RV noise. (Best viewed electronically)}
\label{fig:denoiseAdd3}
\end{figure*}

\begin{figure*}[!t]
\centering
\hspace{-0.5em}\includegraphics[width=6in]{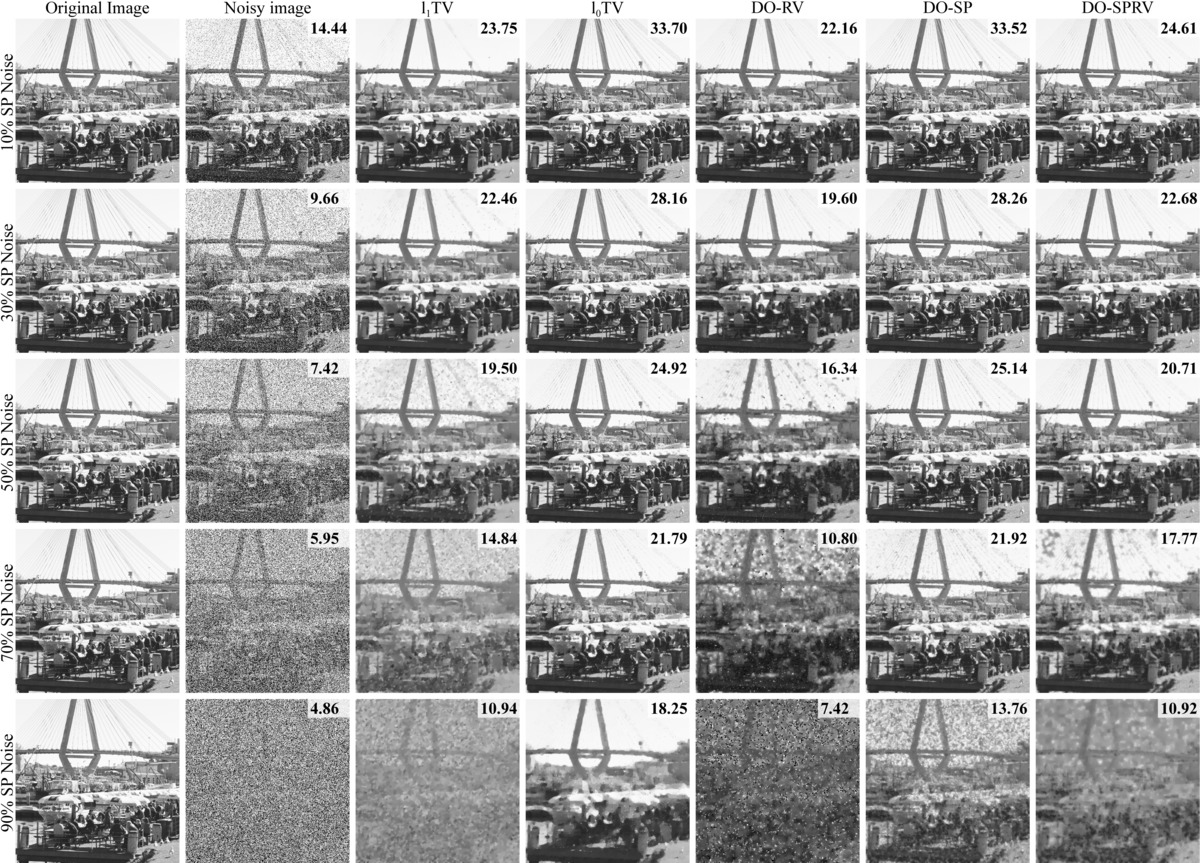}%
\caption[]{Additional results for denoising with SP noise. (Best viewed electronically)}
\label{fig:denoiseAdd4}
\end{figure*}

\begin{figure*}[!t]
\centering
\hspace{-0.5em}\includegraphics[width=6in]{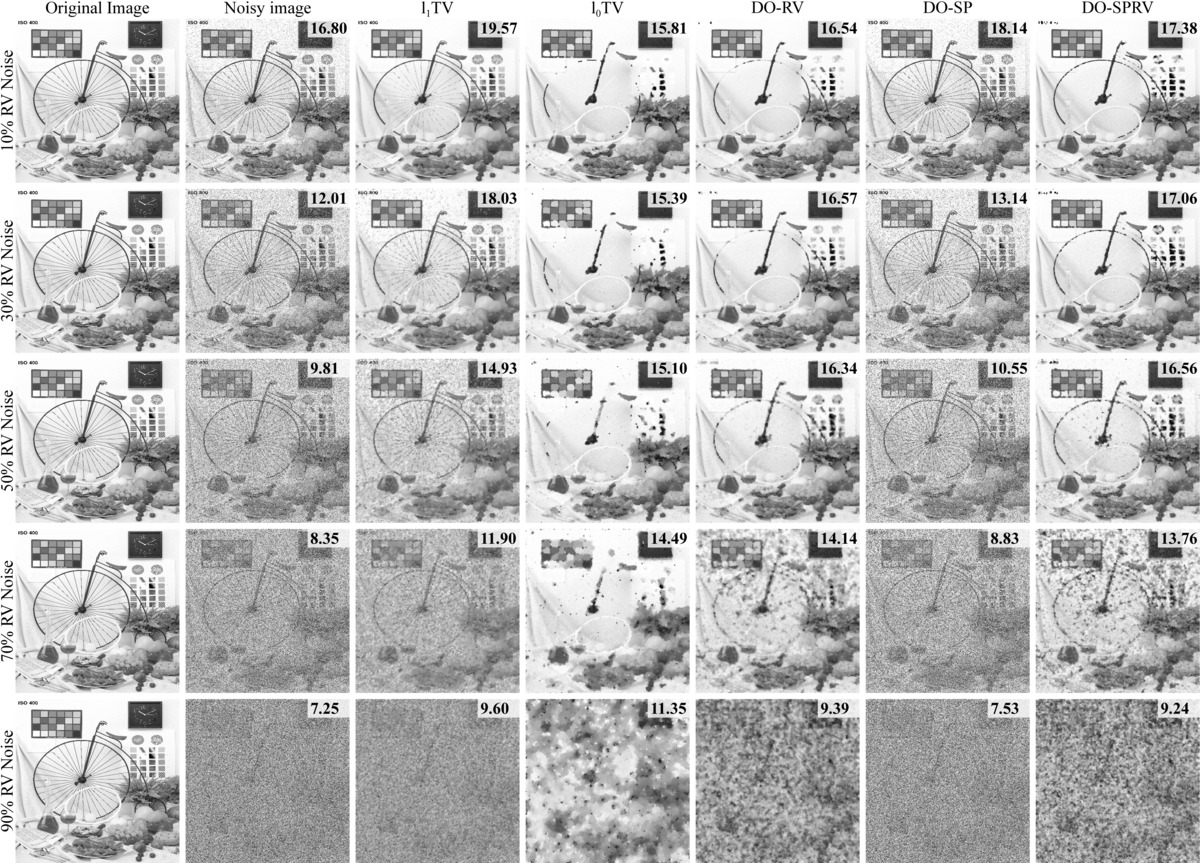}%
\caption[]{Additional results for denoising with RV noise. (Best viewed electronically)}
\label{fig:denoiseAdd5}
\end{figure*}

\begin{figure*}[!t]
\centering
\hspace{-0.5em}\includegraphics[width=6in]{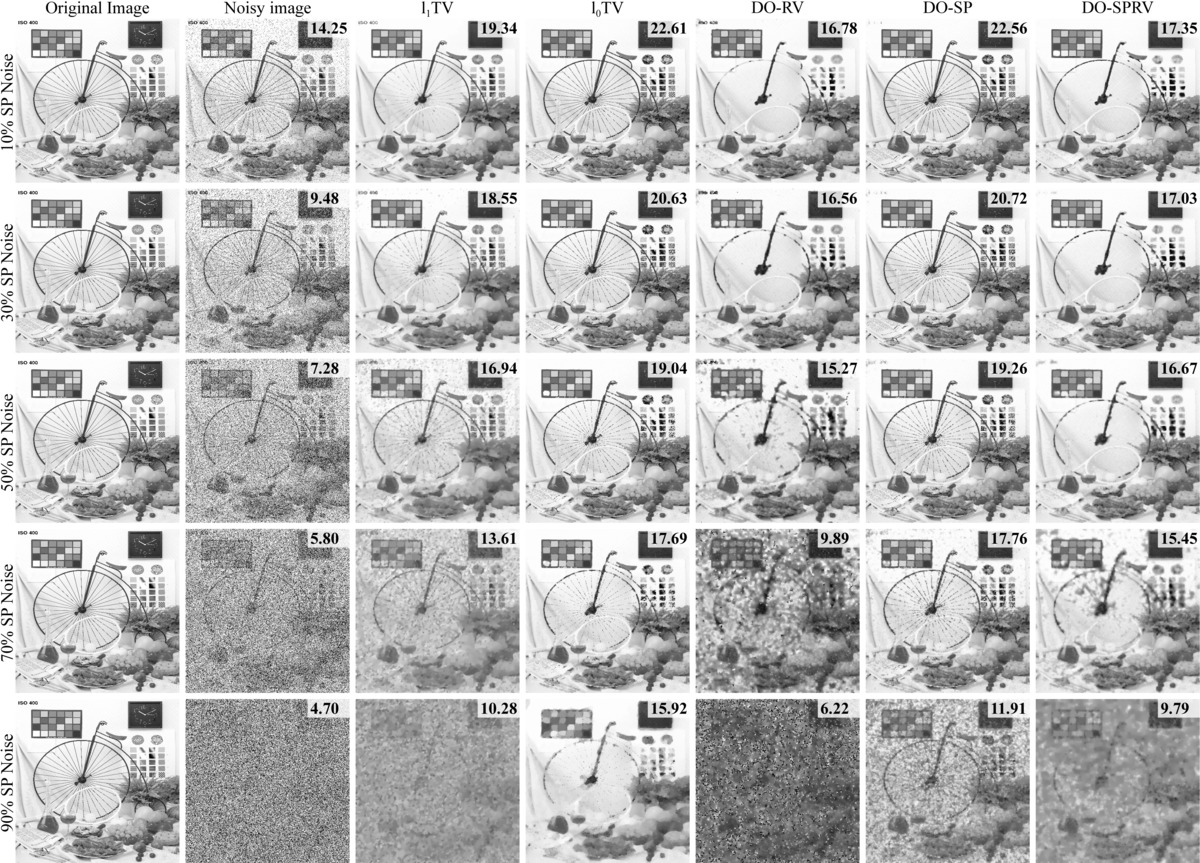}%
\caption[]{Additional results for denoising with SP noise. (Best viewed electronically)}
\label{fig:denoiseAdd6}
\end{figure*}

\begin{figure*}[!t]
\centering
\hspace{-0.5em}\includegraphics[width=6in]{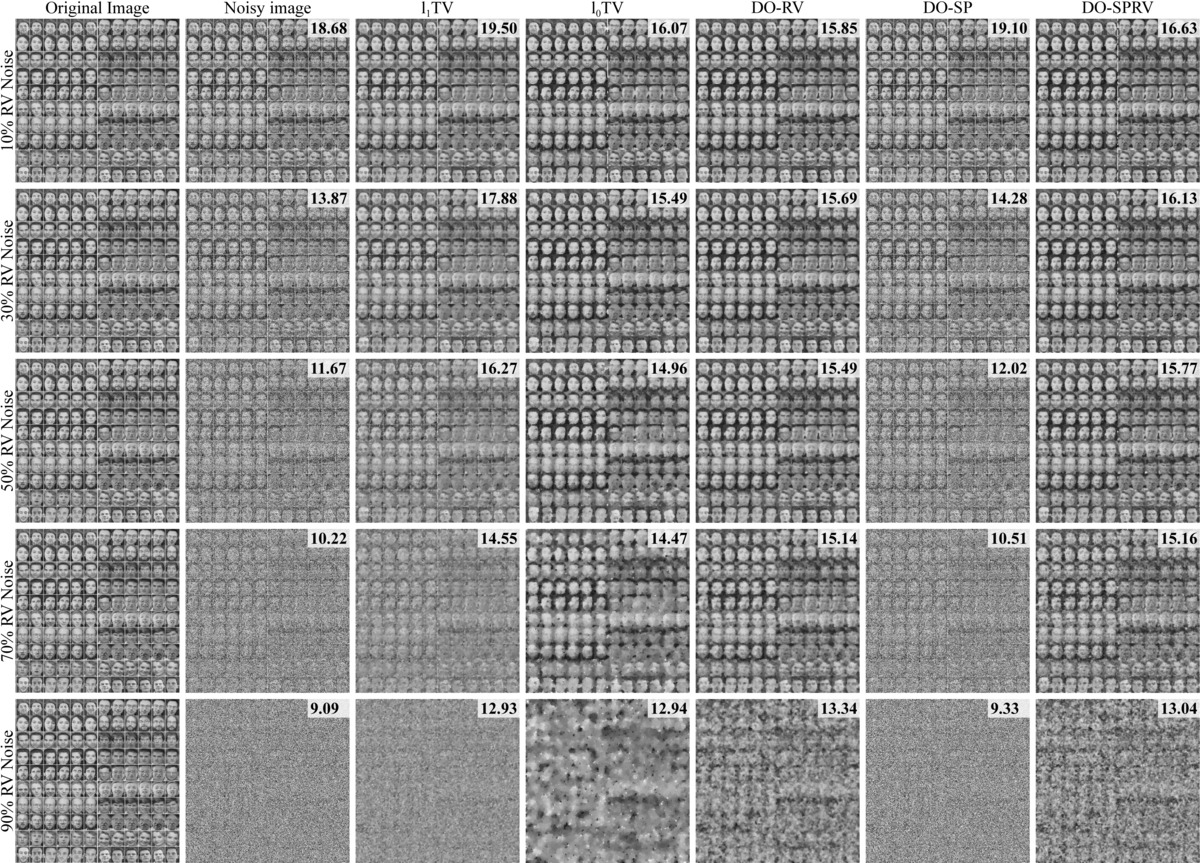}%
\caption[]{Additional results for denoising with RV noise. (Best viewed electronically)}
\label{fig:denoiseAdd7}
\end{figure*}

\begin{figure*}[!t]
\centering
\hspace{-0.5em}\includegraphics[width=6in]{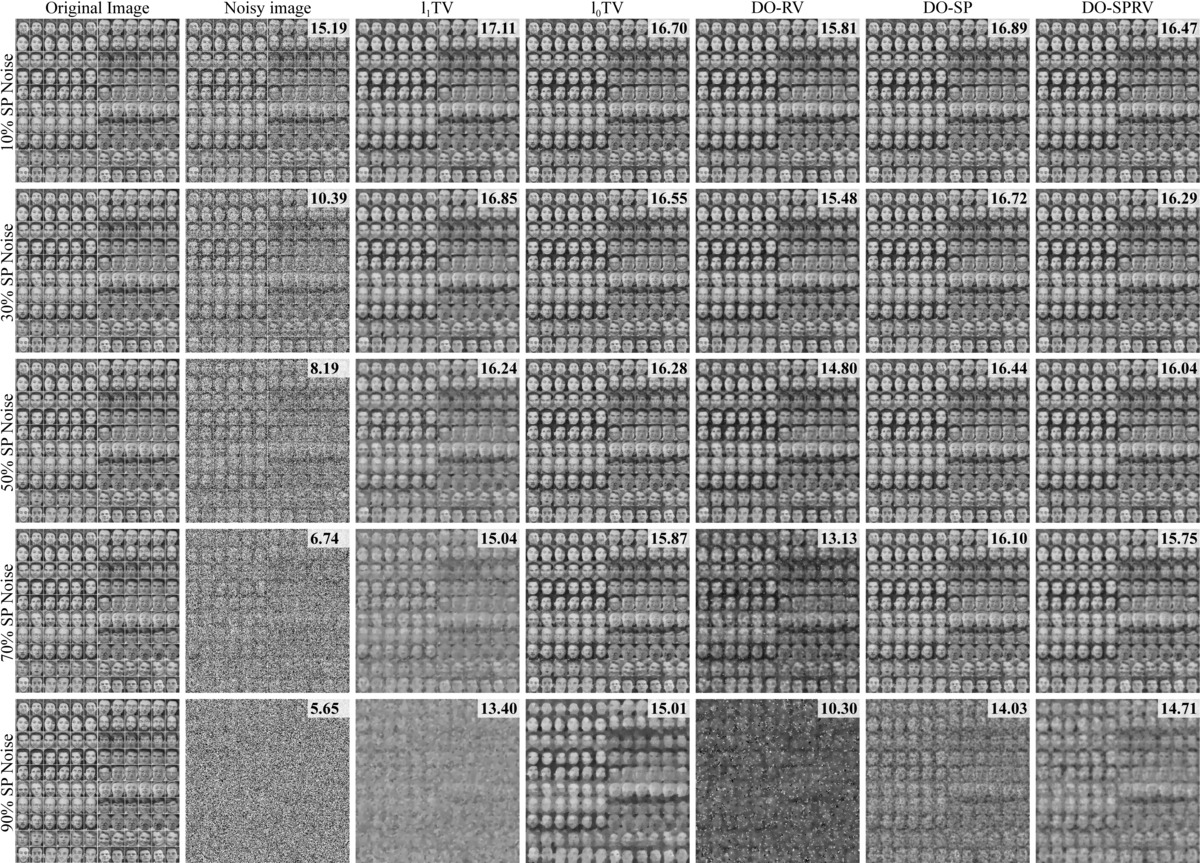}%
\caption[]{Additional results for denoising with SP noise. (Best viewed electronically)}
\label{fig:denoiseAdd8}
\end{figure*}
%
%% if you will not have a photo at all:
%\begin{IEEEbiographynophoto}{John Doe}
%Biography text here.
%\end{IEEEbiographynophoto}
%
%% insert where needed to balance the two columns on the last page with
%% biographies
%%\newpage
%
%\begin{IEEEbiographynophoto}{Jane Doe}
%Biography text here.
%\end{IEEEbiographynophoto}
%
%% You can push biographies down or up by placing
%% a \vfill before or after them. The appropriate
%% use of \vfill depends on what kind of text is
%% on the last page and whether or not the columns
%% are being equalized.
%
%%\vfill
%
%% Can be used to pull up biographies so that the bottom of the last one
%% is flush with the other column.
%%\enlargethispage{-5in}

% that's all folks
\end{document}